\documentclass[11pt]{article}

\usepackage[margin=1in]{geometry}
\usepackage{natbib}
\usepackage{authblk}
\usepackage[T1]{fontenc}    % use 8-bit T1 fonts
\usepackage{hyperref}       % hyperlinks
\usepackage{url}            % simple URL typesetting
\usepackage{booktabs}       % professional-quality tables
\usepackage{amsfonts}       % blackboard math symbols
\usepackage{nicefrac}       % compact symbols for 1/2, etc.
\usepackage{microtype}      % microtypography
\usepackage{xcolor}         % colors
\usepackage{amsmath}
\usepackage{amssymb,amsthm}
\usepackage{mathtools}
\usepackage{algorithm}
\usepackage{algpseudocode}
\makeatletter
\algblockdefx[Sub]{Subroutine}{EndSubroutine}[2]%
  {\textbf{Subroutine} #1(#2)}
  {\textbf{End Subroutine}}   
\makeatother
\usepackage[font=small]{subcaption}
\usepackage{graphicx}

\usepackage{amsmath,amsfonts,bm}

\def\eqref#1{equation~\ref{#1}}
\def\1{\bm{1}}

\DeclareMathAlphabet{\mathsfit}{\encodingdefault}{\sfdefault}{m}{sl}
\SetMathAlphabet{\mathsfit}{bold}{\encodingdefault}{\sfdefault}{bx}{n}

\DeclareMathOperator*{\argmax}{arg\,max}

\usepackage{authblk}
\usepackage{comment}
\usepackage{thmtools}
\usepackage{longtable}
\usepackage{threeparttable}
\usepackage{enumitem}
\usepackage[T1]{fontenc}
\usepackage{tikz}
\usepackage{subcaption}

\definecolor{PhiShade}{HTML}{7C3AED}
\definecolor{OTShade}{HTML}{D97706}

\theoremstyle{plain}
\newtheorem{theorem}{Theorem}[section]
\newtheorem{proposition}[theorem]{Proposition}
\newtheorem{lemma}[theorem]{Lemma}

\theoremstyle{definition}
\newtheorem{definition}[theorem]{Definition}

\theoremstyle{remark}
\newtheorem{remark}[theorem]{Remark}

\newcommand{\StartBlock}[1]{%
  \tikz[remember picture,overlay] \coordinate (start#1) at (0, 1.9ex);%
}
\newcommand{\EndBlock}[2]{%
  \tikz[remember picture,overlay] \coordinate (end#1) at (0, -0.6ex);%
  \begin{tikzpicture}[remember picture, overlay]
    \fill[#2, opacity=0.15] 
      ([xshift=-2em]start#1) % Left edge
      rectangle 
      ([xshift=\linewidth-0em]start#1 |- end#1); % Right edge (Locked to Start's X!)
  \end{tikzpicture}%
}

\algdef{SE}[SUBALG]{Indent}{EndIndent}{}{}
\algtext*{Indent}
\algtext*{EndIndent}

\graphicspath{{figures/}}

\makeatletter
\renewcommand{\eqref}[1]{\hyperref[#1]{(\ref*{#1})}}
\makeatother

\title{Adversarial Training for Deep Hedging in Nonstationary Markets}

\author{%
  Philipp J.~Schneider\textsuperscript{$\star$,1}\quad
  Lukas Looser\textsuperscript{$\star$,1}\quad
  Antoine Garin\textsuperscript{$\dagger$,1}\quad
  Shuhan Liu\textsuperscript{$\dagger$,2}\quad
  Daniel Kuhn\textsuperscript{1}%
  \par\vspace{1em}
  \textsuperscript{1}Risk Analytics and Optimization Chair, EPFL, Lausanne, Switzerland\\
  \textsuperscript{2}University of Waterloo, Waterloo, Canada%
}

\allowdisplaybreaks

\begin{document}
\maketitle
\addtocontents{toc}{\protect\setcounter{tocdepth}{-1}}

\renewcommand{\thefootnote}{\fnsymbol{footnote}}\footnotetext[1]{These authors contributed equally to this work.}\footnotetext[2]{These authors contributed equally to this work.}\renewcommand{\thefootnote}{}\footnotetext[3]{Corresponding author: \texttt{philipp.schneider@epfl.ch}}\renewcommand{\thefootnote}{\arabic{footnote}}

\begin{abstract}
Deep hedging learns trading policies from historical or simulated market trajectories, yet under nonstationarity these training paths may not represent future market conditions. We propose \textsc{WRAP} (Wasserstein--Reweighting Adversarial Perturbation), a drift-aware adversarial training framework derived from a two-budget distributionally robust optimization (DRO) formulation. The formulation is anchored to a weighted empirical reference distribution whose fixed baseline weights are chosen to balance sampling uncertainty against temporal drift. Around this reference distribution, the ambiguity set addresses two complementary forms of distributional misspecification by allowing an adversary to reweight the observed trajectories subject to a $\phi$-divergence constraint and perturb their paths subject to an optimal-transport (OT) constraint. We derive a joint first-order expansion in which the leading-order increase over the nominal expected loss decomposes into a reweighting contribution determined by the dispersion of hedging losses across trajectories and a transport contribution determined by the sensitivity of the loss to path perturbations. This expansion yields an explicit finite-dimensional adversarial attack that replaces the distributional inner supremum with a tractable first-order approximation. Across stationary and nonstationary Heston dynamics and a generalized affine diffusion (GAD), the experiments show complementary benefits from reweighting and transport, with joint adversarial training providing the largest gains under nonstationarity.
\end{abstract}

\section{Introduction}
\label{sec:introduction}

A hedging strategy dynamically trades financial assets to reduce the risk associated with a contingent claim. Deep hedging parameterizes such strategies with neural networks and learns the resulting sequence of trading decisions from simulated or historical market trajectories \citep{buehler2019deep}. The out-of-sample performance of the learned policy therefore depends critically on whether the training distribution represents the market conditions encountered over the hedging horizon. Unlike a static prediction, a hedge consists of a sequence of state-dependent trading decisions, so changes in market dynamics may affect positions throughout the life of the claim \citep{cont2006model,glasserman2014robust}. A policy may consequently achieve a low in-sample hedging loss yet remain vulnerable to market dynamics poorly represented in the training data.

Distributionally robust optimization (DRO) and adversarial training protect a policy against perturbations around a prespecified empirical distribution. Maintaining a representative training distribution is particularly difficult under nonstationarity, where returns, volatility, and dependence structures evolve over time \citep{stuaricua2005nonstationarities,ang2012regime,capponi2025nonstationarity}. A larger historical sample reduces sampling uncertainty but also includes older observations that may be less representative of the next hedging period, whereas emphasizing recent observations improves temporal relevance at the cost of greater sampling variability. Constructing the reference distribution entails a fundamental trade-off between \emph{statistical precision} and \emph{temporal relevance}. Robust deep hedging under nonstationarity must therefore address two questions: \emph{how should the reference distribution be constructed from temporally heterogeneous observations}, and \emph{how should the hedge be protected against residual misspecification around that distribution}?

We distinguish three channels through which the training distribution may fail to represent the next hedging period. First, historical trajectories differ in their temporal relevance, as older observations may be less informative about future market dynamics. Second, the reference distribution may assign inappropriate probability mass to the sampled trajectories or the regimes they represent. Third, future trajectories may differ from those observed in the training sample. We address these channels through \emph{non-uniform baseline weights}, \emph{adversarial probability reweighting}, and \emph{optimal-transport perturbations}, respectively. The baseline weights are selected independently of the hedging losses and determine the drift-aware center of the ambiguity set. Adversarial reweighting depends on the current policy and reallocates probability mass toward trajectories on which that policy incurs high losses, whereas optimal-transport perturbations move the trajectories themselves in locally adverse directions. These three components constitute our main contributions, summarized as follows.

\noindent\textbf{Drift-aware reference distributions for deep hedging.}
We provide a statistical foundation for constructing reference distributions from nonstationary market data. Our finite-sample guarantees separate sampling uncertainty, governed by the effective sample size induced by the baseline weights, from temporal misspecification, governed by a weighted drift index that captures the influence assigned to older observations. This decomposition formalizes the trade-off between statistical precision and temporal relevance and provides a basis for calibrating the Wasserstein radius to both weight concentration and temporal drift.

\noindent\textbf{First-order expansion of the robust objective.}
We derive a joint first-order expansion of the worst-case hedging loss under probability and trajectory misspecification. The expansion shows that the $\phi$-divergence component penalizes the dispersion of hedging losses across trajectories (\emph{variance}), whereas the optimal-transport component penalizes the sensitivity of the hedging loss to perturbations of market paths (\emph{variation}). We further show that a structured empirical attack attains the same first-order value as the full two-budget robust problem.

\noindent\textbf{\textsc{WRAP}: two-budget distributional adversarial training.}
Building on this expansion, we introduce the Wasserstein--Reweighting Adversarial Perturbation (\textsc{WRAP}), an implementable training method for the two-budget robust problem. The two budgets govern complementary deviations from the drift-aware reference distribution: adversarial probability reweighting and trajectory perturbations. The latter are defined through a time- and component-weighted ground metric that permits separate scaling across trading dates and market variables. We derive a one-step attack from the first-order solution and study a multistep variant that iteratively refines the adversarial distribution.

\noindent Existing approaches address individual components of this problem. Closest to our setting, \citet{he2025distributional} develop Wasserstein attacks for deep hedging, \citet{keehan2025don} construct drift-aware Wasserstein sets around weighted empirical laws, and \citet{namkoong2016stochastic} use \(\phi\)-divergence DRO to adversarially reweight fixed empirical observations.

Several approaches combine likelihood and optimal-transport perturbations, including holistic robustness, joint OT--divergence constructions, and OT-regularized divergences \citep{bennouna2022holistic,blanchet2023unifying,liu2023smoothed,birrell2025optimal}. We derive a joint first-order expansion for a sequential, path-dependent objective and construct an explicit transport-and-reweighting attack that is first-order accurate for the full DRO problem. Appendix~\ref{app:extended-related-work} provides the extended literature discussion.

\section{Deep Hedging Under Nonstationarity}
\label{sec:short-setup}
We consider a discrete-time financial market with trading dates \(t=0,\dots,T\). The market contains \(r\) tradable assets with price process \(S=(S_t)_{t=0}^T\), where \(S_t\in\mathbb R^r\). The observed market process \(X=(X_t)_{t=0}^T\) is a random element of an admissible trajectory space $\Xi\subseteq\mathbb R^{d(T+1)}$. Here \(X_t\in\mathbb R^d\) contains the tradable asset prices together with any exogenous market features used for hedging, such as volatility estimates or macroeconomic indicators; in particular, \(d\ge r\). We take \(S_t\) to be the first \(r\) components of \(X_t\) and write \(X_{0:t}\coloneqq(X_0,\dots,X_t)\). Given \(N\) observed or simulated market trajectories, we denote the \(n\)-th trajectory by \(X_n=(X_{n,0},\dots,X_{n,T})\in\Xi\).

The agent aims to hedge a contingent claim with terminal payoff function \(\Phi:\Xi\to\mathbb R\) at maturity. Let \(\theta=(\theta_0,\dots,\theta_{T-1})\in\Theta\subseteq\mathbb R^{kT}\) parametrize \(T\) neural networks. At time \(t=0,\dots,T-1\), the network parametrized by \(\theta_t\) maps \(X_{0:t}\) to holdings \(\delta_t^\theta(X_{0:t})\in\mathbb R^r\) and thus depends on $\theta$ only through $\theta_t$. Since each decision depends only on the observed history \(X_{0:t}\), the strategy is \emph{non-anticipative} and fully characterized by \(\theta\in\Theta\). Assuming a self-financing strategy and a fixed initial premium $p_0\in\mathbb{R}$ for selling the derivative, the terminal \emph{profit and loss} (PnL) is given by
\begin{equation}
\label{eq:pnl}
\mathrm{PnL}(\theta,X)
\;=\;
p_0 +\sum_{t=0}^{T-1}\delta_t^\theta(X_{0:t})^\top (S_{t+1}-S_t) - \Phi(X).
\end{equation}
Following \citet{buehler2019deep}, the risk of the terminal profit and loss $Z$ is assessed through an Optimized Certainty Equivalent (OCE) risk measure \citep{ben1986expected,ben2007old},
\begin{equation}
\label{eq:oce}
\rho(Z)
\;=\;
\inf_{m\in\mathbb{R}}
\left\{
m+\mathbb{E}\big[\ell(-Z-m)\big]
\right\},
\end{equation}
where \(\ell:\mathbb R\to\mathbb R\) is a convex, nondecreasing loss function satisfying $\ell(0)=0$ and 
$\ell(u)\ge u$ for all $u\in\mathbb{R}$. To train a hedging policy, we optimize jointly over the hedging network parameters and the OCE offset by setting $\vartheta=(\theta,m)\in\Theta\times\mathbb{R}$ and defining the pathwise deep-hedging loss
\begin{equation}
\label{eq:dh_loss}
\ell_{\mathrm{DH}}(\vartheta; X)
\;\coloneqq\;
m+\ell\!\left(-\mathrm{PnL}(\theta, X)-m\right).
\end{equation}
The numerical experiments consider Conditional Value at Risk (CVaR) and entropic risk (\textit{cf.}~Appendix~\ref{app:oce-risk-measures}). Let \(\mathcal P(\Xi)\) denote the set of Borel probability measures on \(\Xi\). For a reference measure \(\mu\in\mathcal P(\Xi)\), we train a nominal deep-hedging model by optimizing
\begin{equation}
\label{eq:deep-hedging-problem}
\min_{\vartheta\in\Theta\times\mathbb R}
\mathbb E_{X\sim\mu}
\big[\ell_{\mathrm{DH}}(\vartheta;X)\big].
\end{equation} 
Further details on the market setup and regularity are provided in Appendix~\ref{app:additional-market-regularity-details}.

\subsection{Trajectory Space and Wasserstein Robust Value}
The training problem in \eqref{eq:deep-hedging-problem} evaluates the hedge under a fixed reference law \(\mu\). To account for distribution shifts and sampling error, we replace this fixed law with a Wasserstein ambiguity set centered at \(\mu\). Specifying the Wasserstein ambiguity set requires a transportation cost between market trajectories. We therefore equip the admissible trajectory set \(\Xi\subseteq\mathbb R^{d(T+1)}\) with an anisotropic path metric that allows these costs to vary across market variables and trading dates. Let \(p\in(1,\infty)\), fix time weights \(\kappa_t>0\) for \(t=0,\dots,T\), and component scales \(\lambda_i>0\) for \(i=1,\dots,d\). For an observed trajectory \(X\in\Xi\) and a perturbed trajectory \(\widehat X\in\Xi\), define
\begin{equation}
\label{eq:kappa-lambda-metric}
d_{\kappa,\lambda}(X,\widehat X)
\coloneqq
\Bigg(
\sum_{i=1}^d
\Big(\lambda_i\,\max_{t=0,\dots,T}\kappa_t
\,|X_t^i-\widehat X_t^i|\Big)^p
\Bigg)^{1/p}.
\end{equation}
The inner \(\ell_\infty\) term measures the largest weighted displacement over time within each component, while the outer \(\ell_p\) norm aggregates these component-wise displacements. The scales \(\lambda_i\) and weights \(\kappa_t\) control how perturbations across market variables and trading dates enter the transportation cost. We impose the following condition on the resulting metric space, which ensures finite \(p\)-th moments for all probability laws on \(\Xi\) with respect to \(d_{\kappa,\lambda}\).

\begin{restatable}[Path space]{assumption}{CoveragePathSpace}
\label{ass:path-space}
The admissible trajectory set \(\Xi\), equipped with the metric \(d_{\kappa,\lambda}\) in \eqref{eq:kappa-lambda-metric}, is a Polish metric space with finite diameter: $
\operatorname{diam}_{\kappa,\lambda}(\Xi)
\coloneqq
\sup_{X,\widehat X\in\Xi} d_{\kappa,\lambda}(X,\widehat X)
<\infty.$
\end{restatable}
For $p\in (1,\infty]$, let $W_p$ denote the $p$-Wasserstein distance induced by $d_{\kappa,\lambda}$. For a reference law \(\mu\in\mathcal P(\Xi)\) and radius $\varepsilon\ge0$, define the Wasserstein ambiguity set as the ball $\mathcal B_\varepsilon(\mu)\!\coloneqq\!\{\eta\in\mathcal P(\Xi):W_p(\eta,\mu)\le\varepsilon\}$. The Wasserstein robust value $V_\vartheta$ and the associated deep-hedging problem are
\begin{equation}
\label{eq:short-wasserstein-budget-dro}
\inf_{\vartheta\in\Theta\times\mathbb R} \bigg\{V_{\vartheta}(\varepsilon) \coloneqq 
\sup_{\eta\in\mathcal B_{\varepsilon}(\mu)}
\mathbb E_{X\sim\eta}
\big[\ell_{\mathrm{DH}}(\vartheta;X)\big]\bigg\}.
\end{equation}

\subsection{Drift-Aware Coverage of the Next-Period Market Law}
\label{subsec:short-coverage}
Under temporal nonstationarity, the reference law and Wasserstein radius should reflect the evolution of the data-generating distribution. Let \(X_1,\ldots,X_N\in\Xi\) denote historical trajectories ordered from oldest to most recent, with \(X_n\sim P_n \in\mathcal{P}(\Xi)\), and let \(P_{N+1}\) denote the unknown next-period market law. We model temporal drift through a uniform bound \(\varrho\) on one-step changes in the market law.

\begin{restatable}[Nonstationary sampling and bounded drift]{assumption}{NonstationarySamplingDrift}
\label{ass:nonstationary-drift}
The trajectories \(X_1,\dots,X_N\) are independent, with \(X_n\sim P_n\) for \(n=1,\dots,N\). Moreover, there exists \(\varrho\ge0\) such that
\[
W_\infty(P_n,P_{n+1})\le\varrho,
\qquad n=1,\dots,N.
\]
\end{restatable}

In applications, \(\varrho\) may be calibrated using regime-switching models, change-point methods, or rolling estimates of distributional distance. Under this drift model, older observations may be less representative of the next-period law because they are separated from \(P_{N+1}\) by more distributional shifts. The uniform empirical measure \(\mu\!\coloneqq\!\frac{1}{N}\sum_{n=1}^N\delta_{X_n}\) does not account for this difference in recency. We therefore use the weighted empirical reference measure $\mu_w\!\coloneqq\!\sum_{n=1}^Nw_n\delta_{X_n}$, where $\delta_x$ denotes the Dirac measure at $x$ and $w\!\in\!\Delta_N\!\coloneqq\!\{w\!\in\!\mathbb R_+^N:\sum_{n=1}^Nw_n\!=\!1\}$. The weights $w_n$ allow older observations to be downweighted, are chosen independently of the hedging losses, and remain fixed during adversarial training. 
Choosing \(w\) entails a trade-off between recency and statistical efficiency. Sampling uncertainty is governed by the effective sample size $N_{\mathrm{eff}}(w)\coloneqq1/\sum_{n=1}^Nw_n^2$, whereas exposure to cumulative drift is governed by $D_p(w)\coloneqq(\sum_{n=1}^Nw_n(N-n+1)^p)^{1/p}$.
Here $N-n+1$ counts the number of one-step distributional shifts from $P_n$ to $P_{N+1}$, so that $D_p(w)\varrho$ controls the resulting drift bias. Concentrating weight on recent trajectories reduces drift bias, whereas spreading weight across more trajectories increases $N_{\mathrm{eff}}(w)$ and reduces sampling uncertainty. The following theorem links the baseline weights to a calibrated radius and a next-period risk bound.

\begin{restatable}[Drift-aware coverage and next-period risk]{theorem}{DriftAwareNextPeriodRisk}
\label{thm:drift-aware-next-period-risk}
Let $p\in(1,\infty)$ and suppose Assumptions~\ref{ass:path-space} and~\ref{ass:nonstationary-drift} hold. There exist constants $c_1,c_2>0$ and $\nu\in(0,1/2)$, depending only on $d(T+1)$, $p$, $\operatorname{diam}_{\kappa,\lambda}(\Xi)$, such that the following holds for every $w\in\Delta_N$ and $\beta\in(0,1)$. Define
\[
\varepsilon_\varrho(w,\beta)
\coloneqq
D_p(w)\varrho
+
\left(
c_2N_{\mathrm{eff}}(w)^{-\nu}
+
\sqrt{\frac{\log(1/\beta)}{c_1N_{\mathrm{eff}}(w)}}
\right)^{1/p}.
\]
With probability at least $1-\beta$, we have $P_{N+1}\in\mathcal B_{\varepsilon_\varrho(w,\beta)}(\mu_w)$ and $\mathbb E_{X\sim P_{N+1}}[\ell_{\mathrm{DH}}(\vartheta;X)]\le V_{\vartheta}(\varepsilon_\varrho(w,\beta))$ for every $\vartheta\in\Theta\times\mathbb R$, where $V_{\vartheta}$ uses reference law $\mu_w$.
\end{restatable}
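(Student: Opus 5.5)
The plan is to split the distance between the next-period law and the weighted empirical measure through an intermediate mixture. Let $\bar P_w\coloneqq\sum_{n=1}^N w_nP_n$. The triangle inequality for $W_p$ gives
\[
W_p(P_{N+1},\mu_w)\le W_p(P_{N+1},\bar P_w)+W_p(\bar P_w,\mu_w).
\]
The first term is a deterministic drift bias and the second is a sampling-fluctuation term. The risk statement then follows at once: on the event $P_{N+1}\in\mathcal B_{\varepsilon_\varrho(w,\beta)}(\mu_w)$, the law $P_{N+1}$ is feasible in the supremum defining $V_\vartheta(\varepsilon_\varrho(w,\beta))$. Hence $\mathbb E_{P_{N+1}}[\ell_{\mathrm{DH}}(\vartheta;X)]\le V_\vartheta$ simultaneously for every $\vartheta$, because the event does not depend on $\vartheta$. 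Measurability and integrability of $\ell_{\mathrm{DH}}$ are taken from the regularity in Appendix~\ref{app:additional-market-regularity-details}.

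\textbf{Drift term.} Iterate Assumption~\ref{ass:nonstationary-drift}: the triangle inequality for $W_\infty$ gives $W_\infty(P_n,P_{N+1})\le(N-n+1)\varrho$. For each $n$, take a coupling $\pi_n$ of $(P_n,P_{N+1})$ under which $d_{\kappa,\lambda}\le (N-n+1)\varrho$ almost surely; $\infty$-optimal couplings exist on a Polish space by compactness of the coupling set. The mixture $\sum_n w_n\pi_n$ couples $\bar P_w$ with $P_{N+1}$. Computing its $p$-cost gives
\[
W_p^p(\bar P_w,P_{N+1})\le\sum_n w_n(N-n+1)^p\varrho^p,
\]
that is, $W_p(\bar P_w,P_{N+1})\le D_p(w)\varrho$.

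\textbf{Sampling term.} I would bound $W_p^p(\mu_w,\bar P_w)$ for independent, non-identically distributed samples with weights. There are two steps.

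\emph{Expectation.} I would adapt the Fournier--Guillin dyadic-partition argument. On a bounded subset of $\mathbb R^{d(T+1)}$, the metric $d_{\kappa,\lambda}$ is equivalent to the Euclidean norm, with constants absorbed into $\operatorname{diam}_{\kappa,\lambda}(\Xi)$. Then $W_p^p$ is bounded by $\sum_\ell 2^{-p\ell}\sum_{F\in\mathcal P_\ell}|\mu_w(F)-\bar P_w(F)|$. Each cell count satisfies $\mathbb E|\mu_w(F)-\bar P_w(F)|\le \big(\sum_n w_n^2 P_n(F)\big)^{1/2}$ by independence. Cauchy--Schwarz over the $2^{\ell d(T+1)}$ cells, with $\sum_n w_n^2 P_n(F)\le N_{\mathrm{eff}}^{-1}\cdot(\text{mass})$, then yields $\mathbb E\,W_p^p\le c_2N_{\mathrm{eff}}(w)^{-\nu}$. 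Here $\nu\in(0,1/2)$ depends on $p$ and $d(T+1)$; one can take a slightly smaller exponent to avoid log factors in borderline dimensions.

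\emph{Concentration.} The map $(X_1,\dots,X_N)\mapsto W_p^p(\mu_w,\bar P_w)$ has bounded differences $w_n\operatorname{diam}^p$: changing $X_n$ moves mass $w_n$ by at most the diameter. McDiarmid's inequality then gives a deviation $\operatorname{diam}^p\sqrt{\log(1/\beta)/(2N_{\mathrm{eff}}(w))}$ with probability $1-\beta$, which fixes $c_1$.

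Taking $p$-th roots and adding the drift term produces $\varepsilon_\varrho(w,\beta)$.

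The main obstacle is the expectation bound for weighted, non-identically distributed samples. Standard results assume i.i.d.\ data with uniform weights. The key check is that the per-cell variance bound only uses independence and $\sum_n w_n^2P_n(F)\le \bar P_w(F)\max_n w_n$, and that $\max_n w_n$ can be related to $N_{\mathrm{eff}}$. This relation fails in general: one only has $1/N_{\mathrm{eff}}\le\max_n w_n\le N_{\mathrm{eff}}^{-1/2}$. So I would instead use $\sum_F(\sum_n w_n^2P_n(F))^{1/2}\le (|\mathcal P_\ell|\sum_n w_n^2)^{1/2}$ directly. This costs some rate, but the rate loss is absorbed into $\nu<1/2$.
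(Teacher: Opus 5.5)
Your proof is correct, but it takes a different route from the paper's.

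\textbf{The paper's route.} It never forms the mixture $\bar P_w=\sum_n w_nP_n$. Instead it glues the one-step $W_\infty$ couplings to attach to each $X_n$ an independent proxy $\widetilde X_n\sim P_{N+1}$ with $d_{\kappa,\lambda}(X_n,\widetilde X_n)\le(N-n+1)\varrho$ almost surely. This gives $W_p(\mu_w,\widetilde\mu_w)\le D_p(w)\varrho$ pathwise, and the remaining term $W_p(\widetilde\mu_w,P_{N+1})$ involves an i.i.d.\ sample. For the expectation bound, the paper sorts the weights and writes $\mu_w=\sum_k k(w_k-w_{k+1})\bar P_k$ as a convex combination of uniform empirical measures of prefixes. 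It then applies convexity of $W_p^p$ and the Fournier--Guillin rate as a black box, and turns $\sum_k k^{-\nu}w_k$ into $N_{\mathrm{eff}}(w)^{-\nu}$ by summation by parts. Its McDiarmid step is the same as yours. This route is modular, since only i.i.d.\ rates are cited. However, it needs an auxiliary coupling construction and uses the $W_\infty$ hypothesis essentially, to make the drift bound almost sure.

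\textbf{Your route.} You make the drift term deterministic via convexity of $W_p^p(\cdot,P_{N+1})$. So you need only $W_p(P_n,P_{N+1})\le(N-n+1)\varrho$ (take $W_p$-optimal $\pi_n$ in the mixture), and your argument would survive a weaker $W_p$-drift assumption. The price is re-running the dyadic argument for independent, non-identically distributed samples.

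That step is easier than you suggest. Your provisional inequality $\sum_nw_n^2P_n(F)\le N_{\mathrm{eff}}(w)^{-1}\bar P_w(F)$ is indeed false. But your fallback $\sum_{F\in\mathcal P_\ell}(\sum_nw_n^2P_n(F))^{1/2}\le(|\mathcal P_\ell|/N_{\mathrm{eff}}(w))^{1/2}$ is exactly the Cauchy--Schwarz step Fournier and Guillin use in the i.i.d.\ case. So no rate is lost: you recover their bound with $N$ replaced by $N_{\mathrm{eff}}(w)$, without sorting or summation by parts.

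\textbf{One small fix.} The equivalence constants between $d_{\kappa,\lambda}$ and the Euclidean norm depend on $\kappa$ and $\lambda$, not just on the diameter. To get constants depending only on $d(T+1)$, $p$ and $\operatorname{diam}_{\kappa,\lambda}(\Xi)$, first substitute $Y_t^i=\lambda_i\kappa_tX_t^i$. This turns $d_{\kappa,\lambda}$ into the fixed norm $(\sum_i\max_t|Y_t^i|^p)^{1/p}$. The paper glosses over the same point.
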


The coverage bound underlying Theorem~\ref{thm:drift-aware-next-period-risk} specializes the nonstationary Wasserstein concentration result of \citet{keehan2025don} to the trajectory space \((\Xi,d_{\kappa,\lambda})\). The proofs are given in Appendix~\ref{app:proofs-drift-aware-wasserstein-dro}. For fixed weights, the theorem yields a calibrated radius. Alternatively, for a target radius \(\varepsilon_\varrho\), we select the weights by minimizing the failure-probability bound by solving
\[
w
\in
\argmax_{w\in\Delta_N}
\left\{
N_{\mathrm{eff}}(w)
(\varepsilon_\varrho-D_p(w)\varrho)_+^{2p}
\right\}.
\]
The optimizer therefore shifts mass toward more recent observations as the drift bound $\varrho$ increases, whereas smaller values of $\varrho$ favor a broader allocation across the historical sample. Moreover, \citet[Theorem~2]{keehan2025don} show that this generally nonconvex problem admits an optimal solution of the truncated-polynomial form $w_n\!=\!\bigl(a_1-a_2(N-n+1)^p\bigr)_+$, for some $a_1,a_2\ge0$ such that the $w_n$ sum to one. The resulting weights define the drift-aware empirical distribution $\mu_w$ that serves as the fixed center of the two-budget adversarial problem.

\section{Two-Budget OT--\texorpdfstring{$\phi$}{phi} Adversarial Training}
\label{sec:short-two-budget}

The Wasserstein formulation in Section~\ref{sec:short-setup} perturbs observed trajectories but leaves their relative probabilities unchanged. To also capture uncertainty in scenario probabilities, we adapt the joint transport-and-reweighting construction of \citet[Section~5]{blanchet2023unifying} to weighted market trajectories, with separate budgets for path perturbations and probability reweighting.

\subsection{A Two-Budget Ambiguity Set}
\label{subsec:two-budget-ambiguity-set}

We represent both forms of uncertainty through joint laws of reference and perturbed trajectories. 
For a coupling \(\pi\in\mathcal P(\Xi\times\Xi)\), let \(\pi_1\) and \(\pi_2\) denote its left and right marginals, respectively. Probability reweighting is constrained by \(\phi\)-divergence.

\begin{definition}[Entropy function and $\phi$-divergence]
\label{def:phi-divergence}
A function \(\phi:[0,\infty)\to[0,\infty]\) is an \emph{entropy function} if it
is proper, convex, lower semicontinuous, and satisfies \(\phi(1)=0\). For
\(\pi',\pi\in\mathcal P(\Xi\times\Xi)\), define the $\phi$-divergence of $\pi'$ with respect to~$\pi$ induced by $\phi$ as
\[
D_\phi(\pi'\|\pi)
\coloneqq
\begin{cases}
\displaystyle\int_{\Xi\times\Xi}
\phi\!\left(\frac{d\pi'}{d\pi}\right)\,d\pi, & \pi'\ll\pi,\\
+\infty, & \pi'\not\ll\pi.
\end{cases}
\]
\end{definition}

Standard choices include \(\phi_{\mathrm{KL}}(u)=u\log u-u+1\) for the Kullback--Leibler divergence and \(\phi_{\chi^2}(u)=(u-1)^2\) for the Pearson \(\chi^2\)-divergence. Combining probability reweighting with transportation yields the following ambiguity set.

\begin{definition}[Two-budget OT--$\phi$ ambiguity set]
\label{def:two_budget_phi_ot_clean}
For transport budget $\varepsilon\ge0$ and reweighting budget $\tau\ge0$, define
\begin{equation}
\label{eq:short-two-budget-set}
\mathcal B_{\varepsilon,\tau}(\mu_w)
\coloneqq
\left\{
\eta\in\mathcal P(\Xi):
\begin{array}{l}
\text{there exist }\pi,\pi'\in\mathcal P(\Xi\times\Xi)
\text{ such that}\\[0.4ex]
\pi_1=\mu_w,\quad \pi'_2=\eta,\quad
D_\phi(\pi'\|\pi)\le\tau,\\[0.8ex]
\displaystyle
\int_{\Xi\times\Xi}
d_{\kappa,\lambda}(x,\widehat x)^p\,
d\pi'(x,\widehat x)\le\varepsilon^p
\end{array}
\right\}.
\end{equation}
\end{definition}

The first marginal of \(\pi\) anchors the construction at the reference law \(\mu_w\), while the second marginal of \(\pi'\) defines the adversarial law \(\eta\). The divergence budget \(\tau\) limits probability reweighting relative to the reference coupling, whereas the transport budget \(\varepsilon\) limits displacement in the trajectory space.

\begin{remark}[Coverage inheritance]
\label{rem:two-budget-coverage}
For every \(\tau\ge0\),
\(\mathcal B_\varepsilon(\mu_w)\subseteq\mathcal B_{\varepsilon,\tau}(\mu_w)\),
since any Wasserstein-feasible coupling is feasible in \eqref{eq:short-two-budget-set} by taking \(\pi'\!=\!\pi\). Hence the transport radius calibrated in Section~\ref{subsec:short-coverage} retains its next-period coverage guarantee under the two-budget formulation. This inclusion does not justify reducing the calibrated transport radius in exchange for a reweighting budget.
\end{remark}

The corresponding two-budget robust value $V_\vartheta(\varepsilon,\tau)$ and the associated deep-hedging problem are
\begin{equation}
\label{eq:short-two-budget-dro}
\inf_{\vartheta\in\Theta\times\mathbb R} \bigg\{V_{\vartheta}(\varepsilon,\tau) \coloneqq 
\sup_{\eta\in\mathcal B_{\varepsilon,\tau}(\mu_w)}
\mathbb E_{ X\sim\eta}
\big[\ell_{\mathrm{DH}}(\vartheta; X)\big]\bigg\}.
\end{equation}

Observations assigned zero baseline weight are excluded from the support of \(\mu_w\), so subsequent sums may be restricted to indices with \(w_n>0\). The formulation contains the two corresponding single-budget models as special cases: \(\tau=0\) reduces to Wasserstein DRO, while \(\varepsilon=0\) yields pure \(\phi\)-divergence reweighting on the empirical support.

\subsection{First-Order Approximation}

For small transport and reweighting budgets, we derive a tractable local approximation of the inner problem in \eqref{eq:short-two-budget-dro}. The transport contribution is determined by the dual geometry of the path metric.

\begin{restatable}{proposition}{WeightedPathDualGeometry}
\textnormal{(Dual geometry of the weighted path metric)}
\label{prop:dual-geometry-main}
Let $p\in(1,\infty)$, $q=p/(p-1)$, and $\kappa_t,\lambda_i>0$. For $\Delta X,G\in\mathbb R^{d(T+1)}$, define
\begin{equation}
\label{eq:dual_norm_main}
\|\Delta X\|_{\kappa,\lambda,p}
\coloneqq\Bigl(\sum_i(\lambda_i\max_t\kappa_t|\Delta X_t^i|)^p\Bigr)^{1/p},
\quad
\|G\|_{\kappa,\lambda,p}^{\ast}
\coloneqq\Bigl(\sum_i\Bigl(\sum_t\frac{|G_t^i|}{\lambda_i\kappa_t}\Bigr)^q\Bigr)^{1/q}.
\end{equation}
The first is a norm inducing the metric in \eqref{eq:kappa-lambda-metric} through \(d_{\kappa,\lambda}(X,\widehat X)=\|X-\widehat X\|_{\kappa,\lambda,p}\), and the second is its dual norm under the Euclidean inner product. Moreover, there exists a Borel map \(h:\mathbb R^{d(T+1)}\to\mathbb R^{d(T+1)}\), with \(h(0)=0\), such that \(\|h(G)\|_{\kappa,\lambda,p}=1\) and \(\langle G,h(G)\rangle=\|G\|_{\kappa,\lambda,p}^{\ast}\) whenever \(G\ne0\).
\end{restatable}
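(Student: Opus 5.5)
The plan is to view $\|\cdot\|_{\kappa,\lambda,p}$ as a composition of two standard norms and then dualize each layer in turn. For each component $i$, set
\[
a_i(\Delta X)\coloneqq\lambda_i\max_t\kappa_t|\Delta X_t^i|,
\]
a weighted $\ell_\infty$ norm in the time variable. Then $\|\Delta X\|_{\kappa,\lambda,p}=\|(a_1,\dots,a_d)\|_{\ell_p}$.

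\textbf{Norm property.} Positivity and homogeneity are immediate, and definiteness follows from $\kappa_t,\lambda_i>0$. For the triangle inequality, each $a_i$ is subadditive. Since the outer $\ell_p$ norm is monotone on nonnegative vectors, combining this with Minkowski gives the claim. The identity $d_{\kappa,\lambda}(X,\widehat X)=\|X-\widehat X\|_{\kappa,\lambda,p}$ is then just a comparison with \eqref{eq:kappa-lambda-metric}.

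\textbf{Dual norm: upper bound.} Fix $G$ and $\Delta X$ with $\|\Delta X\|_{\kappa,\lambda,p}\le1$. Bound
\[
\langle G,\Delta X\rangle\le\sum_i\sum_t\frac{|G_t^i|}{\lambda_i\kappa_t}\,\lambda_i\kappa_t|\Delta X_t^i|\le\sum_i b_i\,a_i(\Delta X),
\]
where $b_i\coloneqq\sum_t|G_t^i|/(\lambda_i\kappa_t)$. H\"older's inequality with exponents $(q,p)$ then gives
\[
\langle G,\Delta X\rangle\le\|b\|_{\ell_q}\,\|a\|_{\ell_p}\le\|G\|^\ast_{\kappa,\lambda,p}.
\]

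\textbf{Dual norm: attainment via $h$.} For $G\ne0$, define $h(G)$ by
\[
h(G)_t^i\coloneqq\frac{\operatorname{sgn}(G_t^i)}{\lambda_i\kappa_t}\,\frac{b_i^{\,q-1}}{\|b\|_{\ell_q}^{\,q-1}},
\]
with $\operatorname{sgn}(0)\coloneqq0$, and set $h(0)\coloneqq0$.

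\emph{Normalization.} Consider first the case where every component with $b_i>0$ has some $t$ with $G^i_t\ne0$; this holds trivially, since $b_i>0$ forces such a $t$. Then
\[
a_i(h(G))=\frac{b_i^{q-1}}{\|b\|_q^{q-1}}
\]
whenever $b_i>0$, and $a_i(h(G))=0$ when $b_i=0$. Using $(q-1)p=q$, we get
\[
\|h(G)\|^p_{\kappa,\lambda,p}=\frac{\sum_i b_i^{q}}{\|b\|_q^{q}}=1.
\]

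\emph{Pairing.} Similarly,
\[
\langle G,h(G)\rangle=\sum_i b_i\,\frac{b_i^{q-1}}{\|b\|_q^{q-1}}=\|b\|_q=\|G\|^\ast_{\kappa,\lambda,p}.
\]
Together with the upper bound, this shows that $\|\cdot\|^\ast_{\kappa,\lambda,p}$ is exactly the dual norm. That the dual norm is itself a norm is standard, and can also be checked directly.

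\textbf{Expected obstacle: measurability.} The only delicate point is Borel measurability of $h$. The maps $G\mapsto b_i$ and $G\mapsto\|b\|_q$ are continuous. The map $\operatorname{sgn}$ is Borel, and $\{G\ne0\}$ is open. Hence $h$ is a composition of Borel maps on $\{G\ne0\}$, extended by $0$ at $G=0$, and is therefore Borel.

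The choice $\operatorname{sgn}(0)=0$ causes no harm. Any coordinate with $G_t^i=0$ contributes nothing to the pairing. Such a coordinate also does not lower $a_i(h(G))$, because the maximum over $t$ is attained at an index where $G_t^i\ne0$, and all such indices have equal weighted magnitude $b_i^{q-1}/\|b\|_q^{q-1}$.
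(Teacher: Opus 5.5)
Your proposal is correct and follows essentially the same route as the paper: subadditivity plus Minkowski for the norm, the same H\"older bound with $b_i=\sum_t|G_t^i|/(\lambda_i\kappa_t)$, and the same explicit maximizer $h(G)_t^i=\operatorname{sgn}(G_t^i)\,b_i^{q-1}/(\lambda_i\kappa_t\|b\|_q^{q-1})$. The differences are minor: you get the lower bound on the dual norm directly from $\langle G,h(G)\rangle=\|b\|_q$, whereas the paper gives a separate tightness argument over the subclass $\Delta X_t^i=\operatorname{sgn}(G_t^i)\alpha_i/\kappa_t$, and you check Borel measurability of $h$ explicitly, which the paper leaves implicit.
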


% The dual norm is the maximum of \(\langle G,\Delta X\rangle\) over \(\|\Delta X\|_{\kappa,\lambda,p}\le1\), with \(h(G)\) a Borel measurable choice of maximizer.
For \(n=1,\dots,N\), write \(L_n(\vartheta)\!\coloneqq\!\ell_{\mathrm{DH}}(\vartheta;X_n)\) and \(g_n(\vartheta)\!\coloneqq\!\nabla_X\ell_{\mathrm{DH}}(\vartheta;X_n)\). The local approximation of the robust value $V_\vartheta(\varepsilon,\tau)$ in \eqref{eq:short-two-budget-dro} requires the loss to admit the following first-order expansion. 

\begin{restatable}[Loss regularity]{assumption}{LossRegularity}
\label{ass:loss-regularity}
The trajectories $X_n$ lie in the interior of $\Xi$, and for every $\vartheta\in\Theta\times\mathbb R$, the loss $\ell_{\mathrm{DH}}(\vartheta;\cdot)$ is Borel measurable, bounded on $\Xi$ and admits at each $X_n$ the expansion
\[
\ell_{\mathrm{DH}}(\vartheta;X_n+\Delta)
=L_n(\vartheta)+\langle g_n(\vartheta),\Delta\rangle+o(\|\Delta\|_{\kappa,\lambda,p}), \quad \|\Delta\|_{\kappa,\lambda,p}\to0.
\]
\end{restatable}

The local conditions provide the first-order loss expansion and ensure that perturbations remain feasible near the observed trajectories, while boundedness controls losses away from the empirical support. For the reweighting term, we restrict attention to \emph{locally regular} entropy functions:  \(\phi\) is twice continuously differentiable near \(1\), with \(\phi''(1)>0\). This local condition is needed only for the small-budget expansion and is satisfied by the Kullback--Leibler and Pearson \(\chi^2\) divergences above. Let \(\bar L_w(\vartheta)\coloneqq\sum_{n=1}^Nw_nL_n(\vartheta)\) denote the weighted empirical loss, and define the loss dispersion and transport sensitivity by
\[
\sigma_w(\vartheta)\!\coloneqq\!\Bigl(\sum_{n=1}^Nw_n(L_n(\vartheta)-\bar L_w(\vartheta))^2\Bigr)^{1/2},
\qquad
\Upsilon_w(\vartheta)\!\coloneqq\!\Bigl(\sum_{n=1}^Nw_n\|g_n(\vartheta)\|_{\kappa,\lambda,p}^{\ast q}\Bigr)^{1/q}.
\]

\begin{restatable}{theorem}{JointSensitivityAnalysis}
\textnormal{(Joint first-order approximation)}
\label{thm:two_budget_ot_phi_taylor}
Suppose $\phi$ is locally regular,  $p\in(1,\infty)$ and Assumption~\ref{ass:loss-regularity} holds. For every fixed $\vartheta\in\Theta\times\mathbb R$, as $(\varepsilon,\tau)\to(0,0)$ 
\begin{equation}
\label{eq:two_budget_joint_taylor}
V_{\vartheta}(\varepsilon,\tau)
=
\bar L_w(\vartheta)
+
\Upsilon_w(\vartheta)\,\varepsilon
+
\sqrt{\frac{2}{\phi''(1)}}\,\sigma_w(\vartheta)\sqrt{\tau}
+
o\big(\varepsilon+\sqrt{\tau}\big).
\end{equation}
\end{restatable}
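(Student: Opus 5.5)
We prove \eqref{eq:two_budget_joint_taylor} by matching lower and upper bounds, both equal to $\bar L_w+\Upsilon_w\varepsilon+\sqrt{2/\phi''(1)}\,\sigma_w\sqrt\tau$ up to $o(\varepsilon+\sqrt\tau)$. Fix $\vartheta$ and drop it from the notation. The first step is a reduction. Any feasible $\eta$ is generated by a pair $(\pi,\pi')$ with $\pi_1=\mu_w$ and $\pi'\ll\pi$. Writing $\zeta=d\pi'/d\pi$, we get $\mathbb E_\eta[\ell_{\mathrm{DH}}]=\int\zeta(x,\widehat x)\,\ell_{\mathrm{DH}}(\widehat x)\,d\pi$. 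Since $\mu_w$ is supported on the atoms $X_n$, we disintegrate $\pi=\sum_n w_n\delta_{X_n}\otimes\pi^n$. The value is then $\sum_n w_n\int\zeta_n\ell_{\mathrm{DH}}\,d\pi^n$, subject to the divergence constraint $\sum_n w_n\int\phi(\zeta_n)\,d\pi^n\le\tau$ and the transport constraint $\sum_n w_n\int\zeta_n\,d_{\kappa,\lambda}(X_n,\cdot)^p\,d\pi^n\le\varepsilon^p$.

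\textbf{Lower bound (explicit attack).} Set $G_n=g_n$ and define the reweighting $r_n=1+s(L_n-\bar L_w)/\sigma_w$ (take $r_n=1$ if $\sigma_w=0$), with $s=\sqrt{2\tau/\phi''(1)}$. Because $\phi$ is $C^2$ near $1$, $\sum w_n\phi(r_n)=\tfrac{\phi''(1)}{2}s^2+o(s^2)$, so a factor $(1-o(1))$ on $s$ makes it feasible. Choose displacements $\Delta_n=\varepsilon\,a_n\,h(g_n)$ with $a_n=\|g_n\|_*^{q-1}/\Upsilon_w^{q-1}$, where $h$ is the map of Proposition~\ref{prop:dual-geometry-main}. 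Then $\sum w_n a_n^p=1$ since $(q-1)p=q$. Take $\pi'=\sum w_nr_n\delta_{(X_n,X_n+\Delta_n)}$ and $\pi=\sum w_n\delta_{(X_n,X_n+\Delta_n)}$.

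The transport cost is $\varepsilon^p\sum w_nr_na_n^p=\varepsilon^p(1+O(s))$. To make it exactly feasible, rescale $\varepsilon\mapsto\varepsilon(1+O(s))^{-1/p}$, which costs $O(\varepsilon\sqrt\tau)=o(\varepsilon+\sqrt\tau)$. By Assumption~\ref{ass:loss-regularity}, and since $X_n$ is interior so the perturbation stays in $\Xi$ for small $\varepsilon$, the value equals
\[
\sum w_nr_n\bigl(L_n+\varepsilon a_n\|g_n\|_*+o(\varepsilon)\bigr)=\bar L_w+s\sigma_w+\varepsilon\Upsilon_w+O(s\varepsilon)+o(\varepsilon).
\]
Here we used $\sum w_na_n\|g_n\|_*=\Upsilon_w$.

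\textbf{Upper bound (main obstacle).} Mass may be transported far from the $X_n$, and $\zeta$ is arbitrary. Let $M=\sup|\ell_{\mathrm{DH}}|<\infty$. For each $n$ and each radius $\delta>0$, Assumption~\ref{ass:loss-regularity} gives
\[
\ell_{\mathrm{DH}}(\widehat x)\le L_n+\|g_n\|_*\,d(X_n,\widehat x)+\omega(\delta)\,d(X_n,\widehat x)
\]
for $d\le\delta$, with $\omega(\delta)\to0$. For $d>\delta$ we have $\ell_{\mathrm{DH}}(\widehat x)\le L_n+2M\,d^p/\delta^p$. Combining both regimes,
\[
\ell_{\mathrm{DH}}(\widehat x)\le L_n+(\|g_n\|_*+\omega(\delta))\,d+C_\delta d^p .
\]
Insert this bound and write $\bar\zeta_n=\int\zeta_n\,d\pi^n$, which satisfies $\sum w_n\bar\zeta_n=1$. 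The value is then bounded by
\[
\sum w_n\bar\zeta_nL_n+\sum w_n(\|g_n\|_*+\omega)\int\zeta_nd\,d\pi^n+C_\delta\varepsilon^p .
\]
Applying Hölder's inequality with respect to the measure $\sum w_n\zeta_n\pi^n$, the middle term is at most $(\sum w_n\bar\zeta_n(\|g_n\|_*+\omega)^q)^{1/q}\varepsilon$.

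Two estimates remain. First, Jensen gives $\sum w_n\phi(\bar\zeta_n)\le\tau$, so the reweighting $\bar\zeta$ lies in the pure $\phi$-divergence ball on the atoms. Local strong convexity of $\phi$ near $1$, together with a compactness argument for small $\tau$ ($\phi\ge0$ and convex with $\phi''(1)>0$ forces $\bar\zeta\to1$), then yields
\[
\sum w_n\bar\zeta_nL_n\le\bar L_w+\sqrt{2\tau/\phi''(1)}\,\sigma_w+o(\sqrt\tau).
\]
This follows from Cauchy--Schwarz, $\sum w_n(\bar\zeta_n-1)(L_n-\bar L_w)\le\sigma_w\|\bar\zeta-1\|_{L^2(w)}$, combined with $\|\bar\zeta-1\|^2_{L^2(w)}\le(2/\phi''(1)+o(1))\tau$. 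Second, $\bar\zeta_n\to1$ uniformly, so the Hölder factor is $\Upsilon_w+o(1)$ once $\delta\to0$.

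Finally, choose $\delta=\delta(\varepsilon)\to0$ slowly enough that $C_\delta\varepsilon^p=o(\varepsilon)$, which is possible since $p>1$. The upper bound then matches the lower bound, and the remainders are all $o(\varepsilon+\sqrt\tau)$. The delicate points are this uniform choice of $\delta$ and the case where $\phi$ has bounded domain or is infinite near $0$, for which only the local behaviour at $1$ matters.
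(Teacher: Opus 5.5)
Your proposal is correct and uses the same ingredients as the paper. Disintegration plus Jensen pushes the reweighting onto the atoms, so your $\bar\zeta_n$ is the paper's $a_n$. The transport term is handled by linearization on a small ball, a boundedness/Markov-type tail bound, and H\"older in the reweighted measure. The reweighting term is handled by local quadratic bounds on $\phi$ with Cauchy--Schwarz, and the lower bound comes from an explicit dual-direction attack.

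The only difference is packaging. The paper proves the exact reduction $V_\vartheta(\varepsilon,\tau)=\sup_{a\in Q_\phi(\tau)}T_a(\varepsilon)$ and a uniform-in-$a$ OT lemma, then combines them with a separate reweighting lemma via a sup inequality. You bound each feasible $\eta$ directly, and you use a single joint attack normalized by $\Upsilon_w$ with a rescaled radius rather than by $\Upsilon_{\widehat w}$.
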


The expansion separates the leading-order effects of the two uncertainty mechanisms. The coefficient \(\Upsilon_w(\vartheta)\) measures sensitivity to local deformation of trajectories under \(d_{\kappa,\lambda}\), whereas \(\sqrt{2/\phi''(1)}\,\sigma_w(\vartheta)\) measures sensitivity to reallocating probability mass across scenarios. Thus, transport captures within-scenario sensitivity and reweighting captures cross-scenario loss heterogeneity. Cross-effects enter only at higher order, although the two mechanisms remain coupled in the exact adversarial problem because the transport allocation depends on the reweighted scenario probabilities. Along \(\tau=0\) and \(\varepsilon=0\), respectively,
\eqref{eq:two_budget_joint_taylor} recovers the weighted Wasserstein and weighted \(\phi\)-divergence first-order limits \citep{Bartl2021Sen,duchi2021statistics}.

\subsection{First-Order Adversarial Construction}
To attain the first-order expansion in Theorem~\ref{thm:two_budget_ot_phi_taylor}, we construct an explicit first-order attack. Fix $\vartheta$ and let $Y_n\coloneqq L_n-\bar L_w$ denote the centered loss of the $n$-th trajectory. If $\sigma_w>0$, define
\[
\alpha_\tau
\coloneqq
\sup\Big\{
\alpha\ge0:
1+\alpha Y_n\ge0\ \text{for all }n,\quad
\sum_{n=1}^Nw_n\phi(1+\alpha Y_n)\le\tau
\Big\},
\]
and set $a_n\coloneqq1+\alpha_\tau Y_n$. If $\sigma_w=0$, set $\alpha_\tau=0$ and $a_n=1$. Since $1+\alpha_\tau Y_n \ge 0$ for all $n$ and $\sum_{n=1}^Nw_nY_n=0$, the reweighted probabilities $\widehat w_n \coloneq w_na_n$ remain nonnegative and normalized. The sign of the centered loss \(Y_n=L_n-\bar L_w\) determines whether trajectory \(n\) gains or loses probability mass and \(\alpha_\tau\) sets the largest feasible step. The resulting tilt therefore reallocates mass toward trajectories with above-average loss while preserving normalization and satisfying the budget.

Let $\Upsilon_{\widehat w}\coloneqq(\sum_{n=1}^N\widehat w_n\|g_n\|_{\kappa,\lambda,p}^{\ast q})^{1/q}$ denote the transport sensitivity under the reweighted probabilities. If $\Upsilon_{\widehat w}>0$, define
\begin{equation}
\label{eq:structured-perturbation-short}
\widehat X_n
\coloneqq
X_n+
\varepsilon h(g_n)
\|g_n\|_{\kappa,\lambda,p}^{\ast(q-1)}
\Upsilon_{\widehat w}^{\,1-q};
\end{equation}
otherwise, set $\widehat X_n=X_n$. The direction $h(g_n)$ maximizes the local loss increase per unit transport distance, while $\|g_n\|_{\kappa,\lambda,p}^{\ast(q-1)}$ allocates larger perturbations to trajectories with greater local sensitivity. The normalization by $\Upsilon_{\widehat w}$ scales the perturbations to the transport budget under the reweighted probabilities. Thus, the construction first tilts scenario probabilities according to realized loss and then allocates the transport budget according to local pathwise sensitivity. The probability multipliers and perturbed trajectories define the adversarial law $\widehat\eta_{\varepsilon,\tau}\coloneqq\sum_{n=1}^N \widehat w_n\delta_{\widehat X_n}$.

\begin{restatable}[First-order accuracy of the adversary]{theorem}{FirstOrderStructuredOTPhiAttack}
\label{thm:structured_phi_ot_first_order_attack_main}
Suppose $\phi$ is locally regular, $p\in(1,\infty)$ and Assumption~\ref{ass:loss-regularity} holds. For sufficiently small $(\varepsilon,\tau)$,
$\widehat\eta_{\varepsilon,\tau}\in \mathcal B_{\varepsilon,\tau}(\mu_w)$, and as $(\varepsilon,\tau)\to(0,0)$,
\begin{equation}
\label{eq:short-attack-accuracy}
V_{\vartheta}(\varepsilon,\tau)
-
\mathbb E_{X\sim\widehat\eta_{\varepsilon,\tau}}
\big[\ell_{\mathrm{DH}}(\vartheta;X)\big]
=
o\big(\varepsilon+\sqrt{\tau}\big).
\end{equation}
\end{restatable}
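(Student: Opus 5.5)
The argument has two parts: feasibility of $\widehat\eta_{\varepsilon,\tau}$, and a matching lower bound on its expected loss. Since $\widehat\eta_{\varepsilon,\tau}\in\mathcal B_{\varepsilon,\tau}(\mu_w)$ gives $\mathbb E_{\widehat\eta}[\ell_{\mathrm{DH}}]\le V_\vartheta(\varepsilon,\tau)$ for free, Theorem~\ref{thm:two_budget_ot_phi_taylor} reduces the claim to
\[
\mathbb E_{\widehat\eta_{\varepsilon,\tau}}[\ell_{\mathrm{DH}}(\vartheta;X)]\ \ge\ \bar L_w+\Upsilon_w\varepsilon+\sqrt{2/\phi''(1)}\,\sigma_w\sqrt\tau+o(\varepsilon+\sqrt\tau).
\]

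\textbf{Feasibility.} I will use the couplings $\pi=\sum_n w_n\delta_{(X_n,\widehat X_n)}$ and $\pi'=\sum_n \widehat w_n\delta_{(X_n,\widehat X_n)}$. Then $\pi_1=\mu_w$ and $\pi'_2=\widehat\eta$. Because $d\pi'/d\pi=a_n$ on atom $n$, we get $D_\phi(\pi'\|\pi)=\sum_n w_n\phi(a_n)\le\tau$; this holds since the constraint set defining $\alpha_\tau$ is closed, by lower semicontinuity of $\phi$, and $\alpha_\tau<\infty$ when $\sigma_w>0$ because $1+\alpha Y_n\ge0$ fails for large $\alpha$. For the transport cost, Proposition~\ref{prop:dual-geometry-main} gives $\|h(g_n)\|=1$, so
\[
\sum_n\widehat w_n d_{\kappa,\lambda}(X_n,\widehat X_n)^p=\varepsilon^p\,\Upsilon_{\widehat w}^{p(1-q)}\sum_n\widehat w_n\|g_n\|^{\ast p(q-1)} .
\]
Using $p(q-1)=q$, the sum equals $\Upsilon_{\widehat w}^q$, and since $p(1-q)+q=0$ the cost is exactly $\varepsilon^p$. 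Finally, $\widehat X_n\in\Xi$ for small $\varepsilon$, because the $X_n$ are interior and the displacement is $O(\varepsilon)$ uniformly, as $\widehat w_n\ge w_n/2$ once $\alpha_\tau$ is small.

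\textbf{Lower bound.} Assumption~\ref{ass:loss-regularity} at finitely many points gives
\[
\mathbb E_{\widehat\eta}[\ell_{\mathrm{DH}}]=\sum_n\widehat w_nL_n+\varepsilon\,\Upsilon_{\widehat w}^{1-q}\sum_n\widehat w_n\|g_n\|^{\ast q}+o(\varepsilon)=\sum_n\widehat w_nL_n+\varepsilon\Upsilon_{\widehat w}+o(\varepsilon).
\]
Here I use $\langle g_n,h(g_n)\rangle=\|g_n\|^\ast$. The reweighting term is $\sum_n\widehat w_nL_n=\bar L_w+\alpha_\tau\sigma_w^2$. As $\alpha_\tau\to0$ we have $\widehat w\to w$, so $\Upsilon_{\widehat w}\to\Upsilon_w$ and therefore $\varepsilon\Upsilon_{\widehat w}=\varepsilon\Upsilon_w+o(\varepsilon)$. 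The case $\Upsilon_{\widehat w}=0$ forces $g_n=0$ on the support and is trivial; the case $\sigma_w=0$ is likewise trivial.

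\textbf{Main step: asymptotics of $\alpha_\tau$.} It remains to show $\alpha_\tau\sigma_w^2=\sqrt{2/\phi''(1)}\,\sigma_w\sqrt\tau+o(\sqrt\tau)$. Set $F(\alpha)=\sum_nw_n\phi(1+\alpha Y_n)$. By local $C^2$ regularity and $\sum_n w_nY_n=0$,
\[
F(\alpha)=\tfrac12\phi''(1)\sigma_w^2\alpha^2(1+o(1)) .
\]
Since $F$ is convex with $F(0)=0$, it is nondecreasing on $[0,\infty)$ and strictly increasing near $0$ because $\phi''(1)>0$. For small $\tau$, the nonnegativity constraint is slack (it binds only at $\alpha\ge 1/\max_n|Y_n|$), so $F(\alpha_\tau)=\tau$ with $\alpha_\tau\to0$. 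Inverting gives $\alpha_\tau=\sqrt{2\tau/(\phi''(1)\sigma_w^2)}\,(1+o(1))$, which yields the required term. The obstacle is making the $o(1)$ uniform in the joint limit $(\varepsilon,\tau)\to(0,0)$ and ensuring that the loss-expansion remainder is uniform over the finite support. Both follow because the error terms are separately $o(\varepsilon)$ and $o(\sqrt\tau)$, hence $o(\varepsilon+\sqrt\tau)$.
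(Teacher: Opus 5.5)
Your proposal is correct and follows essentially the same route as the paper. Both arguments use the same couplings $\pi=\sum_n w_n\delta_{(X_n,\widehat X_n)}$ and $\pi'=\sum_n\widehat w_n\delta_{(X_n,\widehat X_n)}$, the exact transport-cost identity via $(q-1)p=q$, the first-order loss expansion $\sum_n\widehat w_nL_n+\varepsilon\Upsilon_{\widehat w}+o(\varepsilon)$, the identity $\sum_n\widehat w_nL_n=\bar L_w+\alpha_\tau\sigma_w^2$ with Taylor asymptotics for $\alpha_\tau$, and a final comparison with Theorem~\ref{thm:two_budget_ot_phi_taylor}.

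Your version is somewhat more careful than the paper's in two places. First, you check that the supremum defining $\alpha_\tau$ is attained and that $F(\alpha_\tau)=\tau$ once the nonnegativity constraint is slack; the paper only asserts both. Second, your equality $D_\phi(\pi'\|\pi)=\sum_n w_n\phi(a_n)$ is valid because coinciding atoms force equal $a_n$, whereas the paper settles for an inequality via Jensen.
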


Hence the two-budget robust value is attained to first order by an adversary consisting of one probability multiplier and one perturbed trajectory per observation. Its proof is in Appendix~\ref{appsubsec:two_budget_adversarial}.

\section{\textsc{WRAP}: Nonstationary Two-Budget Adversarial Training}
\label{sec:algorithms}

The first-order analysis in Section~\ref{sec:short-two-budget} yields a tractable approximation to the two-budget robust objective. Training alternates between an approximate inner maximization over scenario weights and trajectory perturbations and an outer minimization over the hedge parameters. The drift-aware weights from Section~\ref{subsec:short-coverage} remain fixed throughout training and define the reference distribution.

\textbf{Inner step: Adversarial construction.}
For fixed \(\vartheta\), Algorithm~\ref{alg:drift-aware-one-step-attack} implements the structured first-order attack characterized in Theorem~\ref{thm:structured_phi_ot_first_order_attack_main}. Reweighting shifts probability mass toward scenarios with above-average losses, while transport perturbs trajectories along locally adverse directions, yielding a two-budget adversary around the drift-aware reference distribution. The one-step Wasserstein Projection Gradient Descent (WPGD) attack of \citet[Section~4]{he2025distributional} is recovered as the special case with zero reweighting budget and uniform reference weights. Appendix~\ref{app:iterative-attack} develops the corresponding multistep two-budget variant, which similarly reduces to their iterative transport attack under the same restriction. Appendix~\ref{app:additional-results} compares the one-step and multistep variants.

\begin{algorithm}[t]
\footnotesize
\caption{\textbf{W}asserstein \textbf{R}eweighting \textbf{A}dversarial \textbf{P}erturbation (\textsc{WRAP})}
\label{alg:drift-aware-one-step-attack}
\footnotesize
\begin{algorithmic}[1]
\Require Weighted sample \(\{(X_n,w_n)\}_{n=1}^N\), batch \(\mathcal I\subseteq\{1,\dots,N\}\) with \(|\mathcal I|=M\), hedge parameter \(\vartheta\), transport budget \(\varepsilon\), reweighting budget \(\tau\)
\Ensure Adversarial weighted batch \(\{(\widehat X_n,\widehat w_n)\}_{n\in\mathcal I}\)

\State Set \(\bar w_n\gets w_n/\sum_{j\in\mathcal I}w_j\) and compute \(L_n\gets\ell_{\mathrm{DH}}(\vartheta;X_n)\), \(g_n\gets\nabla_X\ell_{\mathrm{DH}}(\vartheta;X_n)\) for all \(n\in\mathcal I\).

\Statex \StartBlock{Phi} \textbf{\(\phi\)-divergence reweighting}
\Indent 
        \State \textbf{if} \(\tau=0\) \textbf{then} Set \(a_n\gets1\) for all \(n\in\mathcal I\).
        \State \textbf{else} \ Set \(Y_n\gets L_n-\bar L_{\mathcal I}\) for all \(n\in\mathcal I\), where \(\bar L_{\mathcal I}= \sum_{n\in\mathcal I}\bar w_nL_n\).
        \State \quad \quad Choose the largest \(\alpha_\tau\ge0\) such that \(1+\alpha_\tau Y_n\ge0\) for all \(n\in\mathcal I\) and \(\sum_{n\in\mathcal I}\bar w_n\phi(1+\alpha_\tau Y_n)\le\tau\).
        \State \quad \quad Set \(a_n\gets1+\alpha_\tau Y_n\) for all \(n\in\mathcal I\).
        
        \State Set \(\widehat w_n\gets\bar w_na_n\) for all \(n\in\mathcal I\).
\EndBlock{Phi}{PhiShade}

\Statex \StartBlock{OT} \textbf{Optimal-transport perturbation}
    \State \textbf{if} \(\varepsilon=0\) \textbf{then} Set \(\widehat X_n\gets X_n\) for all \(n \in \mathcal{I}\).
    \State \textbf{else} \ Set \(\Upsilon_{\widehat w}^{\mathcal I}\gets (\sum_{n\in\mathcal I}\widehat w_n \|g_n\|_{\kappa,\lambda,p}^{\ast q})^{1/q}\).
    \State \quad \quad Set \(\widehat X_n\gets X_n+\varepsilon h(g_n)\|g_n\|_{\kappa,\lambda,p}^{\ast(q-1)} (\Upsilon_{\widehat w}^{\mathcal I})^{1-q}\) for all \(n\in\mathcal I\).
\EndBlock{OT}{OTShade}
\EndIndent
\State \Return \(\{(\widehat X_n,\widehat w_n)\}_{n\in\mathcal I}\).
\end{algorithmic}
\end{algorithm}

\textbf{Outer step: Loss minimization.}
Given the adversarial weighted batch \(\mathcal{I}\), the outer step minimizes
\[
\min_{\vartheta\in\Theta\times\mathbb R}
\Big\{
\mathcal J_{\mathrm{adv}}^{\mathcal I}(\vartheta)
\coloneqq
\sum_{n\in\mathcal I}
\widehat w_n
\ell_{\mathrm{DH}}
\bigl(\vartheta;\widehat X_n\bigr)
\Big\}.
\]
The gradient with respect to $\vartheta$ is evaluated while holding the adversarial weights and perturbed trajectories fixed. Under sufficient regularity, a Danskin argument shows that this update agrees, to leading order in the ambiguity budgets, with the gradient of the corresponding robust value function. After each outer update, the inner adversary is recomputed at the updated hedge parameters, so that the scenario reweighting and transport perturbations adapt to the evolving loss landscape.

\section{Numerical Experiments}
\label{sec:numerical-experiments}
We conduct numerical experiments to evaluate the two central components of the proposed framework. First, we assess whether drift-aware reference weights improve out-of-sample hedging performance under temporal nonstationarity. Second, we isolate the effects of probability reweighting and trajectory perturbations and examine how the corresponding ambiguity budgets affect adversarial paths and hedging performance. We compare uniform reference weights (\textsc{-U}) with drift-aware weights (\textsc{-W}). For each reference distribution, clean training (\textsc{ERM}) applies no adversarial perturbation, ``$\phi$-'' applies probability reweighting, ``\textsc{OT-}'' applies trajectory perturbations, and ``\textsc{WRAP}'' combines both mechanisms. Across all experiments, the network architecture and optimization settings are held fixed as reported in Table~\ref{tab:experiment_hyperparameters}. Ambiguity budgets are selected separately for each experimental setting and training specification using the validation sample, while all reported performance measures are evaluated on the held-out test sample. Appendix~\ref{app:experiment-setup} provides additional results, including the stationary Heston benchmark and extensions with proportional transaction costs and a Mean--CVaR (M--CVaR) objective.

\subsection{Nonstationary Heston Experiments}
We evaluate the robustness of a deep-hedging strategy for a European call option trained under the OCE risk measure CVaR in a nonstationary Heston market. Temporal nonstationarity is introduced by varying the stock-price drift and long-run variance linearly across eight consecutive blocks of simulated trajectories. The first five blocks are used for training, the sixth for hyperparameter and model selection, and the final two for evaluating near-future (Near) and far-future (Far) performance under progressively larger distribution shifts. We compare uniform reference weights with the drift-aware weights from Section~\ref{subsec:short-coverage}, which increasingly emphasize recent observations according to the validation-selected ratio \((\varrho/\varepsilon_\varrho)^*\). The reference-weight radius \(\varepsilon_\varrho\) is distinct from the adversarial transport budget \(\varepsilon\) (\textit{cf.}~Appendix~\ref{subsec:heston-results}).

\begin{table*}[!ht]
\centering
\scriptsize
\begin{threeparttable}
\caption{Results for the nonstationary Heston model.}
\label{tab:short-nonstationary}
\setlength{\tabcolsep}{4pt}
\begin{tabular}{@{}lccccccc@{}}
\toprule
Method & $\varepsilon$ & $\tau$ & \shortstack{Val.~CVaR$(\downarrow)$} & \shortstack{Near CVaR$(\downarrow)$} & \shortstack{Far CVaR$(\downarrow)$} & \shortstack{Far Mean PnL$(\uparrow)$} & \shortstack{Far Turnover$(\downarrow)$} \\
\midrule
\textsc{ERM-U} & -- & -- & $6.868 \pm 0.028$ & $7.952 \pm 0.137$ & $8.992 \pm 0.236$ & $\underline{-4.528} \pm 0.055$ & $10.450 \pm 0.955$ \\
$\phi$\textsc{-U} & -- & $1.00$ & $6.324 \pm 0.034$ & $7.239 \pm 0.118$ & $8.128 \pm 0.248$ & $-4.557 \pm 0.038$ & $10.717 \pm 1.774$ \\
\textsc{OT-U} & $0.20$ & -- & $6.245 \pm 0.093$ & $7.050 \pm 0.180$ & $7.811 \pm 0.397$ & $-4.594 \pm 0.044$ & $10.056 \pm 1.572$ \\
\textsc{WRAP-U} & $0.30$ & $0.60$ & $6.005 \pm 0.058$ & $6.641 \pm 0.136$ & $7.268 \pm 0.193$ & $-4.571 \pm 0.029$ & $10.437 \pm 2.500$ \\
\midrule
\textsc{ERM-W} & -- & -- & $6.653 \pm 0.044$ & $7.735 \pm 0.126$ & $8.707 \pm 0.249$ & $-4.532 \pm 0.042$ & $7.995 \pm 0.650$ \\
$\phi$\textsc{-W} & -- & $1.20$ & $6.259 \pm 0.036$ & $7.097 \pm 0.169$ & $7.919 \pm 0.305$ & $-4.553 \pm 0.042$ & $8.033 \pm 0.455$ \\
\textsc{OT-W} & $0.20$ & -- & $6.199 \pm 0.093$ & $6.984 \pm 0.139$ & $7.772 \pm 0.232$ & $-4.590 \pm 0.034$ & $\underline{7.412} \pm 0.672$ \\
\textsc{WRAP-W} & $0.40$ & $1.00$ & $\underline{5.954} \pm 0.029$ & $\underline{6.600} \pm 0.095$ & $\underline{7.221} \pm 0.186$ & $-4.584 \pm 0.034$ & $7.483 \pm 1.125$\\
\bottomrule
\end{tabular}
\begin{tablenotes}[flushleft]
\scriptsize
\item \textit{Note.} Values are mean $\pm$ standard deviation over five seeds.
\end{tablenotes}
\end{threeparttable}
\end{table*}

\textbf{Results.}~As shown in Table~\ref{tab:short-nonstationary}, under uniform weights, reweighting and transport alone reduce far-future CVaR by \(9.6\%\) and \(13.1\%\), respectively, relative to \textsc{ERM-U}, while \textsc{WRAP-U} reduces it by \(19.2\%\). Drift-aware weighting further lowers CVaR for all four methods in every evaluation period. \textsc{WRAP-W}, with \((\varepsilon,\tau)=(0.40,1.00)\), attains the lowest validation, near-future, and far-future CVaR (\textit{cf.}~Figure~\ref{fig:heston-nonstationary-budget-sensitivity} for the validation landscape). The extensions with transaction costs and M--CVaR likewise favor joint training, which attains the lowest risk in both settings (\textit{cf.}~Tables~\ref{tab:heston_ns_u_tc_optimal}--\ref{tab:heston_ns_u_tc_mc_optimal}).

\begin{figure}[H]
\centering
\includegraphics[width=\textwidth]{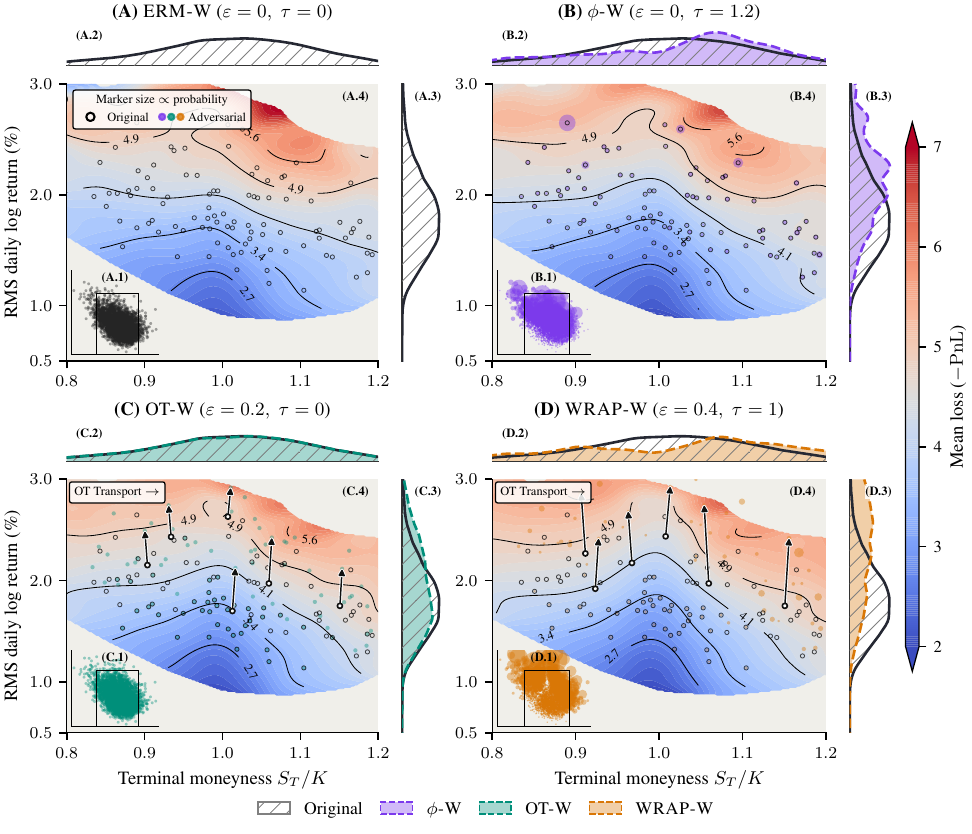}
\caption{Loss landscapes across deep-hedging training schemes from Table~\ref{tab:short-nonstationary}: (A) \textsc{ERM-W}, (B) $\phi$\textsc{-W}, (C) \textsc{OT-W}, and (D) \textsc{WRAP-W}. Each surface depicts the mean hedging loss of a separately trained network on the same reference trajectories $X_n$. The axes show terminal moneyness and the root mean square (RMS) of daily log returns in percent. Markers represent reference paths and adversarial trajectories $\widehat X_n$, with sizes proportional to their respective probabilities $w_n$ and $\widehat w_n$. Arrows indicate selected transport displacements. Within each main panel, subpanels 1--4 report the full feature range, the moneyness marginal, the log-return marginal, and the loss landscape.}
\label{fig:heston-path-distributions}
\end{figure}

Figure~\ref{fig:heston-path-distributions} shows how the two adversarial mechanisms reshape the training distribution and the resulting loss landscape. Panel~(A) shows that under \textsc{ERM-W}, highly volatile paths generate a broad region of high losses. In Panel~(B), $\phi$\textsc{-W} leaves the trajectories unchanged but reallocates probability toward adverse scenarios, particularly in regions of high realized variation (B.1). This concentration is also visible in the moneyness and log-return marginals (B.2--B.3), where probability mass shifts toward selected groups of adverse paths. Panel~(C) isolates the effect of transport. The selected displacements mainly increase realized price variation while inducing comparatively small changes in terminal moneyness (C.1). For a European call, these perturbations stress the sequence of trading gains without requiring a comparable change in the terminal payoff. Accordingly, the marginals remain relatively close to the reference distribution (C.2--C.3).

Panel~(D) combines both mechanisms. At the selected budgets (\textit{cf.}~Table~\ref{tab:short-nonstationary}), \textsc{WRAP-W} reallocates probability toward adverse scenarios while simultaneously transporting trajectories toward locally unfavorable regions (D.1--D.3). The corresponding loss landscape (D.4), together with that of \textsc{OT-W}, exhibits a smaller high-loss region around the strike, although highly volatile paths remain difficult to hedge. Since all loss surfaces are evaluated on the same reference trajectories, these differences reflect the learned hedging policies. The figure thus provides a qualitative counterpart to Theorem~\ref{thm:two_budget_ot_phi_taylor}: reweighting acts on loss dispersion across scenarios, whereas transport acts on local sensitivity to perturbations within trajectories.

Figure~\ref{fig:attack-anatomy-main} contrasts reweighting and transport at the level of individual trajectories: the highest-weight path differs from the path most strongly displaced by optimal transport. Reweighting emphasizes high-loss scenarios, whereas transport primarily increases realized variation. Figures~\ref{fig:heston-nonstationary-uniform-policy}--\ref{fig:heston-nonstationary-policy} show the corresponding hedging policies, and Appendix~\ref{subsec:attack-anatomy} provides further details.

\begin{figure}[H]
\centering
\includegraphics[width=\textwidth]{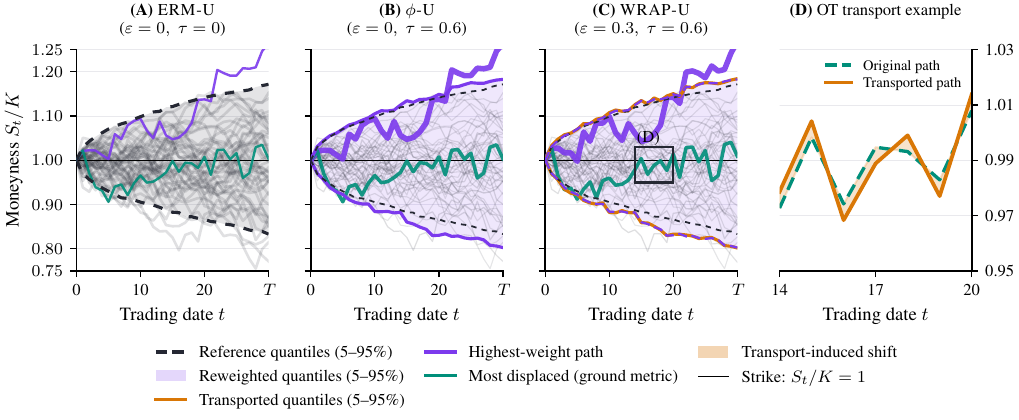}
\caption{Pathwise effects in nonstationary Heston:
(A) \textsc{ERM-U}, (B) $\phi$\textsc{-U}, (C) \textsc{WRAP-U}, and (D) magnified view of the original and transported paths highlighted in (C). Panels~(A)--(C) show sampled trajectories and highlight the highest-weight and most-displaced paths.}
\label{fig:attack-anatomy-main}
\end{figure}

\subsection{Market-Data Experiments}
\label{sec:gad_market_descriptive_stats}
We test whether adversarial training improves robustness to changes in market conditions estimated from real equity prices. Following the GAD benchmark of \citet{he2025distributional} and the parameter-uncertainty framework of \citet{lutkebohmert2022robust}, we hedge a 30-step arithmetic Asian call on AAPL, AMZN, BRK-B, GOOGL, and MSFT under an entropic risk objective. Training, validation, and test trajectories are simulated from GAD models fitted separately to prices from 2008--2019, 2020--2023, and 2024--2025. We consider two training specifications: \textsc{Fix} uses the final parameter estimate from the training period, whereas \textsc{Robust} resamples complete rolling estimates at each trading step. Unlike stress-test designs centered on exogenously imposed crash scenarios, we ask whether adversarial training remains beneficial under empirically estimated regime shifts or becomes overly conservative relative to the risk objective. The historical training and validation windows exhibit more severe drawdowns than the test window (\textit{cf.}~Appendix~\ref{subsec:market-results} and Table~\ref{tab:gad_market_descriptive_stats}).

\textbf{Results.}~\textsc{WRAP-U} attains lower validation and test entropic risk than either single-budget method across all ten stock-training-scheme combinations. Relative to \textsc{ERM-U}, \textsc{WRAP-U} lowers test entropic risk in six of ten comparisons, while in three additional cases coinciding with \textsc{ERM-U}. Table~\ref{tab:gad_synthetic_validation_optimal} reports the complete results. This indicates that joint adversarial training can add robustness beyond parameter uncertainty in the training generator, although the gains remain stock dependent.
\vspace{0mm}
\section{Conclusion}
\label{sec:conclusion}
We develop a drift-aware distributionally robust framework for deep hedging under potential shifts in market regimes. The framework separates the construction of the reference distribution from the subsequent robustification step. Non-uniform baseline weights balance sampling precision against temporal misspecification from older observations, while separate \(\phi\)-divergence and optimal-transport budgets capture uncertainty in scenario probabilities and market trajectories, respectively.

We establish finite-sample coverage guarantees for the weighted empirical law and derive a joint small-budget expansion of the robust hedging objective. The first-order correction separates scenario-reweighting effects, governed by loss dispersion across trajectories, from transport effects, governed by sensitivity to path perturbations.

\noindent\textbf{Limitations and Future Research.}
The baseline weights considered in this paper depend only on the temporal ordering of the observations. A richer formulation could condition them on observable market states or latent regimes, for example through context-dependent optimization or a hidden Markov model, so that historical trajectories are weighted by economic similarity as well as temporal proximity. An adaptive extension could jointly infer the prevailing regime, update the baseline weights, and estimate the drift bound \(\varrho\) as new observations arrive.

The transport and reweighting budgets are currently selected through validation. A more principled calibration could determine $\varepsilon$ from the drift-aware coverage guarantee and relate $\tau$ to finite-sample uncertainty in the weighted scenario probabilities. Finally, a sufficiently large panel of realized market trajectories would permit direct training and evaluation without first fitting a parametric data-generating model.

\newpage

\subsection*{AI use statement}
In this work, we used generative AI tools to aid and polish the writing. In particular, we used these tools to cross-validate the consistency of mathematical notation throughout the paper and to verify that the paper's descriptions match the code implementation as closely as possible, ensuring readability. Additionally, upon finalization of the paper, AI was utilized to discover additional related works that were unknown to the authors during development, which were manually verified before inclusion in the related works section. We have reviewed all AI-assisted work and take responsibility for the final content of this work, including text, claims, or artifacts produced with the aid of generative AI.

\subsection*{Acknowledgments}

This work was supported as part of NCCR Automation, a National Centre
of Competence in Research, funded by the Swiss National Science Foundation (grant number 51NF40\_225155).

\bibliography{iclr2027_conference}
\bibliographystyle{iclr2027_conference}

\appendix

\newpage

\addtocontents{toc}{\protect\setcounter{tocdepth}{3}}
\renewcommand{\contentsname}{Appendix}
\tableofcontents
\newpage

\section{Extended Related Work}
\label{app:extended-related-work}

\textit{Deep hedging and model uncertainty.}
Deep hedging formulates derivative risk management as a model-free, data-driven sequential decision problem \citep{buehler2019deep}; related reviews include \citet{ruf2020neural} on neural networks for pricing and \citet{hambly2023recent} on reinforcement learning in finance. Classical robust hedging optimizes over families of market dynamics, including uncertain-volatility models \citep{avellaneda1995,lyons1995uncertain}. Recent work studies robustness in deep hedging under parameter uncertainty, alternative data-generating processes, generative adversaries, and distributional perturbations \citep{lutkebohmert2022robust,wu2023robust,limmer2024robust,he2025distributional,wiese2020quant,cont2026tailgan}. We extend this literature by adapting the reference law to temporal drift and distinguishing uncertainty about scenario probabilities from uncertainty about market trajectories.

\textit{Distributionally robust optimization and adversarial training.}
Distributionally robust optimization protects decisions against perturbations of a reference law. Wasserstein ambiguity sets transport probability mass according to a ground metric and may therefore generate adverse distributions whose support differs from that of the empirical law \citep{mohajerin2018data,gao2023distributionally,kuhn2025distributionally}. This construction is particularly well suited to trajectory-valued data because the transport cost can encode temporal and component structure \citep{blanchet2019quantifying}. It also provides a distributional interpretation of adversarial training \citep{sinha2018certifying}. First-order sensitivity analysis yields tractable approximations of worst-case objectives and identifies locally adverse perturbation directions \citep{Bartl2021Sen, gao2022Wasserstein}; related methods have been developed for neural networks and deep hedging \citep{bai2023wasserstein,he2025distributional}.

By contrast, \(\phi\)-divergence ambiguity sets preserve the support of the reference law and alter only the relative probability masses assigned to its outcomes \citep{ben2013robust,bayraksan2015data}. They are connected to statistical calibration, variance regularization, and robustness to shifts in the prevalence of high-loss observations or subpopulations \citep{lam2019recovering,duchi2021statistics,van2021data,namkoong2016stochastic, lam2016robust,duchi2023distributionally}. Wasserstein and \(\phi\)-divergence ambiguity sets therefore address distinct forms of misspecification: the former changes the outcomes themselves according to a ground geometry, whereas the latter changes their relative likelihoods.

\noindent Several approaches combine likelihood and optimal-transport perturbations. Holistic robustness protects against statistical error, diffuse data perturbations, and contamination, with a corresponding specialization to neural networks \citep{bennouna2022holistic,bennouna2023certified}. Joint OT--divergence constructions include \citet{blanchet2023unifying} and the smoothed \(\phi\)-divergences of \citet{liu2023smoothed}; our ambiguity set can be viewed as a drift-aware, weighted path-space analogue of the latter, with separate transport and divergence budgets. \textsc{Armor} instead combines the two costs in a single infimal convolution of OT and an information divergence and develops a global adversarial-training method \citep{birrell2025optimal}, whereas Sinkhorn DRO entropy-regularizes the optimal-transport plan \citep{wang2025sinkhorn}. In contrast to these works, we derive a joint small-budget expansion for a sequential, path-dependent objective and construct an explicit transport-and-reweighting attack that is first-order accurate for the full DRO problem.

\textit{Nonstationarity and weighted empirical laws.}
Decision-making under nonstationarity must balance the statistical benefit of additional observations against the misspecification introduced by data generated under earlier distributions \citep{besbes2015non}. \citet{keehan2025don} provide the closest statistical precedent by centering Wasserstein ambiguity sets at weighted empirical laws and separating effective sample size from temporal drift. We integrate this principle with deep hedging and a structured path geometry, and extend it with adversarial probability reweighting under a two-budget training framework.

\section{Technical Assumptions and Definitions}
\label{app:technical-foundations}

This appendix records the assumptions and definitions used in Sections~\ref{sec:short-setup} and~\ref{sec:short-two-budget}. The concentration and sensitivity results built from these conditions are stated and proved in Appendix~\ref{app:proofs}.

\subsection{Market Measurability and Payoff Regularity}
\label{app:additional-market-regularity-details}

We use the market setup and hedging strategy from Section~\ref{sec:short-setup}. All stochastic processes are defined on a filtered probability space $(\Omega,\mathcal F,(\mathcal F_t)_{t=0}^T,\mathbb P)$. The observed process $X$ is adapted to this filtration, and hence $\sigma(X_0,\dots,X_t)\subseteq\mathcal F_t$ for $t=0,\dots,T$. Since each $\delta_t^\theta$ is measurable, the holdings $\delta_t^\theta(X_{0:t})$ are $\mathcal F_t$-measurable, and the terminal PnL in \eqref{eq:pnl} is $\mathcal F_T$-measurable. The negative payoff term $\Phi(X)$ corresponds to a short position in the claim; examples of the payoffs used in the experiments are given in Appendix~\ref{app:claims}.

\begin{remark}[Payoff and loss regularity]
\label{rem:payoff-regularity}
Many vanilla payoffs, such as calls or puts, are globally Lipschitz in $S_T$ and hence in $X$. If differentiability everywhere is desired, kinked terms such as $\max(\cdot,0)$ or $|\cdot|$ can be replaced by smooth approximations (\textit{e.g.}~softplus for the positive part). The same approach applies to nonsmooth OCE losses, such as the CVaR loss, while preserving convexity, monotonicity, and the OCE normalization. These approximations preserve the structure of the DRO formulation.
\end{remark}

\subsection{Drift-Aware Coverage Assumptions}
\label{subsec:drift-aware-dro}
We restate the assumptions from Section~\ref{sec:short-setup} and record their interpretation and the coupling form used in the proofs.

\CoveragePathSpace*

\begin{remark}
Assumption~\ref{ass:path-space} is satisfied when all $\lambda_i$ and $\kappa_t$ are positive and \(\Xi\) is compact. In implementations, compactness corresponds to working with normalized and bounded features or clipping extreme inputs. The Polish-space condition ensures that the Wasserstein ambiguity sets and couplings are well defined, while the finite-diameter condition is used in the concentration bounds and sensitivity analysis.
\end{remark}

% ------------------------------------------------------------
\NonstationarySamplingDrift*

\begin{remark}
Following the nonstationary Wasserstein DRO model of \citet{keehan2025don}, the drift condition in Assumption~\ref{ass:nonstationary-drift} can equivalently be written in coupling form: for each \(n=1,\dots,N\), there exists \(\pi_n\in\Pi(P_n,P_{n+1})\) such that \(d_{\kappa,\lambda}(X,\widehat X)\le\varrho\) \(\pi_n\)-almost surely. Here \(\Pi(P,Q)\) denotes the set of probability measures on \(\Xi\times\Xi\) with marginals \(P\) and \(Q\); see Appendix~\ref{app:wasserstein_distances}. The proof of the drift-aware concentration result uses this coupling form directly.
\end{remark}

% ------------------------------------------------------------
\subsection{Two-Budget Loss Regularity}
\label{sec:two-budget-adversarial}
\label{subsec:transport-geometry-regularity}

This section records the loss-regularity conditions underlying the two-budget formulation and first-order analysis in Section~\ref{sec:short-two-budget}.

Let $\|\cdot\|_{\kappa,\lambda,p}^\ast$ and the maximizing map $h$ be as defined in Proposition~\ref{prop:dual-geometry-main}. Recall $L_n(\vartheta)=\ell_{\mathrm{DH}}(\vartheta;X_n)$ and $g_n(\vartheta)=\nabla_X\ell_{\mathrm{DH}}(\vartheta;X_n)$. The following assumption specifies the regularity and local feasibility conditions required for the first-order analysis.

\LossRegularity*

\begin{remark}
Assumption~\ref{ass:loss-regularity} is local at the observed trajectories: the differentiability and feasibility requirements are imposed only at the empirical support. The boundedness condition is global and is used to control transport mass that may move away from that support. The feasibility condition ensures that the explicit perturbations in \eqref{eq:structured-perturbation-short} remain in \(\Xi\) for sufficiently small transport radii.
\end{remark}

\section{Proofs}
\label{app:proofs}

This appendix contains the proofs of the theoretical results stated in the paper. The arguments follow the logical development of the paper: metric and dual-norm properties of the weighted trajectory distance, drift-aware coverage, and first-order sensitivity expansions for the Wasserstein and two-budget ambiguity sets. For ease of reference, each claim is restated before its proof.

\subsection{Weighted Trajectory Metric and Duality}
\label{app:wasserstein}

This section develops the norm-duality structure underlying the Wasserstein perturbation analysis. After recalling Wasserstein distances on the trajectory space, we specialize the ground metric to the weighted trajectory distance used in the paper. We then identify its induced norm, dual norm, and associated maximizer for the linearized perturbation problem that determines the first-order adversarial direction.
% ------------------------------------------------------------
\subsubsection{Wasserstein Distances on Path Space}
\label{app:wasserstein_distances}
We recall the Wasserstein distance for a generic ground metric before specializing to the weighted trajectory distance. Let \((\Xi,d)\) be a Polish metric space and let \(\mathcal{P}(\Xi)\) denote the set of Borel probability measures on \(\Xi\). For \(p\in[1,\infty)\), the \(p\)-Wasserstein distance between \(\mu,\nu\in\mathcal{P}(\Xi)\) is
\begin{equation}
\label{eq:Wp_def_app}
W_p(\mu,\nu)
\;\coloneqq\;
\left(
\inf_{\pi\in\Pi(\mu,\nu)}
\int_{\Xi\times\Xi} d(x,\widehat x)^p\, \pi(d x, d\widehat x)
\right)^{1/p},
\end{equation}
where \(\Pi(\mu,\nu)\) is the set of couplings of \(\mu\) and \(\nu\).
For \(p=\infty\), the \(\infty\)-Wasserstein distance is
\begin{equation}
\label{eq:Winfty_def_app}
W_\infty(\mu,\nu)
\;\coloneqq\;
\inf_{\pi\in\Pi(\mu,\nu)}
\operatorname*{ess\,sup}_{(x,\widehat x)\sim\pi} d(x,\widehat x).
\end{equation}

\paragraph{Standing convention.}
Unless stated otherwise, all Wasserstein distances in this appendix are computed with the weighted trajectory ground metric \(d_{\kappa,\lambda}\) defined in \eqref{eq:kappa-lambda-metric}.
% ============================================================
\subsubsection{Norm Representation, Dual Norm, and Maximizer Map}
\label{app:dual_norm_h}

We restate and prove here the results about the dual geometry.

\WeightedPathDualGeometry*
\begin{proof}[Proof of Proposition~\ref{prop:dual-geometry-main}]
\textit{Primal norm.}\ 

Homogeneity and definiteness follow directly from the positivity of the weights
\(\kappa_t\) and \(\lambda_i\). For the triangle inequality, define, for each component \(i\),
\[
\|\Delta X^i\|_{\infty,\kappa}
\coloneqq
\max_{t=0,\dots,T}\kappa_t|\Delta X_t^i|,
\qquad
\Delta X^i=(\Delta X_0^i,\dots,\Delta X_T^i).
\]
The weighted sup-norm satisfies
\[
\|\Delta X^i+\Delta Y^i\|_{\infty,\kappa}
\le \|\Delta X^i\|_{\infty,\kappa}
+ \|\Delta Y^i\|_{\infty,\kappa}.
\]
Therefore,
\[
\begin{aligned}
\|\Delta X+\Delta Y\|_{\kappa,\lambda,p}
&=
\left(
\sum_{i=1}^d
\left(
\lambda_i
\|\Delta X^i+\Delta Y^i\|_{\infty,\kappa}
\right)^p
\right)^{1/p} \\
&\le
\left(
\sum_{i=1}^d
\left(
\lambda_i\|\Delta X^i\|_{\infty,\kappa}
+
\lambda_i\|\Delta Y^i\|_{\infty,\kappa}
\right)^p
\right)^{1/p}.
\end{aligned}
\]
Applying Minkowski's inequality to the outer \(\ell_p\) aggregation, with
\(a_i=\lambda_i\|\Delta X^i\|_{\infty,\kappa}\) and
\(b_i=\lambda_i\|\Delta Y^i\|_{\infty,\kappa}\), gives
\[
\begin{aligned}
\|\Delta X+\Delta Y\|_{\kappa,\lambda,p}
&\le
\left(
\sum_{i=1}^d
\left(
\lambda_i\|\Delta X^i\|_{\infty,\kappa}
\right)^p
\right)^{1/p}
+
\left(
\sum_{i=1}^d
\left(
\lambda_i\|\Delta Y^i\|_{\infty,\kappa}
\right)^p
\right)^{1/p} \\
&=
\|\Delta X\|_{\kappa,\lambda,p}
+
\|\Delta Y\|_{\kappa,\lambda,p}.
\end{aligned}
\]
Thus the triangle inequality holds, and \(\|\cdot\|_{\kappa,\lambda,p}\) is a norm.

\smallskip\noindent\textit{Dual norm.} For \(G,\Delta X\in\mathbb{R}^{d(T+1)}\), write
\[
\langle G,\Delta X\rangle
\coloneqq
\sum_{i=1}^d\sum_{t=0}^T G_t^i\Delta X_t^i.
\]
Let \(q\) be the H\"older conjugate of \(p\in(1,\infty)\), so that \(1/p+1/q=1\).

For each component \(i\), define
\(b_i\coloneqq \sum_{t=0}^T |G_t^i|/\kappa_t\). For any
\(\Delta X\in\mathbb{R}^{d(T+1)}\), the componentwise pairing satisfies
\[
\sum_{t=0}^T G_t^i\Delta X_t^i
\le
\sum_{t=0}^T |G_t^i|\,|\Delta X_t^i|
\le
\|\Delta X^i\|_{\infty,\kappa}\, b_i,
\]
where
\(\|\Delta X^i\|_{\infty,\kappa}\coloneqq\max_{t=0,\dots,T}\kappa_t|\Delta X_t^i|\).
Summing over \(i\), we obtain
\[
\langle G,\Delta X\rangle
\le
\sum_{i=1}^d
\left(\lambda_i\|\Delta X^i\|_{\infty,\kappa}\right)
\frac{b_i}{\lambda_i}.
\]
H\"older's inequality in \(\mathbb{R}^d\) gives
\[
\langle G,\Delta X\rangle
\le
\|\Delta X\|_{\kappa,\lambda,p}
\left(
\sum_{i=1}^d
\left(\frac{b_i}{\lambda_i}\right)^q
\right)^{1/q}.
\]
Taking the supremum over all \(\Delta X\) with
\(\|\Delta X\|_{\kappa,\lambda,p}\le 1\) yields
\begin{equation}
\label{eq:dual_norm_upper_bound}
\|G\|_{\kappa,\lambda,p}^\ast
\le
\left(
\sum_{i=1}^d
\left(
\sum_{t=0}^T
\frac{|G_t^i|}{\lambda_i\kappa_t}
\right)^q
\right)^{1/q}.
\end{equation}

It remains to show that the upper bound is tight. Since the dual norm is the
supremum over all feasible perturbations, it is at least the supremum over any
feasible subclass. Choose
\(\Delta X_t^i=\operatorname{sgn}(G_t^i)\alpha_i/\kappa_t\), with
\(\alpha_i\ge 0\). Then
$\|\Delta X^i\|_{\infty,\kappa}\le\alpha_i$,
$\|\Delta X\|_{\kappa,\lambda,p}
\le(\sum_{i=1}^d(\lambda_i\alpha_i)^p)^{1/p}$, and
\(\langle G,\Delta X\rangle=\sum_{i=1}^d\alpha_i b_i\). Hence
\[
\|G\|_{\kappa,\lambda,p}^\ast
\ge
\sup_{\left(\sum_{i=1}^d(\lambda_i\alpha_i)^p\right)^{1/p}\le 1}
\sum_{i=1}^d \alpha_i b_i.
\]
Setting \(c_i=\lambda_i\alpha_i\), the right-hand side becomes
\[
\sup_{\|c\|_p\le 1}
\sum_{i=1}^d c_i\frac{b_i}{\lambda_i}
=
\left(
\sum_{i=1}^d
\left(\frac{b_i}{\lambda_i}\right)^q
\right)^{1/q},
\]
by the standard duality between \(\ell_p\) and \(\ell_q\). Substituting
\(b_i=\sum_{t=0}^T |G_t^i|/\kappa_t\) gives
\begin{equation}
\label{eq:dual_norm_lower_bound}
\|G\|_{\kappa,\lambda,p}^\ast
\ge
\left(
\sum_{i=1}^d
\left(
\sum_{t=0}^T
\frac{|G_t^i|}{\lambda_i\kappa_t}
\right)^q
\right)^{1/q}.
\end{equation}
Combining (\ref{eq:dual_norm_upper_bound}) and
(\ref{eq:dual_norm_lower_bound}) proves
(\ref{eq:dual_norm_main}).

\smallskip\noindent\textit{Maximizer map.} The maximizer map \(h\) is applied to dual variables, such as pathwise gradients
\(G\in\mathbb{R}^{d(T+1)}\), rather than to trajectories. For
\(G\in\mathbb{R}^{d(T+1)}\), define
\(a_i(G)\coloneqq \sum_{t=0}^T |G_t^i|/(\lambda_i\kappa_t)\) and
\(A(G)\coloneqq (\sum_{j=1}^d a_j(G)^q)^{1/q}\). By
(\ref{eq:dual_norm_main}), \(A(G)=\|G\|_{\kappa,\lambda,p}^\ast\).

For
\(G\neq 0\), define
\(r_i(G)\coloneqq a_i(G)^{q-1}/A(G)^{q-1}\) and set
\begin{equation}
\label{eq:h_explicit_app}
h(G)_t^i
\coloneqq
\frac{r_i(G)}{\lambda_i\kappa_t}\operatorname{sign}(G_t^i),
\qquad
i=1,\dots,d,\quad t=0,\dots,T.
\end{equation}
For \(G=0\), we set
\(h(0)=0\).

By construction, $\kappa_t|h(G)_t^i|\le r_i(G)/\lambda_i$. If $r_i(G)>0$, some $G_t^i$ is nonzero, so equality is attained at that date; if $r_i(G)=0$, the entire component vanishes. Thus $\lambda_i\max_t\kappa_t|h(G)_t^i|=r_i(G)$. Hence
\[
\|h(G)\|_{\kappa,\lambda,p}
=
\left(
\sum_{i=1}^d r_i(G)^p
\right)^{1/p}.
\]
From \(r_i(G)=a_i(G)^{q-1}/A(G)^{q-1}\) and the identity
\((q-1)p=q\), we obtain
\[
\sum_{i=1}^d r_i(G)^p
=
\frac{\sum_{i=1}^d a_i(G)^q}{A(G)^q}
=
1,
\]
so \(\|h(G)\|_{\kappa,\lambda,p}=1\).

The same construction attains the dual norm. Substituting \eqref{eq:h_explicit_app} into the pairing gives
\[
\langle G,h(G)\rangle
=
\sum_{i=1}^d r_i(G)
\sum_{t=0}^T
\frac{|G_t^i|}{\lambda_i\kappa_t}
=
\sum_{i=1}^d r_i(G)a_i(G).
\]
Substituting the definition of \(r_i(G)\), this becomes
\[
\sum_{i=1}^d r_i(G)a_i(G)
=
\frac{\sum_{i=1}^d a_i(G)^q}{A(G)^{q-1}}
=
A(G).
\]
Since \(A(G)=\|G\|_{\kappa,\lambda,p}^\ast\) by
(\ref{eq:dual_norm_main}), we conclude that
\(\langle G,h(G)\rangle=\|G\|_{\kappa,\lambda,p}^\ast\).
\end{proof}
% ============================================================
\subsection{Drift-Aware Coverage and Calibration}
\label{app:proofs-drift-aware-wasserstein-dro}

This section establishes the drift-aware concentration result underlying Section~\ref{subsec:short-coverage} and its calibration consequences. The analysis separates statistical sampling error, controlled by the effective sample size \(N_{\mathrm{eff}}(w)\), from temporal drift, controlled by the cumulative drift term \(D_p(w)\varrho\). We first derive the stationary weighted concentration bound, then transfer it to the nonstationary setting through a coupling argument, and finally obtain the calibrated ambiguity radius and the corresponding next-period risk guarantee.

We model nonstationarity as in Assumption~\ref{ass:nonstationary-drift}: the observed trajectories are independent, \(X_n\sim P_n\), and consecutive laws admit couplings with \(d_{\kappa,\lambda}\)-distance at most \(\varrho\) almost surely. In particular, Assumption~\ref{ass:nonstationary-drift} implies \(W_\infty(P_n,P_{n+1})\le\varrho\) for \(n=1,\dots,N\).

Recall from Section~\ref{subsec:short-coverage} the effective sample size and weighted cumulative drift index,
\begin{equation}
\label{eq:Neff_app}
N_{\mathrm{eff}}(w)\coloneqq\frac{1}{\sum_{n=1}^N w_n^2},
\qquad
D_p(w)\coloneqq\left(\sum_{n=1}^N w_n (N-n+1)^p\right)^{1/p}.
\end{equation}
Here \(N_{\mathrm{eff}}(w)\) captures the statistical information retained by the weighted sample, while \(D_p(w)\) captures the exposure of the weighting scheme to temporal drift toward the next-period law.

\subsubsection{Weighted Empirical Concentration}
We first isolate the sampling error from the drift error by considering the stationary benchmark in which \(X_1,\ldots,X_N\) are i.~i.~d.~with law $P$. The following two results show that the weighted empirical measure concentrates around \(P\) at a rate governed by \(N_{\mathrm{eff}}(w)\), rather than by the nominal sample size \(N\). Recall the definition of the simplex  $\Delta_N\!\coloneqq\!\{w\!\in\!\mathbb R_+^N:\sum_{n=1}^Nw_n\!=\!1\}.$

\begin{lemma}[Expectation bound with effective sample size]
\label{lem:expectation-bound}
Suppose Assumption~\ref{ass:path-space} holds. For
\(p\in(1,\infty)\) and \(P\in\mathcal P(\Xi)\), let \(X_1,\dots,X_N\sim P\) be i.i.d., and define
\(\mu_w\coloneqq\sum_{n=1}^N w_n\delta_{X_n}\) for \(w\in\Delta_N\). Then
there exist constants \(c_0>0\) and \(\nu\in(0,1/2)\), depending only on the
ambient dimension \(d(T+1)\), \(p\), and
\(\operatorname{diam}_{\kappa,\lambda}(\Xi)\), such that
\[
\mathbb E\!\left[
W_p(\mu_w,P)^p
\right]
\le
c_0 N_{\mathrm{eff}}(w)^{-\nu}.
\]
\end{lemma}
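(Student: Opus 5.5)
The plan is a weighted version of the standard multiscale (dyadic-partition) bound for empirical Wasserstein convergence on a bounded subset of $\mathbb R^{D}$, $D\coloneqq d(T+1)$. The weighted sum $\sum_n w_n\delta_{X_n}$ plays the role of the uniform empirical measure, and variances scale with $\sum_n w_n^2=N_{\mathrm{eff}}(w)^{-1}$ in place of $1/N$.

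\textbf{Step 1: reduce to the Euclidean box.} Since $d_{\kappa,\lambda}$ is induced by the norm $\|\cdot\|_{\kappa,\lambda,p}$ of Proposition~\ref{prop:dual-geometry-main}, and all norms on $\mathbb R^{D}$ are equivalent, it suffices to work with the sup-norm, absorbing the equivalence constants. Assumption~\ref{ass:path-space} gives finite diameter, so after translation and rescaling $\Xi$ sits inside a cube $Q_0$ of side $R$. The constants then depend on $D$, $p$, $\operatorname{diam}_{\kappa,\lambda}(\Xi)$, and on $\kappa,\lambda$ through the norm-equivalence constants. I would either absorb the latter into the diameter or note the dependence explicitly.

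\textbf{Step 2: deterministic multiscale bound.} For $\ell\ge0$, let $\mathcal Q_\ell$ be the partition of $Q_0$ into $2^{D\ell}$ dyadic subcubes of side $2^{-\ell}R$. The standard coupling argument (Dereich--Scheutzow--Schottstedt, Fournier--Guillin) gives, for any two probability measures $\mu,\nu$ on $Q_0$ and any depth $m$,
\[
W_p(\mu,\nu)^p\le C_{D,p}R^p\Big(2^{-pm}+\sum_{\ell=0}^{m}2^{-p\ell}\sum_{F\in\mathcal Q_{\ell+1}}|\mu(F)-\nu(F)|\Big).
\]
The coupling transports mass greedily within parent cubes at each scale.

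\textbf{Step 3: take expectations.} Apply Step 2 with $\mu=\mu_w$ and $\nu=P$. For each cube $F$, $\mu_w(F)-P(F)=\sum_n w_n(\mathbf 1_F(X_n)-P(F))$ is a sum of independent centered terms. Hence
\[
\mathbb E|\mu_w(F)-P(F)|\le\Big(P(F)\sum_n w_n^2\Big)^{1/2}=\big(P(F)/N_{\mathrm{eff}}(w)\big)^{1/2},
\]
and also $\mathbb E|\mu_w(F)-P(F)|\le 2P(F)$. By Cauchy--Schwarz over the $2^{D(\ell+1)}$ cubes of level $\ell+1$, the level-$\ell$ sum is at most $\min\{2,(2^{D(\ell+1)}/N_{\mathrm{eff}})^{1/2}\}$. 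This yields
\[
\mathbb E W_p(\mu_w,P)^p\le C\Big(2^{-pm}+\sum_{\ell\le m}2^{-p\ell}\min\big\{1,2^{D\ell/2}N_{\mathrm{eff}}^{-1/2}\big\}\Big).
\]

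\textbf{Step 4: optimise over the depth $m$.} Choose $m$ with $2^{m}\approx N_{\mathrm{eff}}^{1/D}$. This gives the familiar rates: $N_{\mathrm{eff}}^{-1/2}$ if $p>D/2$, $N_{\mathrm{eff}}^{-1/2}\log N_{\mathrm{eff}}$ if $p=D/2$, and $N_{\mathrm{eff}}^{-p/D}$ if $p<D/2$. In every case the bound is at most $c_0N_{\mathrm{eff}}^{-\nu}$ for some $\nu\in(0,1/2)$; for instance, take $\nu=\min\{p/D,1/4\}$ and absorb the logarithm into $c_0$. Since $N_{\mathrm{eff}}\ge1$ and $W_p^p\le\operatorname{diam}^p$, small $N_{\mathrm{eff}}$ is covered by enlarging $c_0$.

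\textbf{Main obstacle.} The main obstacle is Step 2: verifying that the multiscale coupling bound holds verbatim for arbitrary probability measures, so that it applies to the weighted measure $\mu_w$, and tracking constants. I would first try to cite Fournier--Guillin's Lemma~5 directly, which is stated for general measures on a cube. The remaining work is then only the weighted variance computation in Step 3.
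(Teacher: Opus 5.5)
Your proof is correct, but it follows a different route from the paper's.

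\textbf{The paper's route.} The paper never reopens the dyadic construction. It uses exchangeability to sort the weights and writes $\mu_w=\sum_k k(w_k-w_{k+1})\,\bar P_k$ as a convex combination of the nested uniform empirical measures $\bar P_k=\frac1k\sum_{i\le k}\delta_{X_i}$. It then combines convexity of $Q\mapsto W_p(Q,P)^p$ with the i.i.d.\ rate $\mathbb E\,W_p(\bar P_k,P)^p\le Ck^{-\nu}$, taken as a black box. Finally, it shows $\sum_k k^{-\nu}w_k\le C_\nu N_{\mathrm{eff}}(w)^{-\nu}$ by summation by parts and a head/tail split at $\lfloor N_{\mathrm{eff}}(w)\rfloor$ with Cauchy--Schwarz.

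\textbf{Your route.} You rerun the multiscale argument itself and replace $1/N$ by $\sum_n w_n^2$ in the cellwise variance. The deterministic dyadic coupling bound you cite holds for arbitrary pairs of probability measures on a cube, so applying it to $\mu_w$ is legitimate.

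\textbf{What each buys.} The paper's reduction is modular: it transfers any known i.i.d.\ rate, for any discrepancy convex in the measure. It is, however, lossy at the endpoint. With the rate $k^{-1/2}$, the reduced quantity $\sum_k k^{-1/2}w_k$ can exceed $N_{\mathrm{eff}}(w)^{-1/2}$ by a factor of order $\sqrt{\log N_{\mathrm{eff}}(w)}$, for example when $w_k\propto k^{-1/2}$. This is why the paper's head estimate requires $\nu<1/2$. Your direct computation gives the sharp i.i.d.\ rates with $N$ replaced by $N_{\mathrm{eff}}(w)$, e.g.\ $N_{\mathrm{eff}}(w)^{-1/2}$ when $p>D/2$ with $D=d(T+1)$. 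The price is that you reprove the rate rather than cite it.

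\textbf{The $\kappa,\lambda$-dependence in Step~1.} This disappears if you rescale coordinates by $y_t^i=\lambda_i\kappa_t x_t^i$. The map is an isometry from $(\Xi,d_{\kappa,\lambda})$ onto a set normed by $(\sum_i\max_t|y_t^i|^p)^{1/p}$, and this norm lies between $\|y\|_\infty$ and $d^{1/p}\|y\|_\infty$. Hence all constants depend only on $D$, $p$ and $\operatorname{diam}_{\kappa,\lambda}(\Xi)$, as the statement claims. The paper's own proof passes over this point with ``up to a change of constants''.
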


\begin{proof}[Proof of Lemma~\ref{lem:expectation-bound}]
The proof reduces the weighted empirical measure to a convex combination of ordinary empirical measures and then uses the standard Wasserstein convergence rate for i.i.d.~samples.

Since \(X_1,\dots,X_N\) are i.i.d., the law of
\(\sum_{n=1}^N w_n\delta_{X_n}\) is invariant under permutations of the weights. Since \(N_{\mathrm{eff}}(w)\) is also permutation-invariant, it is enough to prove the claim for weights sorted in nonincreasing order. We therefore write \(w_1\ge\cdots\ge w_N\) and set \(w_{N+1}=0\).
For \(k=1,\dots,N\), define 
\[
\bar P_k \coloneqq \frac1k\sum_{i=1}^k \delta_{X_i},
\qquad
\alpha_k\coloneqq k(w_k-w_{k+1}).
\]
Then \(\mu_w\) admits the decomposition $\mu_w =\sum_{k=1}^N \alpha_k \bar P_k$. 
Indeed, \(\alpha_k\ge0\) and
\[
\sum_{k=1}^N\alpha_k
=\sum_{k=1}^N k(w_k-w_{k+1})
=\sum_{k=1}^N w_k
=1,
\]
so this is a convex combination. Moreover, the coefficient of \(\delta_{X_i}\)
in \(\sum_{k=1}^N\alpha_k\bar P_k\) is
\[
\sum_{k=i}^N \frac{\alpha_k}{k}
=\sum_{k=i}^N (w_k-w_{k+1})
=w_i.
\]
Hence the decomposition is exact.

The map \(Q\mapsto W_p(Q,P)^p\) is convex in its first argument. Therefore,
\[
W_p(\mu_w,P)^p
\le
\sum_{k=1}^N \alpha_k W_p(\bar P_k,P)^p.
\]
Taking expectations gives
\[
\mathbb E\!\left[W_p(\mu_w,P)^p\right]
\le
\sum_{k=1}^N \alpha_k
\mathbb E\!\left[W_p(\bar P_k,P)^p\right].
\]
Since \(\Xi\) has finite diameter under \(d_{\kappa,\lambda}\), standard empirical Wasserstein convergence bounds on compact subsets of finite-dimensional normed spaces, such as \citet{Fournier2015on}, imply that there exist constants
\(C>0\) and \(\nu\in(0,1/2)\) such that
\[
\mathbb E\!\left[W_p(\bar P_k,P)^p\right]
\le
Ck^{-\nu},
\qquad k=1,\dots,N.
\]
The ground metric \(d_{\kappa,\lambda}\) is induced by a weighted norm on a finite-dimensional space, so the usual Euclidean-rate statement applies up to a change of constants. Thus,
\[
\mathbb E\!\left[W_p(\mu_w,P)^p\right]
\le
C\sum_{k=1}^N k(w_k-w_{k+1})k^{-\nu}
=
C\sum_{k=1}^N k^{1-\nu}(w_k-w_{k+1}).
\]
By summation by parts,
\[
\sum_{k=1}^N k^{1-\nu}(w_k-w_{k+1})
=
\sum_{k=1}^N
\big(k^{1-\nu}-(k-1)^{1-\nu}\big)w_k,
\]
where \(0^{1-\nu}=0\). In the remainder of the proof, \(C_\nu<\infty\) denotes a constant depending only on \(\nu\), whose value may change from line to line. Since \(x\mapsto x^{1-\nu}\) is concave and increasing,
\[
k^{1-\nu}-(k-1)^{1-\nu}
\le
C_\nu k^{-\nu}.
\]
Hence
\[
\mathbb E\!\left[W_p(\mu_w,P)^p\right]
\le
C C_\nu \sum_{k=1}^N k^{-\nu}w_k.
\]

It remains to control the last sum by the effective sample size. Let
\(m\coloneqq \lfloor N_{\mathrm{eff}}(w)\rfloor\). Since
\(N_{\mathrm{eff}}(w)\ge1\), we have \(m\ge1\). Split
\[
\sum_{k=1}^N k^{-\nu}w_k
=
\sum_{k=1}^m k^{-\nu}w_k
+
\sum_{k=m+1}^N k^{-\nu}w_k
\coloneqq H+R.
\]
For the head term, Cauchy--Schwarz and the definition of
\(N_{\mathrm{eff}}(w)\) give
\[
H
\le
\left(\sum_{k=1}^m k^{-2\nu}\right)^{1/2}
\left(\sum_{k=1}^m w_k^2\right)^{1/2}
\le
N_{\mathrm{eff}}(w)^{-1/2}
\left(\sum_{k=1}^m k^{-2\nu}\right)^{1/2}.
\]
Because \(\nu<1/2\),
\[
\sum_{k=1}^m k^{-2\nu}
\le
C_\nu m^{1-2\nu}.
\]
Therefore,
\[
H
\le
C_\nu
N_{\mathrm{eff}}(w)^{-1/2}m^{1/2-\nu}
\le
C_\nu N_{\mathrm{eff}}(w)^{-\nu},
\]
where the last inequality uses \(m\le N_{\mathrm{eff}}(w)\).

For the remainder term, since \(k^{-\nu}\) is decreasing and the weights sum to one,
\[
R
\le
(m+1)^{-\nu}\sum_{k=m+1}^N w_k
\le
(m+1)^{-\nu}
\le
N_{\mathrm{eff}}(w)^{-\nu}.
\]
Combining the bounds on \(H\) and \(R\), we obtain
\[
\sum_{k=1}^N k^{-\nu}w_k
\le
C_\nu N_{\mathrm{eff}}(w)^{-\nu}.
\]
Absorbing constants into \(c_0\) gives
\[
\mathbb E\!\left[
W_p(\mu_w,P)^p
\right]
\le
c_0N_{\mathrm{eff}}(w)^{-\nu},
\]
which proves the claim.
\end{proof}

\begin{proposition}[Concentration bound with effective sample size]\label{prop:concentration-bound}
Let \(p\in(1,\infty)\) and assume \(\mathrm{diam}_{\kappa,\lambda}(\Xi)<\infty\). Let
\(P\in\mathcal P(\Xi)\), and let \(\nu\in(0,1/2)\) be the exponent from Lemma~\ref{lem:expectation-bound}. Let \(X_1,\dots,X_N\sim P\) be independent and identically distributed, and define \(\mu_w\coloneqq \sum_{i=1}^N w_i\delta_{X_i}\). There exist constants \(c_1,c_2>0\) such that, for all \(N\in\mathbb N\), \(w\in\Delta_N\), and
\(\varepsilon>0\),
\[
\mathbb P\!\left(W_p(\mu_w,P)\ge \varepsilon\right)
\le
\exp\!\left(-c_1N_{\mathrm{eff}}(w)\Big(\varepsilon^p-c_2N_{\mathrm{eff}}(w)^{-\nu}\Big)_+^2\right).
\]
In particular, whenever
\(\varepsilon \ge 2(c_2 N_{\mathrm{eff}}(w)^{-\nu})^{1/p}\), we have
\[
\mathbb P\!\left(
W_p(\mu_w,P)\ge \varepsilon
\right)
\le
\exp\!\left(
-\frac{c_1}{4}N_{\mathrm{eff}}(w)\varepsilon^{2p}
\right).
\]
\end{proposition}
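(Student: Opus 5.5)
The plan is to combine the expectation bound of Lemma~\ref{lem:expectation-bound} with a bounded-differences (McDiarmid) concentration argument applied to the random variable
\[
F(X_1,\dots,X_N)\coloneqq W_p(\mu_w,P)^p .
\]
Working with the $p$-th power rather than $W_p$ itself is natural here. Lemma~\ref{lem:expectation-bound} already controls $\mathbb E[F]\le c_0N_{\mathrm{eff}}(w)^{-\nu}$. The event $\{W_p(\mu_w,P)\ge\varepsilon\}$ coincides with $\{F\ge\varepsilon^p\}$. We will therefore take $c_2\coloneqq c_0$ and show that $F$ concentrates around its mean at a sub-Gaussian rate with variance proxy of order $\sum_n w_n^2=N_{\mathrm{eff}}(w)^{-1}$.

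\textbf{Bounded differences.} Fix an index $n$ and replace $X_n$ by an independent copy $X_n'$, producing $\mu_w'$. The two measures differ only by moving mass $w_n$ from $X_n$ to $X_n'$. The coupling that keeps all other atoms fixed and transports this mass $w_n$ gives
\[
W_p(\mu_w,\mu_w')^p\le w_n\,\operatorname{diam}_{\kappa,\lambda}(\Xi)^p .
\]
We need a difference bound on $F$ itself, not on $W_p$. To get it, use that $Q\mapsto W_p(Q,P)^p$ is the supremum over Kantorovich dual pairs $(\varphi,\psi)$ of $\int\varphi\,dQ+\int\psi\,dP$. On a set of finite diameter $D\coloneqq\operatorname{diam}_{\kappa,\lambda}(\Xi)$, one may restrict to $c$-concave potentials $\varphi$ with oscillation at most $D^p$. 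Since $F(\mu_w)$ is linear in $\mu_w$ for each fixed pair, we obtain
\[
|F(\mu_w)-F(\mu_w')|\le \sup_\varphi w_n|\varphi(X_n)-\varphi(X_n')|\le w_n D^p .
\]
This yields bounded differences with constants $c^{(n)}=w_nD^p$.

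\textbf{McDiarmid and assembly.} McDiarmid's inequality then gives, for $s>0$,
\[
\mathbb P\bigl(F-\mathbb E F\ge s\bigr)\le\exp\!\Bigl(-\tfrac{2s^2}{D^{2p}\sum_n w_n^2}\Bigr)=\exp\!\bigl(-2D^{-2p}N_{\mathrm{eff}}(w)\,s^2\bigr).
\]
Set $s=(\varepsilon^p-c_0N_{\mathrm{eff}}(w)^{-\nu})_+$ and use $\mathbb E F\le c_0N_{\mathrm{eff}}(w)^{-\nu}$. This gives the first display with $c_1=2D^{-2p}$. When $s=0$ the bound is trivially $1$. For the ``in particular'' clause, the assumption $\varepsilon\ge 2(c_2N_{\mathrm{eff}}^{-\nu})^{1/p}$ gives $c_2N_{\mathrm{eff}}^{-\nu}\le\varepsilon^p/2^p\le\varepsilon^p/2$. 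Hence $(\varepsilon^p-c_2N_{\mathrm{eff}}^{-\nu})_+^2\ge\varepsilon^{2p}/4$, which yields the stated bound with $c_1/4$.

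\textbf{Main obstacle.} The main subtlety is the bounded-difference step for the $p$-th power. A naive triangle inequality only controls $|W_p(\mu_w,P)-W_p(\mu_w',P)|\le w_n^{1/p}D$. That would give a variance proxy $\sum_n w_n^{2/p}$, not $N_{\mathrm{eff}}^{-1}$, so the linear-in-$w_n$ bound via the dual representation is essential. In particular, the argument must verify that the dual potentials can be chosen with oscillation bounded by $D^p$; this follows from $c$-concavity with cost $d^p\le D^p$. If that step proves delicate, a fallback is to use joint convexity instead. Writing $\mu_w=(1-w_n)\nu_n+w_n\delta_{X_n}$ and comparing with the same decomposition for $\mu_w'$ gives
\[
F(\mu_w')\le(1-w_n)W_p(\nu_n,P)^p+w_nW_p(\delta_{X_n'},P)^p .
\]
Matching this against an optimal plan for $\mu_w$ yields the same $w_nD^p$ bound.
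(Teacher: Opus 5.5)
Your proof is correct and follows the paper's skeleton. Both apply McDiarmid to $F=W_p(\mu_w,P)^p$ with bounded-difference constants $w_nD^p$, which gives $c_1=2D^{-2p}$. Both take $c_2$ from the expectation bound of Lemma~\ref{lem:expectation-bound}, and both use $1-2^{-p}\ge 1/2$ for the tail form.

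The one real difference is how the bounded-difference inequality is justified.

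\textbf{The paper's route (primal).} It takes an arbitrary coupling of $\mu_w^{(j)}$ and $P$ and disintegrates it along the indexed atoms. It keeps every conditional target law but moves the source of the $j$-th slice from $X_j'$ back to $X_j$, which costs at most $w_jD^p$ extra. Taking the infimum and symmetrizing gives $|F(\mu_w)-F(\mu_w^{(j)})|\le w_jD^p$. This uses no duality theory and never needs an optimal plan.

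\textbf{Your route (dual).} You use Kantorovich duality restricted to $c$-concave pairs, and this is legitimate here. For $\varphi(x)=\inf_y[c(x,y)-\psi(y)]$ one has $\varphi(x)-\varphi(x')\le\sup_y[c(x,y)-c(x',y)]\le D^p$. Any admissible pair can be improved to such a pair by a double $c$-transform, so the class does not depend on the measure. Since $F(\mu_w)$ and $F(\mu_w')$ are suprema of linear functionals over this same class, $|\sup-\sup|\le\sup|\cdot|$ gives the $w_nD^p$ bound.

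Your route costs the duality theorem on a Polish space. In return it handles coinciding atoms automatically, where the paper needs a remark about splitting mass by index. It also yields directly the more general estimate $|W_p(Q,P)^p-W_p(Q',P)^p|\le D^p\sup_A|Q(A)-Q'(A)|$.

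\textbf{Your fallback as written does not close.} The inequality $F(\mu_w')\le(1-w_n)W_p(\nu_n,P)^p+w_nW_p(\delta_{X_n'},P)^p$ is true, but it cannot be matched against $F(\mu_w)$. Convexity bounds $F(\mu_w)$ by the analogous expression from above, not from below. The fallback works only after you split $P=(1-w_n)P_1+w_n\beta_n$ according to a coupling of $\mu_w$ and $P$, with $\beta_n$ the conditional target law of atom $n$. At that point it is exactly the paper's argument.
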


\begin{proof}[Proof of Proposition~\ref{prop:concentration-bound}]
We apply McDiarmid's inequality to the random variable
\[
f(X_1,\dots,X_N)
\coloneqq
W_p\!\left(\sum_{i=1}^N w_i\delta_{X_i},P\right)^p .
\]
The key step is to verify a bounded-differences property. Changing one observation affects only the mass \(w_j\) attached to that observation; boundedness of \(\Xi\) therefore yields a bounded-differences constant proportional to \(w_j\).

Fix \(j\in\{1,\dots,N\}\), and let \(X_j'\in\Xi\) be an arbitrary replacement of the
\(j\)-th trajectory. Define
\[
\mu_w^{(j)}
\coloneqq
\sum_{i\neq j}w_i\delta_{X_i}+w_j\delta_{X_j'} .
\]
We claim that
\begin{equation}
\label{eq:bounded-difference-wp}
\left|
W_p(\mu_w,P)^p
-
W_p(\mu_w^{(j)},P)^p
\right|
\le
w_j\,\mathrm{diam}_{\kappa,\lambda}(\Xi)^p .
\end{equation}

To prove the claim, let
\(\pi^{(j)}\in\Pi(\mu_w^{(j)},P)\) be arbitrary. Since
\(\mu_w^{(j)}\) is discrete, we may disintegrate \(\pi^{(j)}\) according to
the indexed atoms of \(\mu_w^{(j)}\). Thus, for some conditional laws
\(\beta_1,\dots,\beta_N\in\mathcal P(\Xi)\),
\[
\pi^{(j)}(dx,dy)
=
\sum_{i\neq j}w_i\delta_{X_i}(dx)\,\beta_i(dy)
+
w_j\delta_{X_j'}(dx)\,\beta_j(dy).
\]
If some atoms coincide, we split the mass according to this indexed
representation; this does not change the marginal measure or the transport
cost.

Now construct a coupling \(\pi\in\Pi(\mu_w,P)\) by keeping the same
conditional laws and replacing only the source point \(X_j'\) by \(X_j\):
\[
\pi(dx,dy)
=
\sum_{i\neq j}w_i\delta_{X_i}(dx)\,\beta_i(dy)
+
w_j\delta_{X_j}(dx)\,\beta_j(dy).
\]
This coupling is feasible for \(W_p(\mu_w,P)\). Therefore,
\[
\begin{aligned}
W_p(\mu_w,P)^p
&\le
\int_{\Xi\times\Xi}
d_{\kappa,\lambda}(x,y)^p\,d\pi(x,y) \\
&=
\sum_{i\neq j}w_i
\int_\Xi d_{\kappa,\lambda}(X_i,y)^p\,\beta_i(dy)
+
w_j
\int_\Xi d_{\kappa,\lambda}(X_j,y)^p\,\beta_j(dy).
\end{aligned}
\]
Since \(X_j,X_j'\), and \(y\) all belong to \(\Xi\), we have
\(d_{\kappa,\lambda}(X_j,y)^p\le \mathrm{diam}_{\kappa,\lambda}(\Xi)^p\), and
hence
\[
d_{\kappa,\lambda}(X_j,y)^p
\le
d_{\kappa,\lambda}(X_j',y)^p
+
\mathrm{diam}_{\kappa,\lambda}(\Xi)^p .
\]
Substituting this bound gives
\[
\begin{aligned}
W_p(\mu_w,P)^p
&\le
\sum_{i\neq j}w_i
\int_\Xi d_{\kappa,\lambda}(X_i,y)^p\,\beta_i(dy)
+
w_j
\int_\Xi d_{\kappa,\lambda}(X_j',y)^p\,\beta_j(dy) \\
&\qquad
+
w_j\,\mathrm{diam}_{\kappa,\lambda}(\Xi)^p \\
&=
\int_{\Xi\times\Xi}
d_{\kappa,\lambda}(x,y)^p\,d\pi^{(j)}(x,y)
+
w_j\,\mathrm{diam}_{\kappa,\lambda}(\Xi)^p .
\end{aligned}
\]
Since \(\pi^{(j)}\in\Pi(\mu_w^{(j)},P)\) was arbitrary, taking the infimum
over all such couplings yields
\[
W_p(\mu_w,P)^p
-
W_p(\mu_w^{(j)},P)^p
\le
w_j\,\mathrm{diam}_{\kappa,\lambda}(\Xi)^p .
\]
Exchanging the roles of \(X_j\) and \(X_j'\) gives the reverse inequality, and
therefore proves (\ref{eq:bounded-difference-wp}).

Thus \(f\) satisfies McDiarmid's bounded-differences condition with constants
\[
a_j\coloneqq w_j\,\mathrm{diam}_{\kappa,\lambda}(\Xi)^p,
\qquad j=1,\dots,N.
\]
Hence, for every \(u>0\),
\[
\mathbb P\big(f-\mathbb E[f]\ge u\big)
\le
\exp\!\left(
-\frac{2u^2}{\sum_{j=1}^N a_j^2}
\right).
\]
Since
\[
\sum_{j=1}^N a_j^2
=
\mathrm{diam}_{\kappa,\lambda}(\Xi)^{2p}
\sum_{j=1}^N w_j^2
=
\mathrm{diam}_{\kappa,\lambda}(\Xi)^{2p}
N_{\mathrm{eff}}(w)^{-1},
\]
we obtain
\[
\mathbb P\big(f-\mathbb E[f]\ge u\big)
\le
\exp\!\left(
-c_1N_{\mathrm{eff}}(w)u^2
\right),
\qquad
c_1\coloneqq 2\,\mathrm{diam}_{\kappa,\lambda}(\Xi)^{-2p}.
\]

By Lemma~\ref{lem:expectation-bound}, there exists \(c_2>0\) such that
\[
\mathbb E[W_p(\mu_w,P)^p]
\le
c_2N_{\mathrm{eff}}(w)^{-\nu}.
\]
Therefore, for any \(\varepsilon>0\),
\[
\begin{aligned}
\mathbb P\!\left(W_p(\mu_w,P)\ge \varepsilon\right)
&=
\mathbb P\!\left(W_p(\mu_w,P)^p\ge \varepsilon^p\right) \\
&\le
\mathbb P\!\left(
W_p(\mu_w,P)^p-\mathbb E[W_p(\mu_w,P)^p]
\ge
\varepsilon^p-c_2N_{\mathrm{eff}}(w)^{-\nu}
\right) \\
&\le
\exp\!\left(
-c_1N_{\mathrm{eff}}(w)
\Big(
\varepsilon^p-c_2N_{\mathrm{eff}}(w)^{-\nu}
\Big)_+^2
\right).
\end{aligned}
\]
This proves the first claim.

For the simplified bound, suppose \(\varepsilon \ge 2(c_2N_{\mathrm{eff}}(w)^{-\nu})^{1/p}\).
Then \(c_2N_{\mathrm{eff}}(w)^{-\nu} \le 2^{-p}\varepsilon^p\).
Since \(p\in(1,\infty)\), we have \(1-2^{-p}\ge 1/2\), and therefore
\[
\Big(\varepsilon^p-c_2N_{\mathrm{eff}}(w)^{-\nu}\Big)_+
\ge
\frac12\varepsilon^p.
\]
Substituting this into the first concentration bound gives
\[
\mathbb P\!\left(
W_p(\mu_w,P)\ge \varepsilon
\right)
\le
\exp\!\left(
-\frac{c_1}{4}N_{\mathrm{eff}}(w)\varepsilon^{2p}
\right),
\]
which completes the proof.
\end{proof}

\subsubsection{Coupling Under Temporal Drift}

We first bound the Wasserstein distance between weighted empirical laws by pairing observations with the same index.

\begin{proposition}[Weighted coupling bound]
\label{prop:weighted-coupling}
Let $p\in(1,\infty)$, $\mu_w=\sum_{n=1}^N w_n\delta_{X_n}$, and $\widehat\mu_w=\sum_{n=1}^N w_n\delta_{\widehat X_n}$.
Then,
\begin{equation}
\label{eq:weighted-coupling}
W_p(\mu_w,\widehat\mu_w)^p
\le
\sum_{n=1}^N w_n\, d_{\kappa,\lambda}(X_n,\widehat X_n)^p.
\end{equation}
\end{proposition}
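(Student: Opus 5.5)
The bound follows by exhibiting one admissible coupling and evaluating its cost, since by \eqref{eq:Wp_def_app} the quantity $W_p(\mu_w,\widehat\mu_w)^p$ is the infimum of the transport cost over $\Pi(\mu_w,\widehat\mu_w)$. I would take the index-matching plan
\[
\pi\coloneqq\sum_{n=1}^N w_n\,\delta_{(X_n,\widehat X_n)}\in\mathcal P(\Xi\times\Xi),
\]
which sends the mass $w_n$ carried by $X_n$ to $\widehat X_n$. This is a legitimate Borel probability measure on $\Xi\times\Xi$ because $w\in\Delta_N$ and every point $(X_n,\widehat X_n)$ lies in $\Xi\times\Xi$.

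Next I would check the marginals. For a Borel set $A\subseteq\Xi$ we have $\pi(A\times\Xi)=\sum_n w_n\mathbf 1_A(X_n)=\mu_w(A)$ and $\pi(\Xi\times A)=\sum_n w_n\mathbf 1_A(\widehat X_n)=\widehat\mu_w(A)$, so $\pi\in\Pi(\mu_w,\widehat\mu_w)$. The only subtlety is that atoms may coincide, for instance $X_n=X_m$ with $n\neq m$. This causes no difficulty: the computation is carried out index by index, and both $\mu_w$ and $\widehat\mu_w$ are defined as the same indexed sums, so merged atoms simply add their masses on both sides.

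Finally, integrating the ground cost against $\pi$ gives
\[
\int_{\Xi\times\Xi} d_{\kappa,\lambda}(x,\widehat x)^p\,d\pi(x,\widehat x)=\sum_{n=1}^N w_n\, d_{\kappa,\lambda}(X_n,\widehat X_n)^p.
\]
Taking the infimum over couplings in \eqref{eq:Wp_def_app}, this sum is an upper bound for $W_p(\mu_w,\widehat\mu_w)^p$, which is \eqref{eq:weighted-coupling}. None of the steps is a real obstacle; the one point requiring care is the marginal verification when atoms coincide, which the indexed bookkeeping above settles.
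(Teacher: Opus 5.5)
Your proposal is correct and takes the same approach as the paper: it uses the same index-matching coupling $\pi_w=\sum_{n=1}^N w_n\,\delta_{(X_n,\widehat X_n)}$, checks that its marginals are $\mu_w$ and $\widehat\mu_w$, and evaluates its transport cost to bound the infimum. Your remark on coinciding atoms is a harmless added detail that the paper leaves implicit.
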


\begin{proof}[Proof of Proposition~\ref{prop:weighted-coupling}]
We construct an explicit coupling by assigning mass \(w_n\) to the pair
\((X_n,\widehat X_n)\) for each \(n=1,\dots,N\), namely
\(\pi_w \coloneqq \sum_{n=1}^N w_n\,\delta_{(X_n,\widehat X_n)}\). The first marginal of \(\pi_w\) is \(\mu_w\), and its second marginal is
\(\widehat\mu_w\). Hence \(\pi_w\in\Pi(\mu_w,\widehat\mu_w)\).

The \(p\)-Wasserstein distance on \((\Xi,d_{\kappa,\lambda})\) is defined by
\[
W_p(\mu_w,\widehat\mu_w)^p
=
\inf_{\pi\in\Pi(\mu_w,\widehat\mu_w)}
\int_{\Xi\times\Xi}
d_{\kappa,\lambda}(x,\widehat x)^p\,d\pi(x,\widehat x).
\]
Since \(\pi_w\) is an admissible coupling, evaluating the transport cost at
\(\pi_w\) gives the upper bound
\[
W_p(\mu_w,\widehat\mu_w)^p
\le
\int_{\Xi\times\Xi}
d_{\kappa,\lambda}(x,\widehat x)^p\,d\pi_w(x,\widehat x).
\]
Finally, substituting the definition of \(\pi_w\) 
\[
\int_{\Xi\times\Xi}
d_{\kappa,\lambda}(x,\widehat x)^p\,d\pi_w(x,\widehat x)
=
\sum_{n=1}^N
w_n\,d_{\kappa,\lambda}(X_n,\widehat X_n)^p.
\]
We therefore obtain the desired bound,
\[
W_p(\mu_w,\widehat\mu_w)^p
\le
\sum_{n=1}^N
w_n\,d_{\kappa,\lambda}(X_n,\widehat X_n)^p.
\]
\end{proof}

We next transfer the stationary concentration estimate to the nonstationary sample. The following coupling argument compares the weighted empirical measure generated by \(X_n\sim P_n\) with a weighted empirical measure drawn from the next-period law \(P_{N+1}\). The cost of this comparison is the drift term \(D_p(w)\varrho\). Older observations incur a larger transport cost because they must be coupled through more one-step distributional shifts before reaching \(P_{N+1}\).

\begin{lemma}[Drift-aware Wasserstein bound]
\label{lem:drift-aware-wasserstein-bound}
Suppose Assumption~\ref{ass:nonstationary-drift} holds, and let \(p\in(1,\infty)\) and \(w\in\Delta_N\).
Then there exist coupled pairs \((X_n,\widetilde X_n)\), independent across
\(n\), with \(X_n\sim P_n\) and \(\widetilde X_n\sim P_{N+1}\), such that, for
\(\mu_w\coloneqq\sum_{n=1}^N w_n\delta_{X_n}\) and
\(\widetilde\mu_w\coloneqq\sum_{n=1}^N w_n\delta_{\widetilde X_n}\),
\[
W_p(\mu_w,P_{N+1})
\le
W_p(\widetilde\mu_w,P_{N+1})+D_p(w)\varrho
\qquad \text{a.s.}
\]
In particular, \(\widetilde X_1,\dots,\widetilde X_N\) are i.i.d.~with law
\(P_{N+1}\).
\end{lemma}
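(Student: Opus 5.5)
The plan is to build, for each $n$ separately, a chain of couplings from $P_n$ to $P_{N+1}$ that moves each point by at most $\varrho$ per step. Gluing these gives a pair $(X_n,\widetilde X_n)$ with $d_{\kappa,\lambda}(X_n,\widetilde X_n)\le (N-n+1)\varrho$ almost surely. The bound then follows from the triangle inequality for $W_p$ together with Proposition~\ref{prop:weighted-coupling}.

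\textbf{Step 1 (chain of couplings).} By the coupling form of Assumption~\ref{ass:nonstationary-drift} (see the remark following it), for each $m=1,\dots,N$ there is $\pi_m\in\Pi(P_m,P_{m+1})$ supported on $\{d_{\kappa,\lambda}\le\varrho\}$. Strictly, $W_\infty\le\varrho$ only gives near-optimal couplings, but since $\Xi$ is Polish the infimum in \eqref{eq:Winfty_def_app} is attained by a standard compactness and lower-semicontinuity argument, so I will take this coupling form as given.

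Fix $n$. I apply the gluing lemma repeatedly, using that $\Xi$ is Polish so that disintegrations exist. Concatenating $\pi_n,\pi_{n+1},\dots,\pi_N$ produces a Markov chain $Z_n,Z_{n+1},\dots,Z_{N+1}$ with $Z_m\sim P_m$ and $d_{\kappa,\lambda}(Z_m,Z_{m+1})\le\varrho$ almost surely. Setting $(X_n,\widetilde X_n)\coloneqq(Z_n,Z_{N+1})$, the triangle inequality for $d_{\kappa,\lambda}$ gives
\[
d_{\kappa,\lambda}(X_n,\widetilde X_n)\le (N-n+1)\varrho \quad\text{a.s.}
\]

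\textbf{Step 2 (independence).} I construct the joint laws of $(X_n,\widetilde X_n)$ on a product space, one independent factor per $n$. This keeps the pairs independent across $n$, so $X_1,\dots,X_N$ are independent with laws $P_n$, matching the sampling model, and $\widetilde X_1,\dots,\widetilde X_N$ are i.i.d.\ with law $P_{N+1}$. Since the statement only concerns laws, we may realize the original sample this way; equivalently, $\widetilde X_n$ can be drawn from the conditional kernel of the glued coupling given $X_n$, using independent randomization.

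\textbf{Step 3 (bound).} By the triangle inequality for $W_p$,
\[
W_p(\mu_w,P_{N+1})\le W_p(\mu_w,\widetilde\mu_w)+W_p(\widetilde\mu_w,P_{N+1}).
\]
Proposition~\ref{prop:weighted-coupling} bounds the first term:
\[
W_p(\mu_w,\widetilde\mu_w)^p\le\sum_n w_n\,d_{\kappa,\lambda}(X_n,\widetilde X_n)^p\le\varrho^p\sum_n w_n(N-n+1)^p=(D_p(w)\varrho)^p .
\]
Combining the two displays gives the claim almost surely.

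The main obstacle is the measure-theoretic justification of Steps 1 and 2: that the $W_\infty$ bound yields an almost-sure coupling, and that the gluing can be done while keeping the pairs independent across $n$. Both reduce to standard facts on Polish spaces, namely attainment of $W_\infty$ and the gluing lemma via disintegration.
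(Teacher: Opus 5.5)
Your proposal is correct and follows essentially the same route as the paper: you glue the one-step \(\varrho\)-couplings from \(P_n\) through \(P_{N+1}\) to get \(d_{\kappa,\lambda}(X_n,\widetilde X_n)\le(N-n+1)\varrho\) almost surely, choose the couplings independently across \(n\), and then combine the triangle inequality for \(W_p\) with Proposition~\ref{prop:weighted-coupling} to bound \(W_p(\mu_w,\widetilde\mu_w)\) by \(D_p(w)\varrho\). Your remarks on attainment of the \(W_\infty\) infimum and on realizing \(\widetilde X_n\) through the conditional kernel with independent randomization supply measure-theoretic details that the paper leaves implicit; they do not change the argument.
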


\begin{proof}[Proof of Lemma~\ref{lem:drift-aware-wasserstein-bound}]
By Assumption~\ref{ass:nonstationary-drift}, for each
\(k=1,\dots,N\) there is a coupling of \(P_k\) and \(P_{k+1}\) whose \(d_{\kappa,\lambda}\)-cost is at most \(\varrho\) almost surely. Gluing these couplings from time \(n\) to time \(N+1\) yields a coupling of \(P_n\) and \(P_{N+1}\) under which
\[
d_{\kappa,\lambda}(X_n,\widetilde X_n)
\le
\sum_{k=n}^N \varrho
=
(N-n+1)\varrho
\qquad \text{a.s.}
\]
Choosing these couplings independently across \(n\) gives the coupled pairs in the statement. Since the couplings are independent across \(n\), and each proxy variable has marginal law \(P_{N+1}\), the variables \(\widetilde X_1,\dots,\widetilde X_N\) are i.i.d.~with law \(P_{N+1}\).

We compare \(\mu_w\) with the proxy measure \(\widetilde\mu_w\) using the triangle inequality. By the triangle inequality for \(W_p\),
\[
W_p(\mu_w,P_{N+1})
\le
W_p(\widetilde\mu_w,P_{N+1})
+
W_p(\mu_w,\widetilde\mu_w).
\]
The second term is controlled by the weighted coupling bound in Proposition~\ref{prop:weighted-coupling}. Applying it to the paired samples \((X_n,\widetilde X_n)\) yields
\[
W_p(\mu_w,\widetilde\mu_w)^p
\le
\sum_{n=1}^N w_n d_{\kappa,\lambda}(X_n,\widetilde X_n)^p .
\]
Using the pairwise drift bound and the definition of \(D_p(w)\), we obtain
\[
W_p(\mu_w,\widetilde\mu_w)^p
\le
\varrho^p\sum_{n=1}^N w_n (N-n+1)^p
=
\varrho^p D_p(w)^p .
\]
Hence \(W_p(\mu_w,\widetilde\mu_w)\le D_p(w)\varrho\). Substituting this into the previous triangle inequality gives
\[
W_p(\mu_w,P_{N+1})
\le
W_p(\widetilde\mu_w,P_{N+1})
+
D_p(w)\varrho
\qquad \text{a.s.}
\]
\end{proof}

\subsubsection{Drift-Aware Coverage Bound}

We next combine the weighted empirical concentration bound with the drift coupling. The resulting theorem gives the concentration inequality underlying the calibrated radius in Section~\ref{subsec:short-coverage}, expressed in terms of the effective sample size \(N_{\mathrm{eff}}(w)\) and cumulative drift index \(D_p(w)\). It specializes the nonstationary Wasserstein concentration guarantee of \citet{keehan2025don} to the trajectory space \((\Xi,d_{\kappa,\lambda})\). Let \((u)_+\coloneqq\max\{0,u\}\).

\begin{theorem}[Drift-aware coverage]
\label{thm:dlb-coverage}
Let \(p\in(1,\infty)\) and suppose Assumptions~\ref{ass:path-space} and
\ref{ass:nonstationary-drift} hold. Then there exist constants
\(c_1,c_2>0\) and an exponent \(\nu\in(0,1/2)\) such that, for every
\(w\in\Delta_N\) and every \(\varepsilon>0\),
\begin{equation}
\label{eq:dlb-bound}
\mathbb P\!\left(W_p(\mu_w,P_{N+1}) \ge \varepsilon\right)
\le
\exp\!\left(
-c_1 N_{\mathrm{eff}}(w)
\Big(
\big((\varepsilon-D_p(w)\varrho)_+^p
-
c_2N_{\mathrm{eff}}(w)^{-\nu}\big)_+
\Big)^2
\right).
\end{equation}
In particular, if \(\varepsilon
\ge D_p(w)\varrho + 2\big(c_2N_{\mathrm{eff}}(w)^{-\nu}\big)^{1/p}\),
then
\begin{equation}
\label{eq:dlb-bound-tail}
\mathbb P\!\left(W_p(\mu_w,P_{N+1}) \ge \varepsilon\right)
\le
\exp\!\left(
-\frac{c_1}{4}
N_{\mathrm{eff}}(w)
(\varepsilon-D_p(w)\varrho)_+^{2p}
\right).
\end{equation}
\end{theorem}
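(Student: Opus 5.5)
The plan is to combine the drift coupling of Lemma~\ref{lem:drift-aware-wasserstein-bound} with the stationary concentration bound of Proposition~\ref{prop:concentration-bound}, applied to the proxy sample. First, Lemma~\ref{lem:drift-aware-wasserstein-bound} gives, on a common probability space, coupled pairs $(X_n,\widetilde X_n)$. The $\widetilde X_n$ are i.i.d.\ with law $P_{N+1}$, and almost surely
\[
W_p(\mu_w,P_{N+1})\le W_p(\widetilde\mu_w,P_{N+1})+D_p(w)\varrho .
\]
The event $\{W_p(\mu_w,P_{N+1})\ge\varepsilon\}$ depends only on $(X_1,\dots,X_N)$. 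The coupled construction preserves the joint law of the $X_n$, since the pairs are independent across $n$ with the correct first marginals. Hence the probability computed on the coupling space equals the original one.

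Second, I will split into cases according to $\varepsilon\le D_p(w)\varrho$ or not. If $\varepsilon\le D_p(w)\varrho$, then $(\varepsilon-D_p(w)\varrho)_+=0$. The exponent in \eqref{eq:dlb-bound} is then zero, so the right-hand side equals $1$ and the bound is trivial. Otherwise, set $\varepsilon'\coloneqq\varepsilon-D_p(w)\varrho>0$. The almost-sure inequality gives the inclusion
\[
\{W_p(\mu_w,P_{N+1})\ge\varepsilon\}\subseteq\{W_p(\widetilde\mu_w,P_{N+1})\ge\varepsilon'\}.
\]
Proposition~\ref{prop:concentration-bound} applies to the i.i.d.\ sample $\widetilde X_n\sim P_{N+1}$ with the same weights $w$. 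It bounds the latter probability by $\exp(-c_1N_{\mathrm{eff}}(w)(\varepsilon'^p-c_2N_{\mathrm{eff}}(w)^{-\nu})_+^2)$, which is exactly \eqref{eq:dlb-bound}. The constants $c_1,c_2,\nu$ come from Lemma~\ref{lem:expectation-bound} and Proposition~\ref{prop:concentration-bound}. They depend only on $d(T+1)$, $p$ and $\operatorname{diam}_{\kappa,\lambda}(\Xi)$, because $P_{N+1}\in\mathcal P(\Xi)$. In particular they are uniform over the unknown law $P_{N+1}$ and over $w$.

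Third, for the tail form \eqref{eq:dlb-bound-tail}, I assume $\varepsilon\ge D_p(w)\varrho+2(c_2N_{\mathrm{eff}}(w)^{-\nu})^{1/p}$. This gives $\varepsilon'\ge2(c_2N_{\mathrm{eff}}(w)^{-\nu})^{1/p}$, so the simplified statement of Proposition~\ref{prop:concentration-bound} applies with $\varepsilon'$ in place of $\varepsilon$. It yields $\exp(-\tfrac{c_1}{4}N_{\mathrm{eff}}(w)\varepsilon'^{2p})$.

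The only delicate point is the first step: the coupling lives on an enlarged probability space, so one must check that the marginal law of $(X_n)_n$ is unchanged. This follows because each pair is built from a glued coupling with first marginal $P_n$, independently across $n$, matching Assumption~\ref{ass:nonstationary-drift}. A secondary technical point is measurability of $W_p(\mu_w,P_{N+1})$ as a function of the sample. This holds since $W_p$ is continuous in the atoms on the Polish space $\Xi$. Everything else is bookkeeping of the $(\cdot)_+$ terms.
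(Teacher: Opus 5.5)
Your proposal is correct and follows the paper's route. The drift coupling of Lemma~\ref{lem:drift-aware-wasserstein-bound} reduces the event to one about the i.i.d.\ proxy measure $\widetilde\mu_w$ at residual radius $(\varepsilon-D_p(w)\varrho)_+$, the degenerate case is trivial, and Proposition~\ref{prop:concentration-bound}, in its general and simplified forms, gives both bounds. You also check that the enlarged coupling space preserves the law of $(X_n)_n$ and that $W_p(\mu_w,P_{N+1})$ is measurable; the paper leaves both points implicit, and you handle them correctly.
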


\begin{proof}[Proof of Theorem~\ref{thm:dlb-coverage}]
We compare the nonstationary weighted empirical measure \(\mu_w\) to a proxy
weighted empirical measure drawn from the target distribution \(P_{N+1}\). By
Lemma~\ref{lem:drift-aware-wasserstein-bound}, we may construct coupled pairs
\((X_n,\widetilde X_n)\), independent across \(n\), with \(X_n\sim P_n\) and
\(\widetilde X_n\sim P_{N+1}\), such that, for
\(\widetilde\mu_w\coloneqq\sum_{n=1}^N w_n\delta_{\widetilde X_n}\),
\[
W_p(\mu_w,P_{N+1})
\le
W_p(\widetilde\mu_w,P_{N+1})+D_p(w)\varrho
\qquad \text{a.s.}
\]
Consequently, for any \(\varepsilon>0\),
\[
\mathbb P\!\left(W_p(\mu_w,P_{N+1})\ge \varepsilon\right)
\le
\mathbb P\!\left(
W_p(\widetilde\mu_w,P_{N+1})
\ge
(\varepsilon-D_p(w)\varrho)_+
\right).
\]

The proxy variables \(\widetilde X_1,\dots,\widetilde X_N\) are i.i.d.~with law
\(P_{N+1}\). Hence Proposition~\ref{prop:concentration-bound}, applied with
\(P=P_{N+1}\) and with \(\mu_w\) replaced by \(\widetilde\mu_w\), gives, for
all \(t>0\),
\[
\mathbb P\!\left(W_p(\widetilde\mu_w,P_{N+1})\ge t\right)
\le
\exp\!\left(
-c_1N_{\mathrm{eff}}(w)
\big(t^p-c_2N_{\mathrm{eff}}(w)^{-\nu}\big)_+^2
\right).
\]
If \(t=(\varepsilon-D_p(w)\varrho)_+=0\), the desired bound is trivial because
the right-hand side of (\ref{eq:dlb-bound}) equals one. Otherwise, substituting
\(t=(\varepsilon-D_p(w)\varrho)_+\) into the preceding display yields
\[
\mathbb P\!\left(W_p(\mu_w,P_{N+1})\ge \varepsilon\right)
\le
\exp\!\left(
-c_1 N_{\mathrm{eff}}(w)
\Big(
\big((\varepsilon-D_p(w)\varrho)_+^p
-
c_2N_{\mathrm{eff}}(w)^{-\nu}\big)_+
\Big)^2
\right),
\]
which proves (\ref{eq:dlb-bound}).

The simplified tail bound follows by lower-bounding the residual radius. Suppose that
\[
\varepsilon \ge D_p(w)\varrho
+
2(c_2N_{\mathrm{eff}}(w)^{-\nu})^{1/p}.
\]
Then
\[
c_2N_{\mathrm{eff}}(w)^{-\nu}
\le
2^{-p}(\varepsilon-D_p(w)\varrho)_+^p .
\]
Since \(p>1\), we have \(1-2^{-p}\ge 1/2\), and therefore
\[
\Big(
(\varepsilon-D_p(w)\varrho)_+^p
-
c_2N_{\mathrm{eff}}(w)^{-\nu}
\Big)_+
\ge
\frac12(\varepsilon-D_p(w)\varrho)_+^p .
\]
Substituting this lower bound into (\ref{eq:dlb-bound}) gives
\[
\mathbb P\!\left(W_p(\mu_w,P_{N+1}) \ge \varepsilon\right)
\le
\exp\!\left(
-\frac{c_1}{4}
N_{\mathrm{eff}}(w)
(\varepsilon-D_p(w)\varrho)_+^{2p}
\right),
\]
which proves (\ref{eq:dlb-bound-tail}).
\end{proof}

\begin{remark}[Drift-aware truncation weights]
\label{rem:truncation-weights}
Theorem~2 of \citet{keehan2025don} shows that weights optimizing the bias--variance trade-off induced by \(N_{\mathrm{eff}}(w)\) and \(D_p(w)\varrho\) in the concentration bound \eqref{eq:dlb-bound-tail} admit a truncated-polynomial form \(w_n\propto \big(a_1-a_2(N-n+1)^p\big)_+\) for two scalars $a_1$ and $a_2$.
\end{remark}

\subsubsection{Calibration and Next-Period Risk}
\label{app:coverage-consequences}

\DriftAwareNextPeriodRisk*

\begin{proof}[Proof of Theorem~\ref{thm:drift-aware-next-period-risk}]
Choose $c_1,c_2$ and $\nu$ as in Theorem~\ref{thm:dlb-coverage}. Substituting $\varepsilon=\varepsilon_\varrho(w,\beta)$ into \eqref{eq:dlb-bound} gives
\[
\mathbb P\!\left(W_p(\mu_w,P_{N+1})\ge\varepsilon_\varrho(w,\beta)\right)
\le
\exp\!\left(-\log(1/\beta)\right)
=\beta.
\]
Hence $E\coloneqq\{P_{N+1}\in\mathcal B_{\varepsilon_\varrho(w,\beta)}(\mu_w)\}$ has probability at least $1-\beta$. On $E$, the law $P_{N+1}$ is feasible in the supremum defining $V_{\vartheta}(\varepsilon_\varrho(w,\beta))$, so $\mathbb E_{X\sim P_{N+1}}[\ell_{\mathrm{DH}}(\vartheta;X)]\le V_{\vartheta}(\varepsilon_\varrho(w,\beta))$ for every $\vartheta\in\Theta\times\mathbb R$. Since $E$ does not depend on $\vartheta$, this bound holds simultaneously for all $\vartheta$.
\end{proof}

\subsection{Two-Budget OT--\texorpdfstring{$\phi$}{phi} Adversarial Training}
\label{app:two-budget}
This subsection proves the two-budget sensitivity results. We first derive the joint first-order expansion of the OT--\(\phi\) robust value by reducing the ambiguity set to an outer reweighting problem and an inner transport problem. We then prove that the structured empirical attack is feasible and attains the same expansion to first order.

\subsubsection{Exact Reduction and Joint First-order Expansion}
\label{appsubsec:thm_two_budget_taylor}
The proof of Theorem~\ref{thm:two_budget_ot_phi_taylor} proceeds in three steps.
First, Proposition~\ref{prop:two_budget_exact_reduction} rewrites the robust problem as an outer optimization over anchor reweightings \(a\) and an inner transport problem \(T_a(\varepsilon)\). Second, Lemma~\ref{lem:uniform_ot_reweighted} provides a first-order expansion of \(T_a(\varepsilon)\), uniformly for reweightings close to \(\mathbf 1\). Third, Lemma~\ref{lem:phi_reweighting_expansion} gives the small-\(\tau\) expansion of the resulting pure reweighting problem. Since \(Q_\phi(\tau)\) collapses to \(\{\mathbf 1\}\) as \(\tau\downarrow0\), the reweighted transport sensitivity converges uniformly to \(\Upsilon_w(\vartheta)\), and the interaction between the two budgets is of smaller order. Combining these steps yields the claimed joint expansion. 

\begin{proposition}[Exact empirical reduction for the two-budget OT--$\phi$ set]
\label{prop:two_budget_exact_reduction}
For \(p\in(1,\infty)\), define
\[
Q_\phi(\tau)
\coloneqq
\Big\{
 a\in[0,\infty)^N:
 \sum_{n=1}^N w_n a_n = 1,
 \ \sum_{n=1}^N w_n\phi(a_n)\le \tau
\Big\},
\]
and for $a\in Q_\phi(\tau)$ define
\[
T_a(\varepsilon)
\coloneqq
\sup_{\nu_1',\dots,\nu_N'\in\mathcal P(\Xi)}
\Bigg\{
\sum_{n=1}^N w_n a_n\int_\Xi \ell_{\mathrm{DH}}(\vartheta;\widehat{x})\,\nu_n'(d\widehat{x})
:
\sum_{n=1}^N w_n a_n\int_\Xi d_{\kappa,\lambda}(X_n,\widehat{x})^p\,\nu_n'(d\widehat{x})\le \varepsilon^p
\Bigg\}.
\]
The robust value admits the exact reduction
\begin{equation}
\label{eq:two_budget_exact_reduction}
V_{\vartheta}(\varepsilon,\tau)
=
\sup_{a\in Q_\phi(\tau)} T_a(\varepsilon).
\end{equation}
\end{proposition}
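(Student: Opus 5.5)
We prove the identity \eqref{eq:two_budget_exact_reduction} by establishing the two inequalities separately. In both directions we disintegrate couplings along the finitely many atoms of $\mu_w$. Throughout, restrict to indices with $w_n>0$. If several $X_n$ coincide, we split mass by index, as in the proof of Proposition~\ref{prop:concentration-bound}. Since $\pi_1=\mu_w$, any $\pi$ can be written as $\pi=\sum_n w_n\delta_{X_n}\otimes\beta_n$ for conditional laws $\beta_n\in\mathcal P(\Xi)$.

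\emph{Step 1 ($\le$).} Fix $\eta\in\mathcal B_{\varepsilon,\tau}(\mu_w)$ with witnesses $\pi,\pi'$. Since $D_\phi(\pi'\|\pi)\le\tau<\infty$, we have $\pi'\ll\pi$, so $\pi'$ is supported on $\bigcup_n\{X_n\}\times\Xi$. Let $\rho=d\pi'/d\pi$. Define
\[
a_n\coloneqq\int\rho(X_n,\widehat x)\,\beta_n(d\widehat x),
\qquad
\nu_n'(d\widehat x)\coloneqq\rho(X_n,\widehat x)\beta_n(d\widehat x)/a_n
\]
when $a_n>0$, and take $\nu_n'$ arbitrary otherwise. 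Then $\pi'=\sum_n w_na_n\,\delta_{X_n}\otimes\nu_n'$ and $\sum_n w_na_n=1$. Jensen's inequality applied to the convex $\phi$ under each $\beta_n$ gives
\[
\sum_n w_n\phi(a_n)\le\sum_n w_n\int\phi(\rho(X_n,\cdot))\,d\beta_n=D_\phi(\pi'\|\pi)\le\tau .
\]
Hence $a\in Q_\phi(\tau)$. The transport constraint on $\pi'$ becomes exactly the constraint defining $T_a(\varepsilon)$. Because $\eta=\pi'_2=\sum_n w_na_n\nu_n'$, we get $\mathbb E_\eta[\ell_{\mathrm{DH}}]=\sum_n w_na_n\int\ell_{\mathrm{DH}}\,d\nu_n'\le T_a(\varepsilon)$. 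Boundedness in Assumption~\ref{ass:loss-regularity} ensures all integrals are finite.

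\emph{Step 2 ($\ge$).} Fix $a\in Q_\phi(\tau)$ and feasible $\nu_1',\dots,\nu_N'$. A naive choice of $\pi$ with $\pi'$ built on its support can fail, because $\pi'\ll\pi$ is needed and the $\nu_n'$ are arbitrary. We therefore choose the reference coupling $\pi\coloneqq\sum_n w_n\delta_{X_n}\otimes\nu_n'$, which has $\pi_1=\mu_w$; the definition only constrains $\pi_1$. Next set $\pi'\coloneqq\sum_n w_na_n\delta_{X_n}\otimes\nu_n'$. This is a probability measure since $\sum_n w_na_n=1$. Moreover $d\pi'/d\pi(X_n,\cdot)=a_n$, which is constant in $\widehat x$, so $D_\phi(\pi'\|\pi)=\sum_n w_n\phi(a_n)\le\tau$. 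The transport cost of $\pi'$ equals the constraint in $T_a(\varepsilon)$ and is therefore at most $\varepsilon^p$. Thus $\eta\coloneqq\pi'_2\in\mathcal B_{\varepsilon,\tau}(\mu_w)$, and its expected loss is the objective of $T_a(\varepsilon)$. Taking suprema over $\nu'$ and then over $a$ gives the reverse inequality.

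\emph{Main obstacle.} The delicate point is Step 1. It requires a rigorous disintegration of $\pi$ and $\pi'$ along indexed atoms, including the case of coinciding atoms and of $a_n=0$. It also requires the Jensen step for $\phi$, which needs lower semicontinuity and properness so that $\int\phi(\rho)\,d\beta_n$ is well defined, possibly as $+\infty$. Measurability of $\rho(X_n,\cdot)$ follows from the Radon--Nikodym theorem on the Polish space $\Xi\times\Xi$. Once these points are settled, both directions are direct bookkeeping.
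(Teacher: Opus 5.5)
Your proposal is correct and follows the paper's proof essentially step for step: you disintegrate $\pi$ along the atoms of $\mu_w$, define the slice masses $a_n=\int\rho(X_n,\cdot)\,d\beta_n$ and apply Jensen for the $\le$ direction, and use the same reference coupling $\pi=\sum_n w_n\delta_{X_n}\otimes\nu_n'$ for the $\ge$ direction. One small correction: in Step~2 the identity $D_\phi(\pi'\|\pi)=\sum_n w_n\phi(a_n)$ holds only when the anchors $X_n$ are distinct, because the divergence is computed on $\Xi\times\Xi$ and you cannot split by index there. If anchors coincide, $d\pi'/d\pi$ on the shared slice is a weighted average of the corresponding $a_n$, and convexity of $\phi$ gives $D_\phi(\pi'\|\pi)\le\sum_n w_n\phi(a_n)\le\tau$, which is how the paper handles it.
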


\begin{proof}[Proof of Proposition~\ref{prop:two_budget_exact_reduction}]
We prove the claim in two steps. First, we show that any feasible
\(\eta\in\mathcal B_{\varepsilon,\tau}\) admits a representation
\(\eta=\sum_{n=1}^N w_n a_n\nu_n'\), where \(a\) satisfies the
\(\phi\)-divergence constraint and the conditional laws \(\nu_n'\) satisfy the
transport constraint. Second, we show that any such representation induces a
feasible law in \(\mathcal B_{\varepsilon,\tau}\).

Fix any feasible $\eta \in \mathcal{B}_{\varepsilon,\tau}$. By definition, there exist couplings $\pi,\pi' \in \mathcal P(\Xi \times \Xi)$ such that
\[
\pi_1=\mu_w,
\qquad
\pi'_2=\eta,
\qquad
\int_{\Xi\times\Xi} d_{\kappa,\lambda}(x,\widehat{x})^p\,d\pi'(x,\widehat{x})\le \varepsilon^p,
\qquad
D_\phi(\pi'\,\|\,\pi)\le \tau.
\]
Since \(\pi_1=\mu_w=\sum_{n=1}^N w_n\delta_{X_n}\), the first coordinate has support contained in the finite set \(\{X_1,\dots,X_N\}\). Hence \(\pi\) admits the disintegration
\[
\pi(dx,d\widehat{x})
=
\sum_{n=1}^N w_n\,\delta_{X_n}(dx)\,\nu_n(d\widehat{x}),
\]
where $\nu_n \in \mathcal P(\Xi)$ are the reference conditional laws.

Because $D_\phi(\pi'\,\|\,\pi)<\infty$, we necessarily have $\pi'\ll\pi$. Let $W=d\pi'/d\pi$ denote the Radon-Nikodym derivative, and define its slices as
\[
W_n(\widehat{x})\coloneqq W(X_n,\widehat{x}),
\qquad
a_n\coloneqq\int_\Xi W_n(\widehat{x})\,\nu_n(d\widehat{x}).
\]
The quantity $a_n$ represents the total mass assigned by $\pi'$ to the slice $\{x=X_n\}$ relative to $\pi$. For each $n$ such that $a_n > 0$, define the normalized conditional probability measure $\nu_n'$ by
\[
\nu_n'(d\widehat{x})\coloneqq\frac{W_n(\widehat{x})}{a_n}\,\nu_n(d\widehat{x}),
\]
and choose an arbitrary probability measure $\nu_n'\in\mathcal P(\Xi)$ when $a_n=0$. This construction cleanly separates the distorted coupling into a mass component $a_n$ and a normalized conditional slice $\nu_n'$. 

The distorted coupling therefore factors as:
\[
\pi'(dx,d\widehat{x})
=
\sum_{n=1}^N w_n a_n\,\delta_{X_n}(dx)\,\nu_n'(d\widehat{x}).
\]
Taking the second marginal (integrating out $dx$) directly yields:
\[
\pi'_2
=
\eta
=
\sum_{n=1}^N w_n a_n\nu_n',
\qquad\text{with}\quad
\sum_{n=1}^N w_n a_n=1,
\]
thus any feasible \(\eta\) can be written as a finite mixture with weights \(w_n a_n\). Consequently, its expected loss equals the objective appearing in
\(T_a(\varepsilon)\):
\[
\mathbb E_{\widehat{X}\sim\eta}\big[\ell_{\mathrm{DH}}(\vartheta;\widehat{X})\big]
=
\sum_{n=1}^N w_n a_n\int_\Xi \ell_{\mathrm{DH}}(\vartheta;\widehat{x})\,\nu_n'(d\widehat{x}).
\]
Moreover, the transport constraint is preserved under this representation:
\[
\int_{\Xi\times\Xi} d_{\kappa,\lambda}(x,\widehat{x})^p\,d\pi'(x,\widehat{x})
=
\sum_{n=1}^N w_n a_n\int_\Xi d_{\kappa,\lambda}(X_n,\widehat{x})^p\,\nu_n'(d\widehat{x})
\le \varepsilon^p.
\]
Finally, by Jensen's inequality applied to the convex function $\phi$,
\begin{align*}
D_\phi(\pi'\,\|\,\pi)
&=
\sum_{n=1}^N w_n\int_\Xi \phi\big(W_n(\widehat{x})\big)\,\nu_n(d\widehat{x})\\
&\ge
\sum_{n=1}^N w_n\phi\!\left(\int_\Xi W_n(\widehat{x})\,\nu_n(d\widehat{x})\right)
=
\sum_{n=1}^N w_n\phi(a_n),
\end{align*}
so $a\in Q_\phi(\tau)$ since $D_\phi(\pi'\,\|\,\pi)\leq \tau$. Thus, for this specific $a$, the conditional slices $(\nu_1',\dots,\nu_N')$ are feasible for the inner supremum $T_a(\varepsilon)$, attaining the exact same objective value. This proves
\[
V_{\vartheta}(\varepsilon,\tau)
\le
\sup_{a\in Q_\phi(\tau)}T_a(\varepsilon).
\]

Conversely, fix any $a\in Q_\phi(\tau)$ and any $\nu_1',\dots,\nu_N'\in\mathcal P(\Xi)$ feasible in the definition of $T_a(\varepsilon)$. Set
\[
\pi(dx,d\widehat{x})
\coloneqq
\sum_{n=1}^N w_n\,\delta_{X_n}(dx)\,\nu_n'(d\widehat{x}),
\qquad
\pi'(dx,d\widehat{x})
\coloneqq
\sum_{n=1}^N w_n a_n\,\delta_{X_n}(dx)\,\nu_n'(d\widehat{x}).
\]
Then $\pi_1=\mu_w$, $\pi'_2=\sum_{n=1}^N w_n a_n\nu_n'$, and the transport cost evaluates to:
\[
\int_{\Xi\times\Xi} d_{\kappa,\lambda}(x,\widehat{x})^p\,d\pi'(x,\widehat{x})
=
\sum_{n=1}^N w_n a_n\int_\Xi d_{\kappa,\lambda}(X_n,\widehat{x})^p\,\nu_n'(d\widehat{x})
\le \varepsilon^p.
\]
Moreover, $\pi'\ll\pi$.  Where several components overlap,
\(d\pi'/d\pi\) is a weighted average of the corresponding \(a_n\)'s, so applying Jensen's inequality with the convexity of $\phi$ yields
\[
D_\phi(\pi'\,\|\,\pi)
\le
\sum_{n=1}^N w_n\phi(a_n)
\le \tau.
\]
Hence $\eta\coloneqq\sum_{n=1}^N w_n a_n\nu_n'$ belongs to $\mathcal B_{\varepsilon,\tau}$ and attains exactly the same objective value as in $T_a(\varepsilon)$. Taking the supremum over all $a \in Q_\phi(\tau)$ and their corresponding feasible conditional slices $(\nu_1', \dots, \nu_N')$ proves the reverse inequality
\[
V_{\vartheta}(\varepsilon,\tau)
\ge
\sup_{a\in Q_\phi(\tau)}T_a(\varepsilon).
\]

\end{proof}

Next, we show the \emph{uniform} first-order expansion of the Wasserstein ambiguity.
\begin{lemma}[Uniform OT expansion under small reweightings]
\label{lem:uniform_ot_reweighted}
Let \(p\in(1,\infty)\) and suppose Assumption~\ref{ass:loss-regularity} holds, with
\(\|\cdot\|_{\kappa,\lambda,p}^{\ast}\) and \(h\) as in
Proposition~\ref{prop:dual-geometry-main}. For \(\delta\in(0,1)\), define
\begin{equation}
\label{eq:A_delta}
A_\delta
\coloneqq
\Big\{
a\in[0,\infty)^N:
\sum_{n=1}^N w_n a_n=1,\ 
\|a-\mathbf 1\|_\infty\le\delta
\Big\}.
\end{equation}
For each $a \in A_\delta$, set $\widehat w_n\coloneqq w_na_n$ for $n=1,\dots,N$ and define the aggregate first-order sensitivity
\[
\Upsilon_{\widehat w}(\vartheta)
\coloneqq
\Bigg(\sum_{n=1}^N \widehat w_n \|g_n(\vartheta)\|_{\kappa,\lambda,p}^{\ast\,q}\Bigg)^{1/q}.
\]
Here $\widehat w=\widehat w(a)$; its dependence on $a$ is implicit in the notation throughout this lemma and its proof.
Then, for every fixed $\delta \in (0,1)$, the following expansion holds uniformly over $a \in A_\delta$:
\begin{equation}
\label{eq:uniform_ot_reweighted}
\sup_{a\in A_\delta}
\left|
T_a(\varepsilon)
-
\sum_{n=1}^N \widehat w_n L_n(\vartheta)
-
\varepsilon\Upsilon_{\widehat w}(\vartheta)
\right|
=
o(\varepsilon)
\qquad \text{as } \varepsilon \downarrow 0.
\end{equation}
\end{lemma}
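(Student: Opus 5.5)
We prove the lemma through matching lower and upper bounds, with every error term controlled uniformly in $a\in A_\delta$. The key fact is that on $A_\delta$ we have $\widehat w_n=w_na_n\in[(1-\delta)w_n,(1+\delta)w_n]$. As a result, every constant involving the reweighted masses is comparable to the corresponding constant for $w$, with factors depending only on $\delta$. Throughout we restrict to indices with $w_n>0$, and we write $\omega_n(r)\coloneqq\sup_{0<\|\Delta\|_{\kappa,\lambda,p}\le r}|\ell_{\mathrm{DH}}(\vartheta;X_n+\Delta)-L_n-\langle g_n,\Delta\rangle|/\|\Delta\|_{\kappa,\lambda,p}$ and $\omega(r)\coloneqq\max_n\omega_n(r)$. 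Assumption~\ref{ass:loss-regularity} gives $\omega(r)\to0$ as $r\downarrow0$, and $\omega$ does not depend on $a$.

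\emph{Lower bound.} We take the Dirac slices $\nu_n'=\delta_{\widehat X_n}$, where $\widehat X_n$ is the structured perturbation \eqref{eq:structured-perturbation-short} built with $\Upsilon_{\widehat w}$. By Proposition~\ref{prop:dual-geometry-main} and the identity $(q-1)p=q$, the transport cost is $\sum_n\widehat w_n\varepsilon^p\|g_n\|^{\ast q}\Upsilon_{\widehat w}^{-q}=\varepsilon^p$, and the linear gain is $\varepsilon\sum_n\widehat w_n\|g_n\|^{\ast q}\Upsilon_{\widehat w}^{1-q}=\varepsilon\Upsilon_{\widehat w}$. For each $n$ with $w_n>0$, the displacement satisfies $\|\widehat X_n-X_n\|\le\varepsilon\,\|g_n\|^{\ast(q-1)}\Upsilon_{\widehat w}^{1-q}\le \varepsilon\,C_\delta$, with $C_\delta=((1-\delta)\min_n w_n)^{-1/p}$; this uses $\widehat w_n\|g_n\|^{\ast q}\le\Upsilon_{\widehat w}^q$. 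Hence the points stay in $\Xi$ for small $\varepsilon$, since the $X_n$ are interior. The remainder is then at most $\varepsilon C_\delta\,\omega(C_\delta\varepsilon)$, which is $o(\varepsilon)$ uniformly in $a$. The degenerate case $\Upsilon_{\widehat w}=0$ is trivial.

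\emph{Upper bound.} This is the main obstacle, because the slices $\nu_n'$ may send a little mass far away, where only boundedness of the loss is available. Fix any feasible family $(\nu_n')$, a radius $r>0$, and let $M\coloneqq\sup_\Xi|\ell_{\mathrm{DH}}(\vartheta;\cdot)|<\infty$. Split each integral into the near region $\{d(X_n,\widehat x)\le r\}$ and the far region.
\begin{itemize}
\item On the near region, use $\ell\le L_n+\langle g_n,\widehat x-X_n\rangle+\omega(r)\,d(X_n,\widehat x)$ together with $\langle g_n,\Delta\rangle\le\|g_n\|^\ast\|\Delta\|$.
\item On the far region, use $\ell\le L_n+2M$. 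By Markov's inequality, the $\widehat w$-mass moved farther than $r$ is at most $\varepsilon^p/r^p$, so this region contributes at most $2M\varepsilon^p r^{-p}$ together with a linear term bounded as before.
\end{itemize}
After adding back the dual-norm bound over both regions, Hölder's inequality under $\widehat w$ gives
\[
\sum_n\widehat w_n\!\int\!\|g_n\|^\ast d(X_n,\widehat x)\,\nu_n'(d\widehat x)\le\Upsilon_{\widehat w}\,\varepsilon,
\qquad
\sum_n\widehat w_n\!\int\! d(X_n,\widehat x)\,\nu_n'(d\widehat x)\le\varepsilon.
\]
Combining the pieces yields
\[
T_a(\varepsilon)\le\sum_n\widehat w_nL_n+\varepsilon\Upsilon_{\widehat w}+\varepsilon\,\omega(r)+2M\varepsilon^pr^{-p}.
\]
We now choose $r=r(\varepsilon)\coloneqq\varepsilon^{(p-1)/(2p)}$. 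Then $r\to0$ and $\varepsilon^p r^{-p}=\varepsilon^{(p+1)/2}=o(\varepsilon)$ because $p>1$. The linearization needs to be valid only on $\{\|\Delta\|\le r\}$ once $r$ is small enough that these balls lie in $\Xi$. None of these bounds depends on $a$ beyond its membership in $A_\delta$.

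Combining the two bounds shows that the absolute deviation in \eqref{eq:uniform_ot_reweighted} is at most $\varepsilon\,\omega(r(\varepsilon))+2M\varepsilon^{(p+1)/2}+\varepsilon C_\delta\,\omega(C_\delta\varepsilon)$ uniformly over $A_\delta$. This quantity is $o(\varepsilon)$, which proves the lemma.
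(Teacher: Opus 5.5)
Your proof is correct and follows the paper's argument. The upper bound uses the same near/far split, with boundedness plus Markov's inequality on the far region and H\"older under $\widehat w$ for the gradient term, and the lower bound uses the same structured Dirac perturbation. The differences are cosmetic: you take an explicit radius $r(\varepsilon)=\varepsilon^{(p-1)/(2p)}$ where the paper fixes $\zeta$, lets $\varepsilon\downarrow0$ and then $\zeta\downarrow0$, and you bound the displacement uniformly via $\widehat w_n\|g_n\|^{\ast q}\le\Upsilon_{\widehat w}^q$ rather than via $\Upsilon_{\widehat w}^q\ge(1-\delta)\Upsilon_w^q$.
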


\begin{proof}[Proof of Lemma~\ref{lem:uniform_ot_reweighted}]
We establish matching upper and lower bounds uniformly over \(a\in A_\delta\). The upper bound follows from a local linearization of \(\ell_{\mathrm{DH}}\) around the nominal anchors; the lower bound uses an explicit deterministic perturbation along the dual maximizing directions.

Let
\[
B_\ell \coloneqq
\sup_{\widehat x\in\Xi}
|\ell_{\mathrm{DH}}(\vartheta;\widehat x)|<\infty
\]
and fix \(\zeta>0\). By the local expansion in
Assumption~\ref{ass:loss-regularity} and the finiteness of \(N\), there exists
\(r_\zeta>0\) such that, for every \(n=1,\dots,N\),
\begin{equation}
\label{eq:linearization_error}
\big|\ell_{\mathrm{DH}}(\vartheta;\widehat{x}) - L_n(\vartheta) - \langle g_n(\vartheta), \widehat{x}-X_n\rangle\big| \le \zeta d_{\kappa,\lambda}(X_n, \widehat{x})
\end{equation}
for all $\widehat{x} \in \Xi$ with $d_{\kappa,\lambda}(X_n, \widehat{x}) \le r_\zeta$.

To prove the upper bound, fix $a \in A_\delta$ and let $(\nu_1', \dots, \nu_N')$ be any feasible family of conditional slices for the inner problem $T_a(\varepsilon)$. Define the local neighborhood $B_n \coloneqq \{\widehat{x} \in \Xi : d_{\kappa,\lambda}(X_n, \widehat{x}) \le r_\zeta\}$. Since $\Xi = B_n \cup B_n^c$, we decompose the expected loss as 
\[
\int_\Xi \ell_{\mathrm{DH}}(\vartheta; \widehat{x}) \, \nu_n'(d\widehat{x}) = \int_{B_n} \ell_{\mathrm{DH}}(\vartheta; \widehat{x}) \, \nu_n'(d\widehat{x}) + \int_{B_n^c} \ell_{\mathrm{DH}}(\vartheta; \widehat{x}) \, \nu_n'(d\widehat{x}).
\]
On $B_n$, the linearization gives $\ell_{\mathrm{DH}}(\vartheta;\widehat{x}) \le L_n(\vartheta) + \langle g_n(\vartheta), \widehat{x} - X_n\rangle + \zeta d_{\kappa,\lambda}(X_n, \widehat{x})$, while on $B_n^c$, global boundedness implies $\ell_{\mathrm{DH}}(\vartheta;\widehat{x}) - L_n(\vartheta) \le 2B_\ell$.

Integrating these bounds and utilizing the identity $L_n(\vartheta) \nu_n'(B_n) + L_n(\vartheta) \nu_n'(B_n^c) = L_n(\vartheta)$, we obtain
\[
\int_\Xi \ell_{\mathrm{DH}}(\vartheta; \widehat{x})\,\nu_n'(d\widehat{x}) \le L_n(\vartheta) + \int_{B_n}\langle g_n(\vartheta), \widehat{x} - X_n\rangle\,\nu_n'(d\widehat{x}) + \zeta \int_{B_n} d_{\kappa,\lambda}(X_n, \widehat{x})\,\nu_n'(d\widehat{x}) + 2B_\ell\nu_n'(B_n^c).
\]
By the definition of the dual norm, $\langle g_n(\vartheta), \widehat{x} - X_n\rangle \le \|g_n(\vartheta)\|_{\kappa,\lambda,p}^* d_{\kappa,\lambda}(X_n,\widehat{x})$. Since this upper bound is non-negative, we can extend the integration domain from $B_n$ to the entire space $\Xi$ by monotonicity. Taking the weighted sum with $\widehat w_n$ yields the nominal reweighted loss alongside three distinct error terms: a gradient bound ($I_{\mathrm{grad}}$), a linearization error ($I_{\mathrm{lin}}$), and a tail bound ($I_{\mathrm{tail}}$):
\begin{align*}
&\sum_{n=1}^N \widehat w_n
\int_\Xi \ell_{\mathrm{DH}}(\vartheta;\widehat x)\,
\nu_n'(d\widehat x) \\
&\le
\sum_{n=1}^N \widehat w_n L_n(\vartheta)
+ \underbrace{
\sum_{n=1}^N \widehat w_n
\int_\Xi
\|g_n(\vartheta)\|_{\kappa,\lambda,p}^*
d_{\kappa,\lambda}(X_n,\widehat x)\,
\nu_n'(d\widehat x)
}_{I_{\mathrm{grad}}}  \\
&\quad
+ \underbrace{\zeta
\sum_{n=1}^N \widehat w_n
\int_\Xi d_{\kappa,\lambda}(X_n,\widehat x)\,
\nu_n'(d\widehat x)
}_{I_{\mathrm{lin}}}
+
\underbrace{
2B_\ell\sum_{n=1}^N \widehat w_n\nu_n'(B_n^c)
}_{I_{\mathrm{tail}}}.
\end{align*}
For the gradient bound $I_{\mathrm{grad}}$, H\"older's inequality and the feasibility of $(\nu_n')$ imply that  $\sum \widehat w_n \int_\Xi d_{\kappa,\lambda}(X_n, \widehat{x})^p \, \nu_n'(d\widehat{x}) \le \varepsilon^p$, hence we have
\begin{align*}
I_{\mathrm{grad}}
&\le
\left(
\sum_{n=1}^N \widehat w_n
\|g_n(\vartheta)\|_{\kappa,\lambda,p}^{*\,q}
\right)^{1/q}
\left(
\sum_{n=1}^N \widehat w_n
\int_\Xi d_{\kappa,\lambda}(X_n,\widehat x)^p\,\nu_n'(d\widehat x)
\right)^{1/p}  \\
&\le
\Upsilon_{\widehat w}(\vartheta)\varepsilon.
\end{align*}
Similarly, the linearization error satisfies $
I_{\mathrm{lin}}
\le
\zeta\varepsilon$.
Finally, since \(d_{\kappa,\lambda}(X_n,\widehat x)>r_\zeta\) on \(B_n^c\), Markov's inequality gives
\[
\nu_n'(B_n^c)
\le
r_\zeta^{-p}
\int_\Xi d_{\kappa,\lambda}(X_n,\widehat x)^p\,\nu_n'(d\widehat x),
\]
and therefore
\[
I_{\mathrm{tail}}
\le
2B_\ell r_\zeta^{-p}\varepsilon^p.
\]

Substituting these three bounds and taking the supremum over all feasible slices yields the uniform upper bound:
\begin{equation}
\label{eq:uniform_ot_upper}
T_a(\varepsilon) \le \sum_{n=1}^N \widehat w_n L_n(\vartheta) + \varepsilon\Upsilon_{\widehat w}(\vartheta) + \zeta\varepsilon + 2B_\ell r_\zeta^{-p}\varepsilon ^p.
\end{equation}

We now construct a matching lower bound by evaluating a specific discrete attack. If $\Upsilon_w(\vartheta) = 0$, then $\|g_n(\vartheta)\|_{\kappa,\lambda,p}^{\ast\,q}=0$ for every $n$ (since $w_n>0$), meaning $\Upsilon_{\widehat w}(\vartheta)=0$ for all $a\in A_\delta$. Setting $\nu_n'=\delta_{X_n}$ yields $T_a(\varepsilon)\ge \sum_{n=1}^N \widehat w_n L_n(\vartheta)$, which is sufficient. We therefore assume $\Upsilon_w(\vartheta) > 0$.

For every $a\in A_\delta$, the constraint $\|a-\mathbf{1}\|_\infty \le \delta$ implies $a_n \ge 1-\delta$. Thus, $\Upsilon_{\widehat w}(\vartheta)^q = \sum_{n=1}^N w_n a_n \|g_n(\vartheta)\|_{\kappa,\lambda,p}^{\ast\,q} \ge (1-\delta)\sum_{n=1}^N w_n \|g_n(\vartheta)\|_{\kappa,\lambda,p}^{\ast\,q} = (1- \delta)\Upsilon_w(\vartheta)^q$, meaning $\Upsilon_{\widehat w}(\vartheta)$ is uniformly bounded away from $0$. Define the deterministic adversarial perturbations:
\[
\widehat X_n^a \coloneqq X_n + \varepsilon \, h(g_n(\vartheta)) \, \|g_n(\vartheta)\|_{\kappa,\lambda,p}^{*\,(q-1)} \, \Upsilon_{\widehat w}(\vartheta)^{1-q}.
\]
The transport distance is $d_{\kappa,\lambda}(X_n,\widehat X_n^a) = \varepsilon\, \|g_n(\vartheta)\|_{\kappa,\lambda,p}^{*\,(q-1)} \Upsilon_{\widehat w}(\vartheta)^{1-q}$ using that $h(g_n(\vartheta))$ is a unit vector. Recalling that $(q-1)p=q$, the total transport cost evaluates exactly to the budget:
\[
\sum_{n=1}^N \widehat w_n d_{\kappa,\lambda}(X_n,\widehat X_n^a)^p = \varepsilon^p\,\Upsilon_{\widehat w}(\vartheta)^{-q} \sum_{n=1}^N \widehat w_n \|g_n(\vartheta)\|_{\kappa,\lambda,p}^{\ast\,q} = \varepsilon^p\,\Upsilon_{\widehat w}(\vartheta)^{-q} \Upsilon_{\widehat w}(\vartheta)^q = \varepsilon^p.
\]
Hence, the discrete point-mass choice $\nu_n'=\delta_{\widehat X_n^a}$ is feasible for $T_a(\varepsilon)$. Furthermore, $d_{\kappa,\lambda}(X_n,\widehat X_n^a)\to 0$ uniformly in $a\in A_\delta$ as $\varepsilon\downarrow0$, and by Assumption~\ref{ass:loss-regularity} these perturbations remain in $\Xi$.

Using the linearization error \eqref{eq:linearization_error} at the perturbed points yields:
\[
\ell_{\mathrm{DH}}(\vartheta; \widehat X_n^a) \ge L_n(\vartheta) + \langle g_n(\vartheta),\widehat X_n^a-X_n\rangle - \zeta d_{\kappa,\lambda}(X_n,\widehat X_n^a).
\]
Multiplying by $\widehat w_n$ and summing over $n$, we explicitly evaluate the gradient inner product. We proceed in three steps: first, we substitute the definition of the perturbation $\widehat X_n^a$; second, we apply the dual maximizer property $\langle g_n(\vartheta), h(g_n(\vartheta)) \rangle = \|g_n(\vartheta)\|_{\kappa,\lambda,p}^*$; and third, we recognize the definition of $\Upsilon_{\widehat w}(\vartheta)^q = \sum_{n=1}^N \widehat w_n \|g_n(\vartheta)\|_{\kappa,\lambda,p}^{\ast\,q}$. This yields:
\begin{align*}
\sum_{n=1}^N \widehat w_n \langle g_n(\vartheta),\widehat X_n^a-X_n\rangle 
&= \sum_{n=1}^N \widehat w_n \Big\langle g_n(\vartheta), \, \varepsilon h(g_n(\vartheta)) \|g_n(\vartheta)\|_{\kappa,\lambda,p}^{*\,(q-1)} \Upsilon_{\widehat w}(\vartheta)^{1-q} \Big\rangle \\
&= \varepsilon \Upsilon_{\widehat w}(\vartheta)^{1-q} \sum_{n=1}^N \widehat w_n \langle g_n(\vartheta), h(g_n(\vartheta)) \rangle \|g_n(\vartheta)\|_{\kappa,\lambda,p}^{*\,(q-1)} \\
&= \varepsilon \Upsilon_{\widehat w}(\vartheta)^{1-q} \sum_{n=1}^N \widehat w_n \|g_n(\vartheta)\|_{\kappa,\lambda,p}^{*\,q} \\
&= \varepsilon \Upsilon_{\widehat w}(\vartheta)^{1-q} \Upsilon_{\widehat w}(\vartheta)^q \\
&= \varepsilon \Upsilon_{\widehat w}(\vartheta).
\end{align*}
Substituting this into the aggregated linear bound directly confirms the lower bound:
\begin{align*}
T_a(\varepsilon) &\ge \sum_{n=1}^N \widehat w_nL_n(\vartheta) + \varepsilon \Upsilon_{\widehat w}(\vartheta) - \zeta \sum_{n=1}^N \widehat w_n d_{\kappa,\lambda}(X_n,\widehat X_n^a)\\
&\ge \sum_{n=1}^N \widehat w_nL_n(\vartheta) + \varepsilon\Upsilon_{\widehat w}(\vartheta) - \zeta\varepsilon,
\end{align*}
uniformly over $a\in A_\delta$. Combining this with (\ref{eq:uniform_ot_upper}) and noting $\varepsilon^p=o(\varepsilon)$ as $\varepsilon \downarrow 0$ proves (\ref{eq:uniform_ot_reweighted}).

\end{proof}

% ============================================================

\paragraph{Pure reweighting expansion.}
We isolate the effect of the \(\phi\)-divergence budget \(\tau\) by analyzing the pure reweighting problem. Recall that $L_n(\vartheta) \coloneqq \ell_{\mathrm{DH}}(\vartheta; X_n)$ and
\[
\bar L_w(\vartheta) \coloneqq \sum_{n=1}^N w_n L_n(\vartheta),
\qquad
\sigma_w(\vartheta) \coloneqq \Bigg(\sum_{n=1}^N w_n\big(L_n(\vartheta)-\bar L_w(\vartheta)\big)^2\Bigg)^{1/2}.
\]
Define the maximum expected loss achievable through anchor reweighting under budget \(\tau\) by
\[
R_\phi(\tau) \coloneqq \sup_{a\in Q_\phi(\tau)} \sum_{n=1}^N w_n a_n L_n(\vartheta).
\]

With these aggregate statistics defined, we can apply a standard second-order Taylor expansion to the $\phi$-divergence constraint to extract the asymptotic rate of the pure reweighting problem.

\begin{lemma}[Small-budget expansion of the reweighting problem]
\label{lem:phi_reweighting_expansion}
Let \(\phi\) be a locally regular entropy function. Then, as
\(\tau\downarrow0\), the reweighting problem admits the expansion
\begin{equation}
\label{eq:phi_reweighting_expansion}
R_\phi(\tau) = \bar L_w(\vartheta) + \sqrt{\frac{2}{\phi''(1)}}\,\sigma_w(\vartheta)\sqrt{\tau} + o(\sqrt{\tau}).
\end{equation}
\end{lemma}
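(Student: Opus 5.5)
We prove the expansion by matching a lower bound, obtained from the explicit tilt $a_n=1+\alpha_\tau Y_n$ used in the first-order adversarial construction, with an upper bound obtained by localizing $Q_\phi(\tau)$ near $\mathbf 1$ and applying Cauchy--Schwarz. Write $Y_n=L_n-\bar L_w$, so that $\sum_n w_n a_n L_n-\bar L_w=\sum_n w_n(a_n-1)Y_n$ for every $a\in Q_\phi(\tau)$, because $\sum_n w_n(a_n-1)=0$. If $\sigma_w=0$, then $Y_n=0$ on the support of $w$ and $R_\phi(\tau)=\bar L_w$ exactly. We therefore assume $\sigma_w>0$. Throughout, $c\coloneqq\phi''(1)/2>0$, and local regularity gives $\phi(1+u)=cu^2(1+o(1))$ as $u\to0$, using $\phi(1)=0$ and $\phi'(1)=0$.

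The second fact needs care. Since only $\phi(1)=0$ and convexity are assumed, $\phi'(1)$ may be nonzero. For $a$ with $\sum_n w_na_n=1$, however, $\sum_n w_n\phi(a_n)=\sum_n w_n\tilde\phi(a_n)$ with $\tilde\phi(u)=\phi(u)-\phi'(1)(u-1)$. We may therefore replace $\phi$ by $\tilde\phi\ge0$, which has $\tilde\phi(1)=\tilde\phi'(1)=0$ and the same second derivative.

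\emph{Lower bound.} Take $a_n=1+\alpha Y_n$ with $\alpha=\sqrt{\tau/c}\,(1-\eta)/\sigma_w$ for a fixed small $\eta>0$. For $\tau$ small we have $1+\alpha Y_n\ge0$, and
\[
\sum_n w_n\phi(1+\alpha Y_n)=c\alpha^2\sigma_w^2(1+o(1))=\tau(1-\eta)^2(1+o(1))\le\tau,
\]
so $a\in Q_\phi(\tau)$. The objective gain is $\alpha\sigma_w^2=(1-\eta)\sigma_w\sqrt{\tau/c}$. Letting $\eta\downarrow0$ gives
\[
R_\phi(\tau)\ge\bar L_w+\sqrt{2/\phi''(1)}\,\sigma_w\sqrt\tau-o(\sqrt\tau).
\]

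\emph{Upper bound.} The first step is localization: we show $\sup_{a\in Q_\phi(\tau)}\|a-\mathbf 1\|_\infty\to0$ as $\tau\downarrow0$. Since $\tilde\phi$ is convex, nonnegative and vanishes only near $1$ to second order with $\tilde\phi''(1)>0$, we have $\tilde\phi(u)\ge m(\delta)>0$ whenever $|u-1|\ge\delta$. Because $\min_{w_n>0}w_n>0$, the budget $\tau<\min_n w_n\,m(\delta)$ then forces $|a_n-1|<\delta$ for every $n$. Next, choose $\delta$ small enough that $\tilde\phi(1+u)\ge c(1-\eta)u^2$ for $|u|\le\delta$. Then every feasible $a$ satisfies $\sum_n w_n(a_n-1)^2\le\tau/(c(1-\eta))$. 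Cauchy--Schwarz in $L^2(w)$ gives
\[
\sum_n w_n(a_n-1)Y_n\le\sigma_w\sqrt{\tau/(c(1-\eta))},
\]
and letting $\eta\downarrow0$ yields the matching upper bound.

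The main obstacle is this localization step, which needs a uniform quadratic lower bound on $\tilde\phi$ over the whole feasible set rather than only near $1$. The subtleties are that $\phi$ may equal $+\infty$ away from $1$, and that $\phi'(1)$ may be nonzero. Both are handled by the affine correction $\tilde\phi$ together with convexity: convexity implies $\tilde\phi(1\pm\delta')$ is a positive lower bound for $\tilde\phi$ outside $[1-\delta',1+\delta']$. The finite support of $w$ then makes this bound uniform across all coordinates.
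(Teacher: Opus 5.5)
Your proof is correct and follows the paper's argument essentially step for step. The shared steps are the trivial case $\sigma_w=0$, the linear tilt $a_n=1+\alpha Y_n$ with a slightly shrunk step for the lower bound, localization of $Q_\phi(\tau)$ near $\mathbf 1$ followed by the local quadratic lower bound and Cauchy--Schwarz for the upper bound, and letting the slack parameter vanish. Your affine correction $\tilde\phi$ is harmless but unnecessary: entropy functions here map into $[0,\infty]$, so $\phi\ge\phi(1)=0$ together with differentiability at $1$ already forces $\phi'(1)=0$. Your convexity-based positivity bound away from $1$ is a valid substitute for the paper's lower-semicontinuity and compactness argument on $[0,1/\min_k w_k]$.
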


\begin{proof}[Proof of Lemma \ref{lem:phi_reweighting_expansion}]
We prove the expansion by establishing matching upper and lower bounds in the small-budget regime. The key idea is that for sufficiently small $\tau$, every feasible reweighting stays strictly close to the uniform baseline $a = \mathbf{1}$. This forces the divergence constraint into a local regime where $\phi$ is regular and reduces the optimization to maximizing a linear objective subject to a weighted variance constraint.

If $\sigma_w(\vartheta)=0$, then $L_n(\vartheta) = \bar L_w(\vartheta)$ for all $n$ (since $w_n>0$). For any feasible $a \in Q_\phi(\tau)$, the expected loss is exactly $\sum_{n=1}^N w_n a_n L_n(\vartheta) = \bar L_w(\vartheta) \sum_{n=1}^N w_n a_n = \bar L_w(\vartheta)$, yielding $R_\phi(\tau)=\bar L_w(\vartheta)$ for all $\tau \ge 0$, which trivially satisfies the expansion. We therefore assume $\sigma_w(\vartheta)>0$.

\noindent Define the centered losses $Y_n(\vartheta) \coloneqq L_n(\vartheta)-\bar L_w(\vartheta)$. By definition of the weighted empirical mean and variance, they satisfy $\sum_{n=1}^N w_n Y_n(\vartheta) = 0$ and $\sum_{n=1}^N w_n Y_n(\vartheta)^2 = \sigma_w(\vartheta)^2$.

Fix $0<\zeta<\phi''(1)/2$. Because \(\phi\) is locally regular, \(1\) is its unique minimizer and
\(\phi'(1)=0\). Since $\phi$ is twice continuously differentiable near $1$, a second-order Taylor expansion yields $\phi(1+s) = \frac{1}{2}\phi''(1)s^2 + o(s^2)$. Thus, there exists $\delta_\zeta\in(0,1)$ such that for all $|s|\le \delta_\zeta$, the divergence is locally bounded by quadratics:
\begin{equation}
\label{eq:phi_local_quadratic_bounds}
\left(\frac{\phi''(1)}{2}-\zeta\right)s^2
\le \phi(1+s)
\le \left(\frac{\phi''(1)}{2}+\zeta\right)s^2.
\end{equation}

For this quadratic bound to be useful, we must guarantee that small budgets restrict all feasible weights to this neighborhood. For any feasible $a \in Q_\phi(\tau)$, the constraints $a_n \ge 0$ and $\sum_{n=1}^N w_n a_n = 1$ strictly bound the weights: $a_n \le 1/\min_k w_k$. Define the compact set $K_{\delta_\zeta} \coloneqq [0, 1/\min_k w_k] \setminus (1-\delta_\zeta, 1+\delta_\zeta)$. Since $1 \notin K_{\delta_\zeta}$ and $\phi$ is strictly positive away from $1$, its minimum on this set is strictly positive: $m_\zeta \coloneqq \min_{s\in K_{\delta_\zeta}} \phi(s)>0$. If any weight strays such that $|a_n-1|\ge \delta_\zeta$, then $a_n \in K_{\delta_\zeta}$, forcing $\tau \ge \sum_{k=1}^N w_k\phi(a_k) \ge w_n\phi(a_n) \ge (\min_k w_k) m_\zeta$. Consequently, for all $\tau < (\min_k w_k)m_\zeta$, every feasible $a \in Q_\phi(\tau)$ must satisfy $|a_n-1|<\delta_\zeta$ for all $n$.

We next bound the objective. For any feasible $a$, define the deviations $u_n \coloneqq a_n-1$. Since $\sum_{n=1}^N w_n u_n = 0$, the objective simplifies to $\sum_{n=1}^N w_n a_n L_n(\vartheta) = \sum_{n=1}^N w_n (1+u_n)(\bar L_w(\vartheta)+Y_n(\vartheta)) = \bar L_w(\vartheta) + \sum_{n=1}^N w_n u_n Y_n(\vartheta)$. For sufficiently small $\tau$, we know $|u_n|<\delta_\zeta$. Applying the lower bound in (\ref{eq:phi_local_quadratic_bounds}) with $s=u_n$ bounds the variance of the weights:
\[
\tau
\ge \sum_{n=1}^N w_n\phi(1+u_n)
\ge \left(\frac{\phi''(1)}{2}-\zeta\right)\sum_{n=1}^N w_n u_n^2.
\]
Applying the Cauchy--Schwarz inequality to the objective deviation yields:
\[
\sum_{n=1}^N w_n u_n Y_n(\vartheta) \le \left(\sum_{n=1}^N w_n u_n^2\right)^{1/2} \left(\sum_{n=1}^N w_n Y_n(\vartheta)^2\right)^{1/2} \le \sqrt{\frac{\tau}{\phi''(1)/2-\zeta}}\,\sigma_w(\vartheta).
\]
Taking the supremum over all feasible $a$ gives the uniform upper bound:
\begin{equation}
\label{eq:phi_upper_local}
R_\phi(\tau) \le \bar L_w(\vartheta) + \frac{\sigma_w(\vartheta)}{\sqrt{\phi''(1)/2-\zeta}}\sqrt{\tau}.
\end{equation}

For the matching lower bound, we construct an explicit feasible reweighting. Define the weights $a_n \coloneqq 1 + \alpha_\tau Y_n(\vartheta)$, where $\alpha_\tau \coloneqq \sqrt{\tau} \big/ \big(\sigma_w(\vartheta)\sqrt{\phi''(1)/2+\zeta}\big)$. By construction, $\sum_{n=1}^N w_n a_n=1$. For sufficiently small $\tau$, we have $|\alpha_\tau Y_n(\vartheta)| \le \delta_\zeta$, allowing us to use the upper bound in (\ref{eq:phi_local_quadratic_bounds}) to verify feasibility:
\[
\sum_{n=1}^N w_n\phi(a_n)
\le
\left(\frac{\phi''(1)}{2}+\zeta\right)\alpha_\tau^2 \sum_{n=1}^N w_n Y_n(\vartheta)^2
=
\left(\frac{\phi''(1)}{2}+\zeta\right)\alpha_\tau^2 \sigma_w(\vartheta)^2
=
\tau,
\]
confirming $a \in Q_\phi(\tau)$. Evaluating the expected loss under $a$ yields $\sum_{n=1}^N w_n a_n L_n(\vartheta) = \bar L_w(\vartheta) + \alpha_\tau\sum_{n=1}^N w_n Y_n(\vartheta) L_n(\vartheta)$. Since $\sum_{n=1}^N w_n Y_n(\vartheta) \bar L_w(\vartheta) = 0$, the inner product simplifies to $\sum_{n=1}^N w_n Y_n(\vartheta) (Y_n(\vartheta) + \bar L_w(\vartheta)) = \sigma_w(\vartheta)^2$. Substitution provides the lower bound:
\[
R_\phi(\tau)
\ge
\bar L_w(\vartheta)
+
\frac{\sigma_w(\vartheta)}{\sqrt{\phi''(1)/2+\zeta}}\sqrt{\tau}.
\]

Combining this lower bound with (\ref{eq:phi_upper_local}) and letting $\zeta\downarrow 0$ completes the proof of (\ref{eq:phi_reweighting_expansion}).
\end{proof}

\paragraph{Joint OT--\(\phi\) expansion.}
We now combine the exact reduction, the uniform OT expansion, and the pure reweighting expansion.

\JointSensitivityAnalysis*

\begin{proof}[Proof of Theorem~\ref{thm:two_budget_ot_phi_taylor}] We first use the exact reduction in Proposition~\ref{prop:two_budget_exact_reduction} to rewrite the problem as a supremum over reweightings \(a\in Q_\phi(\tau)\). We then apply the uniform OT expansion from Lemma~\ref{lem:uniform_ot_reweighted} and verify that it applies to all feasible \(a\in Q_\phi(\tau)\) for small \(\tau\). Finally, we separate the reweighting and OT contributions by an elementary supremum inequality and combine the result with the small-\(\tau\) expansion of the pure \(\phi\)-reweighting problem in Lemma~\ref{lem:phi_reweighting_expansion}.
For each \(a\in Q_\phi(\tau)\), set \(\widehat w_n\coloneqq w_na_n\) for \(n=1,\dots,N\); the dependence \(\widehat w=\widehat w(a)\) is implicit throughout this proof.

By Proposition~\ref{prop:two_budget_exact_reduction},
\[
V_{\vartheta}(\varepsilon,\tau)
=
\sup_{a\in Q_\phi(\tau)} T_a(\varepsilon).
\]
We next verify that the uniform OT expansion applies over the entire feasible set. Fix \(\delta\in(0,1)\). We show that \(Q_\phi(\tau)\subseteq A_\delta\)
for all sufficiently small \(\tau\). Define \(w_{\min}=\min_n w_n\). Note that $w_{\min}>0$ and for every
\(a\in Q_\phi(\tau)\), the normalization gives \(0\le a_n\le 1/w_{\min}\). Since \(1\) is the unique zero of \(\phi\), lower semicontinuity implies 
\[
m_\delta
\coloneqq
\inf_{s\in[0,1/w_{\min}]\setminus(1-\delta,1+\delta)}
\phi(s)>0.
\]
If some coordinate satisfied \(|a_j-1|\ge\delta\), then $ \sum_{n=1}^N w_n\phi(a_n) \ge w_j\phi(a_j) \ge w_{\min}m_\delta$. Hence, for \(\tau<w_{\min}m_\delta\), no such coordinate exists, and therefore
\(Q_\phi(\tau)\subseteq A_\delta\).
Therefore, Lemma~\ref{lem:uniform_ot_reweighted} gives
\begin{equation}
\label{eq:sup-elementary}
\sup_{a\in Q_\phi(\tau)}
\left| T_a(\varepsilon)
- \sum_{n=1}^N \widehat w_nL_n(\vartheta)
- \varepsilon\Upsilon_{\widehat w}(\vartheta) \right|
=
o(\varepsilon),
\qquad\text{as }\varepsilon\downarrow0,
\end{equation}
uniformly for all sufficiently small \(\tau\). Equivalently, we may write
\[
T_a(\varepsilon)
=
\sum_{n=1}^N \widehat w_nL_n(\vartheta)
+
\varepsilon\Upsilon_{\widehat w}(\vartheta)
+
r_a(\varepsilon,\tau),
\]
where the remainder satisfies
\begin{equation}
\label{eq:uniform-remainder-Qa}
\sup_{a\in Q_\phi(\tau)} |r_a(\varepsilon,\tau)|
=
o(\varepsilon),
\qquad\text{as }\varepsilon\downarrow0.
\end{equation}
We now separate the contribution of the reweighting term and the OT correction.
Recall the definition of \(R_\phi(\tau)\), and compare
\[
\sup_{a\in Q_\phi(\tau)}
\Big(
\sum_{n=1}^N \widehat w_nL_n(\vartheta)
+ \varepsilon\Upsilon_{\widehat w}(\vartheta)
+ r_a(\varepsilon,\tau)\Big)
\]
with $R_\phi(\tau)+\varepsilon\Upsilon_w(\vartheta)$.
To control this difference, we use the elementary inequality $\Big|
\sup_{a\in Q}(A_a + B_a)
- \sup_{a\in Q} A_a
- b \Big|\le\sup_{a\in Q} |B_a - b|$,
which follows from
$A_a + B_a \le A_a + b + |B_a - b|$
together with the corresponding reverse inequality obtained by writing
\(A_a = (A_a + B_a) - B_a\).

Applying this inequality with $A_a\coloneqq\sum_{n=1}^N \widehat w_nL_n(\vartheta)$, $
B_a\coloneqq\varepsilon\Upsilon_{\widehat w}(\vartheta)+r_a(\varepsilon,\tau)$ and $
b\coloneqq\varepsilon\Upsilon_w(\vartheta)$ 
we obtain
\begin{align*}
\Big|V_{\vartheta}(\varepsilon,\tau)
- R_\phi(\tau)
- \varepsilon\Upsilon_w(\vartheta)\Big|
&\le
\sup_{a\in Q_\phi(\tau)}|r_a(\varepsilon,\tau)|
+
\varepsilon
\sup_{a\in Q_\phi(\tau)}
\big|\Upsilon_{\widehat w}(\vartheta)-\Upsilon_w(\vartheta)\big|.
\end{align*}
By (\ref{eq:uniform-remainder-Qa}), the first term is \(o(\varepsilon)\). The second term is controlled by the collapse of \(Q_\phi(\tau)\) to \(\mathbf 1\). Since \(a\mapsto \Upsilon_{\widehat w(a)}(\vartheta)\) is continuous, and
\(Q_\phi(\tau)\) collapses to \(\{\mathbf 1\}\) as \(\tau\downarrow0\),
in the sense that $
\sup_{a\in Q_\phi(\tau)}\|a-\mathbf 1\|_\infty \to 0 $, we have $\sup_{a\in Q_\phi(\tau)}
\big|
\Upsilon_{\widehat w}(\vartheta)-\Upsilon_w(\vartheta)
\big|
\to 0$. Hence,
\begin{equation}
\label{eq:upsilon-uniform-bound}
\varepsilon
\sup_{a\in Q_\phi(\tau)}
\big|\Upsilon_{\widehat w}(\vartheta)-\Upsilon_w(\vartheta)\big|
= o(\varepsilon).    
\end{equation}

We conclude that
\begin{equation}
\label{eq:joint-reduction-rigorous}
V_{\vartheta}(\varepsilon,\tau)
=
R_\phi(\tau)
+ \Upsilon_w(\vartheta)\varepsilon
+ o(\varepsilon).
\end{equation}

Finally, with Lemma~\ref{lem:phi_reweighting_expansion} we can substitute the expansion of $
R_\phi(\tau)$ into (\ref{eq:joint-reduction-rigorous}) and get
\[
V_{\vartheta}(\varepsilon,\tau)
= \bar L_w(\vartheta)
+ \Upsilon_w(\vartheta)\varepsilon
+ \sqrt{\frac{2}{\phi''(1)}}\,\sigma_w(\vartheta)\,\sqrt{\tau}
+ o(\varepsilon+\sqrt{\tau}).
\]
The budget axes are included: for \(\tau=0\), \(Q_\phi(0)=\{\mathbf 1\}\), so the exact reduction and Lemma~\ref{lem:uniform_ot_reweighted} give the pure Wasserstein expansion directly. For \(\varepsilon=0\), zero transport cost implies \(T_a(0)=\sum_n \widehat w_nL_n(\vartheta)\), so \(V_{\vartheta}(0,\tau)=R_\phi(\tau)\) and Lemma~\ref{lem:phi_reweighting_expansion} applies.
\end{proof}

\subsubsection{Structured Two-Budget Attack}
\label{appsubsec:two_budget_adversarial}

The structured attack combines the leading \(\phi\)-reweighting direction with the Wasserstein steepest-ascent perturbation conditional on the reweighted anchors. The proof verifies feasibility of the reweighting and transport budgets and then evaluates the loss expansion along this constructed law. 

Recall \(Y_n(\vartheta)\), \(a_n\), \(\widehat w_n\coloneqq w_na_n\), \(\Upsilon_{\widehat w}(\vartheta)\),
\(\widehat X_n\), and \(\widehat\eta_{\varepsilon,\tau}\) from
Section~\ref{sec:short-two-budget}.

\FirstOrderStructuredOTPhiAttack*

\begin{proof}[Proof of Theorem~\ref{thm:structured_phi_ot_first_order_attack_main}]
We first verify the reweighting constraint. If \(\sigma_w(\vartheta)=0\), then
\(Y_n(\vartheta)=0\) for all \(n\), because \(w_n>0\), and the convention
\(a=\mathbf 1\) gives \(a\in Q_\phi(\tau)\). Suppose therefore that
\(\sigma_w(\vartheta)>0\). Since $\sum_{n=1}^N w_nY_n(\vartheta)=0,$ we have
\[
\sum_{n=1}^N w_n a_n
= 1+\alpha_\tau\sum_{n=1}^N w_nY_n(\vartheta)
= 1.
\]
The nonnegativity and \(\phi\)-budget constraints hold by the definition of
\(\alpha_\tau\). Hence \(a\in Q_\phi(\tau)\).

We next derive the small-\(\tau\) behavior of \(\alpha_\tau\). Since
\(\phi\) is \(C^2\) around \(1\), with \(\phi(1)=\phi'(1)=0\), we have
\[
\sum_{n=1}^N w_n\phi(1+\alpha Y_n(\vartheta))
=
\frac{\phi''(1)}{2}\alpha^2
\sum_{n=1}^N w_nY_n(\vartheta)^2
+
o(\alpha^2)
=
\frac{\phi''(1)}{2}\alpha^2\sigma_w(\vartheta)^2
+
o(\alpha^2).
\]
Thus the maximal feasible \(\alpha_\tau\) satisfies
\begin{equation}
\label{eq:alpha_tau_asymptotic}
\alpha_\tau
=
\frac{\sqrt{2\tau}}{\sqrt{\phi''(1)}\,\sigma_w(\vartheta)}
+
o(\sqrt{\tau}),
\qquad \tau\downarrow0 .
\end{equation}

We show that the perturbed trajectories are admissible for all
sufficiently small \((\varepsilon,\tau)\): \(\widehat X_n\in\Xi\) . If \(g_n(\vartheta)=0\), then
\(\widehat X_n=X_n\in\Xi\). Otherwise, since $X_n$ is in the interior of $\Xi$, \(X_n\in\operatorname{int}(\Xi)\), and \(\|h(g_n(\vartheta))\|_{\kappa,\lambda,p}=1\), there exists \(\bar t_n>0\) such that
\[
X_n+t h(g_n(\vartheta))\in\Xi,
\qquad 0\le t\le\bar t_n.
\]
The step length in direction \(h(g_n(\vartheta))\) is
\[
t_n(\varepsilon,\tau)
=
\varepsilon
\|g_n(\vartheta)\|_{\kappa,\lambda,p}^{\ast(q-1)}
\Upsilon_{\widehat w}(\vartheta)^{\,1-q}.
\]
Since \(\alpha_\tau\to0\), the constructed vector \(a=a(\tau)\) satisfies \(a\to\mathbf 1\) and \(\widehat w\to w\) as \(\tau\downarrow0\). Hence, the quantities \(\Upsilon_{\widehat w}(\vartheta)\) remain bounded away from zero whenever the transport sensitivity $\Upsilon_w$ is nonzero. Therefore, \(t_n(\varepsilon,\tau)\downarrow0\) as \(\varepsilon\downarrow0\), uniformly over the finitely many samples. Therefore, \(\widehat X_n\in\Xi\) for all sufficiently small \((\varepsilon,\tau)\).

We next verify the transport budget constraint. If \(\Upsilon_{\widehat w}(\vartheta)=0\), then
\(\widehat X_n=X_n\) for all \(n\), so the transport cost is zero.
Assume therefore that \(\Upsilon_{\widehat w}(\vartheta)>0\). By construction, $\widehat X_n-X_n
=
\varepsilon
h(g_n(\vartheta))
\|g_n(\vartheta)\|_{\kappa,\lambda,p}^{\ast(q-1)}
\Upsilon_{\widehat w}(\vartheta)^{\,1-q}$. 
Since \(d_{\kappa,\lambda}\) is induced by
\(\|\cdot\|_{\kappa,\lambda,p}\), and since
\(\|h(g_n(\vartheta))\|_{\kappa,\lambda,p}=1\) whenever \(g_n(\vartheta)\neq0\), we get
\[
d_{\kappa,\lambda}
\bigl(\widehat X_n,X_n\bigr)
=
\varepsilon
\|g_n(\vartheta)\|_{\kappa,\lambda,p}^{\ast(q-1)}
\Upsilon_{\widehat w}(\vartheta)^{\,1-q}.
\]
For \(g_n(\vartheta)=0\), both sides are zero, so the same identity still holds. Taking
the \(p\)-th power and using \((q-1)p=q\) gives
\[
d_{\kappa,\lambda}
\bigl(\widehat X_n,X_n\bigr)^p
=
\varepsilon^p\,
\Upsilon_{\widehat w}(\vartheta)^{-q}
\|g_n(\vartheta)\|_{\kappa,\lambda,p}^{\ast q}.
\]
Therefore, weighting with $\widehat w_n$ and summing over $n$ shows that the transport budget constraint is satisfied:
\[
\begin{aligned}
\sum_{n=1}^N
\widehat w_n
d_{\kappa,\lambda}
\bigl(\widehat X_n,X_n\bigr)^p
&=
\varepsilon^p\,
\Upsilon_{\widehat w}(\vartheta)^{-q}
\sum_{n=1}^N
\widehat w_n
\|g_n(\vartheta)\|_{\kappa,\lambda,p}^{\ast q}  \\
&=
\varepsilon^p\,
\Upsilon_{\widehat w}(\vartheta)^{-q}
\Upsilon_{\widehat w}(\vartheta)^{q}
=
\varepsilon^p .
\end{aligned}
\]
To verify feasibility in the full two-budget ambiguity set, define
\[
\pi
\coloneqq
\sum_{n=1}^N w_n\,\delta_{(X_n,\widehat X_n)},
\qquad
\pi'
\coloneqq
\sum_{n=1}^N \widehat w_n\,\delta_{(X_n,\widehat X_n)}.
\]
Then \(\pi_1=\mu_w\) and \(\pi'_2=\widehat\eta_{\varepsilon,\tau}\). Moreover, \(\pi'\ll\pi\). If several pairs \((X_n,\widehat X_n)\) coincide, the density \(d\pi'/d\pi\) is the corresponding \(w\)-weighted average of the coefficients \(a_n\); hence Jensen's inequality and \(a\in Q_\phi(\tau)\) give
\[
D_\phi(\pi'\|\pi)
\le
\sum_{n=1}^N w_n\phi(a_n)
\le\tau.
\]
Together with the transport-budget calculation above, this shows that \(\pi,\pi'\) are admissible in Definition~\ref{def:two_budget_phi_ot_clean}, and therefore
\[
\widehat\eta_{\varepsilon,\tau}
\in
\mathcal B_{\varepsilon,\tau}(\mu_w).
\]

It remains to evaluate the expected loss under \(\widehat\eta_{\varepsilon,\tau}\). Since \(\alpha_\tau\to0\), the constructed vector \(a=a(\tau)\) satisfies \(a\to\mathbf 1\) and \(\widehat w\to w\) as \(\tau\downarrow0\). Hence, continuity gives \(\Upsilon_{\widehat w}(\vartheta)=\Upsilon_w(\vartheta)+o(1)\) (see \eqref{eq:upsilon-uniform-bound}). If \(\Upsilon_w(\vartheta)=0\), then all \(g_n(\vartheta)=0\), because \(w_n>0\), and the transport
part contributes nothing at first order. Otherwise, \(\Upsilon_{\widehat w}(\vartheta)\) is
bounded away from zero for all sufficiently small \(\tau\). Hence the
constructed perturbations satisfy $
d_{\kappa,\lambda}
\bigl(\widehat X_n,X_n\bigr)
=
O(\varepsilon)$ uniformly in \(n\). By Assumption~\ref{ass:loss-regularity} 
\[
\ell_{\mathrm{DH}}(\vartheta;\widehat X_n)
=
L_n(\vartheta)
+ \left\langle
g_n(\vartheta),\widehat X_n-X_n
\right\rangle
+ o(\varepsilon).
\]
Multiplying by \(\widehat w_n\), summing over \(n\), and using boundedness of
\(a\) for small \(\tau\), gives
\[
\sum_{n=1}^N
\widehat w_n
\ell_{\mathrm{DH}}(\vartheta;\widehat X_n)
=
\sum_{n=1}^N
\widehat w_n L_n(\vartheta)
+ \sum_{n=1}^N \widehat w_n
\left\langle
g_n(\vartheta),\widehat X_n-X_n
\right\rangle
+ o(\varepsilon).
\]
The linear transport term is
\[
\begin{aligned}
\sum_{n=1}^N
\widehat w_n
\left\langle
g_n(\vartheta),\widehat X_n-X_n
\right\rangle
&=
\varepsilon
\Upsilon_{\widehat w}(\vartheta)^{1-q}
\sum_{n=1}^N
\widehat w_n
\|g_n(\vartheta)\|_{\kappa,\lambda,p}^{\ast q} 
=
\varepsilon\Upsilon_{\widehat w}(\vartheta).
\end{aligned}
\]
Therefore, we conclude that
\begin{equation}
\label{eq:structured_two_budget_OT}
\mathbb E_{X\sim\widehat\eta_{\varepsilon,\tau}}
\big[\ell_{\mathrm{DH}}(\vartheta;X)\big]
=
\sum_{n=1}^N
\widehat w_n L_n(\vartheta)
+ \varepsilon\Upsilon_{\widehat w}(\vartheta)
+ o(\varepsilon).    
\end{equation}

If \(\sigma_w(\vartheta)>0\), using $Y_n(\vartheta) = L_n(\vartheta) - \bar L_w(\vartheta)$, the loss evaluates to 
\[
\sum_{n=1}^N w_n a_n L_n(\vartheta)
=
\sum_{n=1}^N w_n \big(1+\alpha_\tau Y_n(\vartheta)\big)\big(\bar L_w(\vartheta)+Y_n(\vartheta)\big)
=
\bar L_w(\vartheta)+\alpha_\tau\sigma_w(\vartheta)^2.
\]
Using (\ref{eq:alpha_tau_asymptotic}), this becomes
\begin{equation}
\label{eq:structured_two_budget_phidiv}
\sum_{n=1}^N \widehat w_n L_n(\vartheta)
=
\bar L_w(\vartheta)
+ \sqrt{\frac{2}{\phi''(1)}}\,\sigma_w(\vartheta)\sqrt{\tau}
+ o(\sqrt{\tau}).    
\end{equation}

If \(\sigma_w(\vartheta)=0\), the same display holds with zero \(\sqrt{\tau}\)-term.
Finally,
\[
\varepsilon\Upsilon_{\widehat w}(\vartheta)
=
\varepsilon\Upsilon_w(\vartheta)
+ o(\varepsilon).
\]
Combining \eqref{eq:structured_two_budget_OT} and \eqref{eq:structured_two_budget_phidiv} yields
\[
\mathbb E_{X\sim\widehat\eta_{\varepsilon,\tau}}
\big[\ell_{\mathrm{DH}}(\vartheta;X)\big]
=
\bar L_w(\vartheta)
+ \Upsilon_w(\vartheta)\varepsilon
+ \sqrt{\frac{2}{\phi''(1)}}\,\sigma_w(\vartheta)\sqrt{\tau}
+ o\big(\varepsilon+\sqrt{\tau}\big).
\]
Comparing this expansion with Theorem~\ref{thm:two_budget_ot_phi_taylor} gives
\[
V_{\vartheta}(\varepsilon,\tau)
-
\mathbb E_{X\sim\widehat\eta_{\varepsilon,\tau}}
\big[\ell_{\mathrm{DH}}(\vartheta;X)\big]
=
o\big(\varepsilon+\sqrt{\tau}\big),
\]
which is \eqref{eq:short-attack-accuracy} and completes the proof.
\end{proof}

\section{Numerical Setup and Supplementary Results}
\label{app:numerical-details}

This appendix provides further details on the numerical experiments. We first specify the Optimized Certainty Equivalent (OCE) risk measures (\ref{app:oce-risk-measures}), contingent-claim payoffs (\ref{app:claims}), the multistep \textsc{WRAP} algorithm (\ref{app:iterative-attack}), and the data-generating processes (\ref{app:financial-time-series-models}). We define the performance metrics (\ref{subsec:performance-metrics}) and present the experimental setup and results (\ref{app:experiment-setup}), covering the Heston experiments (\ref{subsec:heston-results}) and the GAD study (\ref{subsec:market-results}), together with the corresponding calibration data and descriptive statistics. This section concludes with a comparison of the one-step and multistep \textsc{WRAP} variants in the stationary Heston setting (\ref{app:additional-results}).

\subsection{Optimized Certainty Equivalent Risk Measures}
\label{app:oce-risk-measures}

The numerical experiments adopt two standard Optimized Certainty Equivalent (OCE) risk measures. OCEs were introduced by \citet{ben1986expected} and later developed as a class of convex risk measures by \citet{ben2007old}. Let \(\mathcal Z\) denote the domain of real-valued random variables under consideration. A normalized OCE loss is a convex and nondecreasing function \(\ell:\mathbb R\to\mathbb R\) satisfying \(\ell(0)=0\) and \(\ell(u)\ge u\) for all \(u\in\mathbb R\).
The associated OCE risk measure is
\begin{equation}
\label{eq:app-oce}
\rho_\ell(Z)
=
\inf_{m\in\mathbb R}
\left\{
m+\mathbb E\!\left[\ell(-Z-m)\right]
\right\}.
\end{equation}
Here \(Z\) denotes terminal profit and loss, so larger values of \(Z\) are preferred and \(-Z\) is the corresponding loss. For \(Z=\mathrm{PnL}(\theta,X)\), the objective inside the infimum in \eqref{eq:app-oce} gives the samplewise deep-hedging loss \(\ell_{\mathrm{DH}}(\vartheta;X)\) defined in \eqref{eq:dh_loss}.

\paragraph{Conditional Value at Risk.}
Conditional Value at Risk (CVaR), also known as Expected Shortfall, is a coherent risk measure that averages losses beyond a high quantile. The variational representation below follows \citet{rockafellar2000optimization,rockafellar2002conditional}. For a confidence level \(\alpha\in(0,1)\), define
\[
\ell_{\alpha}(u)
=
\frac{(u)_+}{1-\alpha},
\qquad
(u)_+\coloneqq\max\{u,0\}.
\]
Substituting this loss into \eqref{eq:app-oce} yields
\begin{equation}
\label{eq:app-cvar-oce}
\rho_{\alpha}(Z)
=
\inf_{m\in\mathbb R}
\left\{
m+
\frac{1}{1-\alpha}
\mathbb E\!\left[(-Z-m)_+\right]
\right\}
=
\mathrm{CVaR}_{\alpha}(-Z).
\end{equation}
Thus, CVaR evaluates the average of the largest \(1-\alpha\) fraction of losses, with the standard adjustment for probability mass at the quantile. The minimizers in \eqref{eq:app-cvar-oce} are the \(\alpha\)-quantiles of \(-Z\); in particular, one may choose \(m=\mathrm{VaR}_{\alpha}(-Z)\).

\paragraph{Entropic risk.}
The entropic risk measure is obtained from the exponential OCE loss \citep{ben2007old}. For a risk-aversion parameter \(\lambda>0\), define
\[
\ell_{\lambda}(u)
=
\frac{\exp(\lambda u)-1}
{\lambda}.
\]
Whenever
\(\mathbb E[\exp(-\lambda Z)]<\infty\),
\begin{equation}
\label{eq:app-entropic-oce}
\begin{aligned}
\rho_{\lambda}(Z)
&=
\inf_{m\in\mathbb R}
\left\{
m+
\frac{1}{\lambda}
\mathbb E\!\left[
\exp\!\big(\lambda(-Z-m)\big)-1
\right]
\right\} \\
&=
\frac{1}{\lambda}
\log\mathbb E\!\left[\exp(-\lambda Z)\right].
\end{aligned}
\end{equation}
The entropic risk measure is smooth and convex and penalizes large losses exponentially. Unlike CVaR, it is not positively homogeneous.

\paragraph{Convex combinations with expected loss.}
Let \(\rho\) be an OCE risk measure with a convex, nondecreasing loss \(\ell\) satisfying \(\ell(0)=0\) and \(\ell(u)\ge u\) for all \(u\in\mathbb R\). Then, for \(\omega\in[0,1]\), the loss \(\ell_\omega(u)\coloneqq(1-\omega)u+\omega\,\ell(u)\) satisfies the same conditions. Substitution into \eqref{eq:app-oce} yields, for integrable \(Z\) with finite \(\rho(Z)\),
\[
\rho_{\omega}(Z)
=(1-\omega)\mathbb E[-Z]+\omega\rho(Z).
\]
In particular, Mean--CVaR (M--CVaR) admits the representation
\begin{equation}
\label{eq:oce-mean-cvar}
\begin{aligned}
\rho_{\omega,\alpha}(Z)
&=
\inf_{m\in\mathbb R}
\left\{
m+
\mathbb E\!\left[(1-\omega)(-Z-m) + \frac{\omega}{1-\alpha}(-Z-m)_+\right]
\right\} \\
&=
(1-\omega)\mathbb E[-Z] + \omega\,\mathrm{CVaR}_{\alpha}(-Z).
\end{aligned}
\end{equation}

\subsection{Contingent Claims}
\label{app:claims}
We collect the payoff functionals used in the numerical examples. Recall that the agent aims to hedge a contingent claim with payoff function $\Phi:\Xi\to\mathbb{R}$ at maturity. Let \(K>0\) denote the strike price and $S^1 \in \mathbb R ^{T+1}$ denote the price process of the considered asset.

The Heston experiments consider a \emph{European call option} on the stock with
payoff
\[
\Phi^{\mathrm{European}}(X)
=
(S_T^1-K)_+.
\]
The GAD experiments consider an \emph{arithmetic Asian call} with payoff
\[
\Phi^{\mathrm{Asian}}(X)
=
\left(
\frac{1}{T+1}\sum_{t=0}^{T}S^1_t-K
\right)_+.
\]

\subsection{Multistep \textsc{WRAP}}
\label{app:iterative-attack}

\begin{algorithm}[H]
\caption{Multistep \textbf{W}asserstein \textbf{R}eweighting \textbf{A}dversarial  \textbf{P}erturbation (\textsc{WRAP})}
\label{alg:drift-aware-wbpgd}
\footnotesize
\begin{algorithmic}[1]
\Require Weighted sample \(\{(X_n,w_n)\}_{n=1}^N\), batch
\(\mathcal I\subseteq\{1,\dots,N\}\) with \(|\mathcal I|=M\), hedge
parameter \(\vartheta\), transport budget \(\varepsilon\), reweighting
budget \(\tau\), inner steps \(K\), stepsize \(\eta\)
\Ensure Adversarial weighted batch
\(\{(\widehat X_n,\widehat w_n)\}_{n\in\mathcal I}\)

\State Set \(\bar w_n\gets w_n/\sum_{j\in\mathcal I}w_j\),
\(a_n\gets1\), \(\widehat w_n\gets\bar w_n\), \(b_n\gets0\), and
\(d_n\gets0\) for all \(n\in\mathcal I\).

\For{\(k=1,\dots,K\)}
    \State Set \(\widehat X_n\gets X_n+b_nd_n\) for all \(n\in\mathcal I\).
    \State Compute \(L_n\gets\ell_{\mathrm{DH}}(\vartheta;\widehat X_n)\)
    for all \(n\in\mathcal I\).

    \Statex \StartBlock{Phi} \textbf{\(\phi\)-divergence reweighting}

    \If{\(\tau=0\)}
        \State Set \(a_n\gets1\) for all \(n\in\mathcal I\).
    \Else
        \State Set \(\bar L_{\mathcal I}\gets\sum_{n\in\mathcal I}\bar w_nL_n\)
        and \(Y_n\gets L_n-\bar L_{\mathcal I}\) for all \(n\in\mathcal I\).
        \State Choose the largest \(\alpha_\tau\ge0\) such that
        \(1+\alpha_\tau Y_n\ge0\) for all \(n\in\mathcal I\) and
        \(\sum_{n\in\mathcal I}\bar w_n\phi(1+\alpha_\tau Y_n)\le\tau\).
        \State Set \(a_n\gets1+\alpha_\tau Y_n\) for all
        \(n\in\mathcal I\).
    \EndIf
    \EndBlock{Phi}{PhiShade}
    \State Set \(\widehat w_n\gets\bar w_na_n\) for all
    \(n\in\mathcal I\).

    \Statex \StartBlock{OT} \textbf{Optimal-transport perturbation}
    \State Compute \(G_n\gets\nabla_{\widehat X}
    \ell_{\mathrm{DH}}(\vartheta;\widehat X_n)\) for all \(n\in\mathcal I\).

    \State Set \(\widetilde d_n\gets d_n+(\eta/\varepsilon)h(G_n)\) and
    \(d_n\gets\widetilde d_n/\|\widetilde d_n\|_{\kappa,\lambda,p}\) for all
    \(n\in\mathcal I\).

    \State Set \(\Upsilon_{\mathcal I}\gets
    \big(\sum_{n\in\mathcal I}\widehat w_n
    [\langle G_n,d_n\rangle]_+^q\big)^{1/q}\) and update
    \(b_n\gets b_n+\eta[\langle G_n,d_n\rangle]_+^{q-1}
    \Upsilon_{\mathcal I}^{1-q}\) for all \(n\in\mathcal I\).

    \State Rescale
    \(b_n\gets
    \varepsilon b_n/
    \big(\sum_{j\in\mathcal I}\widehat w_jb_j^p\big)^{1/p}\)
    for all \(n\in\mathcal I\).
    \EndBlock{OT}{OTShade}
\EndFor

\State Set \(\widehat X_n\gets X_n+b_nd_n\) and
\(\widehat w_n\gets\bar w_na_n\) for all \(n\in\mathcal I\).
\State \Return \(\{(\widehat X_n,\widehat w_n)\}_{n\in\mathcal I}\).

\end{algorithmic}
\end{algorithm}

The multistep attack refines Algorithm~\ref{alg:drift-aware-one-step-attack} while preserving the same two-budget structure. The reweighting step is unchanged: at each inner iteration, the losses are centered and the adversarial multipliers are updated according to \(a_n=1+\alpha_\tau Y_n\). The transport step differs. Rather than applying the closed-form first-order perturbation once, the algorithm writes \(\widehat X_n=X_n+b_nd_n\), where \(d_n\) is a normalized perturbation direction and \(b_n\ge0\) is the pathwise allocation of the transport budget. Both are updated iteratively and projected back onto the weighted transport constraint.

Algorithm~\ref{alg:drift-aware-wbpgd} is therefore not the exact first-order construction of Theorem~\ref{thm:structured_phi_ot_first_order_attack_main}, but an iterative refinement that builds on \citet{he2025distributional}. The \(\phi\)-block continues to tilt probability toward high-loss scenarios through the linearized \(\alpha_\tau\)-step, while the OT block updates the perturbation directions and budget allocations with learning rate \(\eta\). At each adversarial iterate \(\widehat X_n\), the reweighting is recomputed before the transport update proceeds along the local ascent direction. When \(\tau=0\), the method reduces to a drift-aware iterative optimal-transport attack; when \(\varepsilon=0\), the transport updates vanish and the method reduces to the reweighting-only case of Algorithm~\ref{alg:drift-aware-one-step-attack}.

In the numerical experiments, this multistep procedure serves as the stronger iterative benchmark against the one-step attack in Algorithm~\ref{alg:drift-aware-one-step-attack}.

\subsection{Data-Generating Processes}
\label{app:financial-time-series-models}

We conduct the numerical experiments under two market models: the Heston stochastic-volatility model and the generalized affine diffusion (GAD) model. Both models are sampled at the trading dates \(t_j=j\Delta t\), \(j=0,\ldots,T\), with maturity \(\mathcal T=T\Delta t\). Following the notation introduced in Section~\ref{sec:short-setup}, \(S_j\) denotes the vector of tradable prices, \(X_j\) the observed market state, and \(X=(X_0,\ldots,X_T)\in\Xi\) the resulting trajectory.

\paragraph{Heston model.}
The first specification is the Heston stochastic-volatility model \citep{heston1993closed},
\begin{equation}
\label{eq:heston-model-short}
\begin{aligned}
dS_t
&=
\mu S_t\,dt
+
\sqrt{v_t}\,S_t\,dW_t^{S},\\
dv_t
&=
\kappa(\theta_{\mathrm H}-v_t)\,dt
+
\xi\sqrt{v_t}\,dW_t^{v},
\end{aligned}
\qquad
d\langle W^{S},W^{v}\rangle_t=\rho\,dt .
\end{equation}
Here \(S_0>0\) and \(v_0\geq0\) are the initial stock price and variance, \(\mu\in\mathbb R\) is the stock-price drift, \(\kappa>0\) the mean-reversion rate, \(\theta_{\mathrm H}>0\) the long-run variance, \(\xi>0\) the volatility-of-variance parameter, and \(\rho\in[-1,1]\) the instantaneous correlation between the Brownian motions \(W^S\) and \(W^v\). We use \(\theta_{\mathrm H}\) to distinguish the long-run variance from the hedging-policy parameter \(\theta\).

We sample the variance process from its exact Cox--Ingersoll--Ross transition and approximate the integrated variance entering the stock-price update by the trapezoidal rule. The exact variance transition preserves nonnegativity without requiring the Feller condition.

In the Heston experiments, the claim is written on the stock, while the hedging portfolio may trade both the stock and a variance-linked instrument. At day \(t\), the hedging network takes \(X_{0:t}=(\log S,v)_{0:t}\) as input, and portfolio gains are computed from changes in both tradable prices.

\paragraph{Generalized affine diffusion model.}
The second specification follows the generalized affine diffusion model used by \citet{lutkebohmert2022robust} for robust deep hedging. It extends the classical affine-diffusion case, covered by the affine-process framework of \citet{duffie2003affine}. In continuous-time notation, we write
\begin{equation}
\label{eq:gad-model-short}
dS_t
=
(b_0+b_1S_t)\,dt
+
(a_0+a_1S_t)^\gamma\,dW_t .
\end{equation}
Here \(S_0>0\) is the initial stock price, \(b_0,b_1\in\mathbb R\) determine the affine drift, and \(a_0,a_1,\gamma\) determine the state-dependent volatility. The stock is the sole tradable asset and the only state variable observed by the hedging network, \textit{i.e.}~$X=S$.

\subsection{Performance Metrics}
\label{subsec:performance-metrics}
We evaluate each trained hedge on a held-out test set \(X_1,\dots,X_M\in\Xi\). Recall that the terminal profit and loss generated by a hedging strategy $\theta$ along a path $X$ is
\begin{equation*}
\mathrm{PnL}(\theta,X)
\;=\;
p_0
+\sum_{t=0}^{T-1}\delta_t^\theta(X_{0:t})^\top (S_{t+1}-S_t)
- \Phi(X).
\end{equation*}

\noindent \emph{Mean PnL.}
The average realized profit and loss is
\begin{equation*}
\label{eq:metric-avg-pnl}
\mathrm{Mean PnL}
\coloneqq
\frac{1}{M}\sum_{i=1}^M \mathrm{PnL}(\theta,X_i).
\end{equation*}

\noindent \emph{Empirical CVaR.}
The empirical CVaR of the hedging loss is
\begin{equation*}
\label{eq:metric-cvar-objective}
\mathrm{CVaR}
\coloneqq
m+
\frac{1}{1-\alpha_{\mathrm{cvar}}}
\frac{1}{M}
\sum_{i=1}^M
\big(-\mathrm{PnL}(\theta,X_i)-m\big)_+.
\end{equation*}
Here $m$ is the empirical $\alpha_{\mathrm{cvar}}$-quantile of the test losses $-\mathrm{PnL}(\theta,X_i)$, with $\alpha_{\mathrm{cvar}}=0.95$.

\noindent \emph{Empirical M--CVaR.}
The empirical combination of mean loss and CVaR is
\[
\mathrm{M\text{--}CVaR}
\coloneqq
-(1-\omega)\,\mathrm{Mean PnL}
+\omega\,\mathrm{CVaR}.
\]
Here $\omega\in[0,1]$ controls the weight on CVaR, with $\omega=0.5$ in the M--CVaR experiments.

\noindent \emph{Empirical entropic risk.}
The empirical entropic risk is
\[
\mathrm{EntR}
\coloneqq
\frac{1}{\lambda}
\log\!\left(
\frac{1}{M}
\sum_{i=1}^M
\exp\!\left(
-\lambda\,\mathrm{PnL}(\theta,X_i)
\right)
\right).
\]
Here $\lambda>0$ is the risk-aversion parameter.

\noindent \emph{Transaction costs.}
The average cumulative transaction cost (\textit{cf.}~\eqref{eq:transaction-cost}) per test path is
\[
\mathrm{Total TC}
\coloneqq
\frac{1}{M}\sum_{i=1}^M\operatorname{TC_c}(\theta,X_i),
\]
where $\operatorname{TC_c}(\theta,X_i)$ denotes the cost of opening and rebalancing the hedge.

\noindent \emph{Turnover.}
Additionally, turnover measures how aggressively the learned strategy rebalances
along the test paths. Let
\[
\mathrm{Turnover}
\coloneqq
\frac{1}{MTr}
\sum_{i=1}^{M}
\sum_{t=0}^{T-1}
\left\|
\delta_t^\theta(X_{i,0:t})
-
\delta_{t-1}^\theta(X_{i,0:t-1})
\right\|_1.
\]
With $\delta_{-1}^\theta=0$, turnover is the mean absolute position change per path, trading date, and tradable asset, including the opening trade.

\noindent \emph{Aggregation across random seeds.}
Each metric is first computed on the test paths of one seed. For a scalar metric
having seed-level values \(R_1,\dots,R_B\), we report
\[
\frac1B\sum_{b=1}^B R_b
\ \pm \
\left(
\frac1B\sum_{b=1}^B
\left(R_b-\frac1B\sum_{c=1}^B R_c\right)^2
\right)^{1/2}.
\]

\subsection{Experiments}
\label{app:experiment-setup}
This section supplements Section~\ref{sec:numerical-experiments} with the complete comparisons and budget-sensitivity analysis.\footnote{Code and experiment configurations are available at \url{https://anonymous.4open.science/r/deep-hedging-nonstationary-dro/}.} We first present the stationary and nonstationary Heston experiments (Appendix~\ref{subsec:heston-results}), including their extensions with transaction costs and M--CVaR, followed by the GAD experiments (Appendix~\ref{subsec:market-results}). Appendix~\ref{app:additional-results} compares one-step and multistep \textsc{WRAP}.

We use the four training specifications and the uniform and drift-aware weighting schemes introduced in Section~\ref{sec:numerical-experiments}. Throughout, the probability-reweighting ambiguity set uses the Kullback--Leibler entropy function $\phi(x)=x\log x-x+1$, which satisfies the local regularity conditions of Theorems~\ref{thm:two_budget_ot_phi_taylor} and~\ref{thm:structured_phi_ot_first_order_attack_main}. 
The architecture, optimization settings, and ground-metric parameters are reported in Table~\ref{tab:experiment_hyperparameters}. 
Following \citet{goodfellow2015explaining}, we train with a mixture of clean and adversarial losses:
\begin{equation}
\label{eq:mixed-adversarial-objective}
\mathcal J_{\mathrm{mix}}^{\mathcal I}(\vartheta)
=
(1-\lambda_{\mathrm{adv}})
\sum_{n\in\mathcal I}\bar w_n\ell_{\mathrm{DH}}(\vartheta;X_n)
+\lambda_{\mathrm{adv}}\mathcal J_{\mathrm{adv}}^{\mathcal I}(\vartheta).
\end{equation}
Here \(\lambda_{\mathrm{adv}}\in[0,1]\) balances the two losses after the Empirical Risk Minimization (ERM) warmup reported in Table~\ref{tab:experiment_hyperparameters}. Hyperparameters, ambiguity budgets, and model selection are determined using validation performance only.
\begin{remark}
In Section~\ref{sec:short-two-budget}, we construct a first-order adversary using linear tilting of the form $a_n=1+\alpha_\tau(L_n-\bar L_w)$. While this construction is first-order accurate as $\tau\downarrow0$, at larger budgets the nonnegativity constraint can prevent further increases in $\alpha_\tau$, leaving part of the divergence budget unused. In the numerical experiments, we therefore replace linear tilting with exponential tilting, $a_n=\exp(sL_n)/\sum_j w_j\exp(sL_j)$, where $s\ge0$ is chosen by scalar bisection. The two constructions have the same small-budget leading behavior, preserving the first-order expansion while permitting the divergence budget to remain active beyond the range allowed by linear tilting.
\end{remark}

\begin{table}[ht]
    \centering
    \small
    \caption{Hyperparameter settings for the Heston and GAD experiments.}
    \label{tab:experiment_hyperparameters}
    \begin{tabular}{l c c}
        \toprule
        \textbf{Parameter} & \textbf{Heston} & \textbf{GAD} \\
        \midrule
        \multicolumn{3}{l}{\textit{Architecture and Optimization}} \\
        Base architecture & \multicolumn{2}{c}{Per-date MLP hedger} \\
        Hidden units & \multicolumn{2}{c}{$20$} \\
        Hidden layers per date & \multicolumn{2}{c}{$2$} \\
        Activation & \multicolumn{2}{c}{ReLU} \\
        Normalization & \multicolumn{2}{c}{Batch normalization} \\
        Optimizer & \multicolumn{2}{c}{Adam} \\
        Learning rate & \multicolumn{2}{c}{$5\times 10^{-3}$} \\
        Scheduler & \multicolumn{2}{c}{Cosine annealing, $\eta_{\min}=10^{-6}$} \\
        Batch size & \multicolumn{2}{c}{$10{,}000$} \\
        \midrule
        \multicolumn{3}{l}{\textit{Adversarial Training and Geometry}} \\
        ERM warmup epochs & $300$ & $100$ \\
        Total epochs & $600$ & $300$ \\
        Attack algorithms & \multicolumn{2}{c}{\textsc{WRAP}; Multistep \textsc{WRAP}} \\
        Clean/adversarial loss weights & \multicolumn{2}{c}{$1:1$} \\
        Ground metric exponent & \multicolumn{2}{c}{$p=2$} \\
        Ground metric scales & $(\lambda_S,\lambda_V)=(1,100)$ & $\lambda_S=1$ \\
         Ground metric time weights  & \multicolumn{2}{c}{$\kappa_t = 1$ for all $t\in\{0,\dots,T\}$} \\
        \bottomrule
    \end{tabular}
\end{table}

\subsubsection{Heston Experiments}
\label{subsec:heston-results}

We first consider the Heston model in \eqref{eq:heston-model-short}, which permits controlled comparisons between stationary and nonstationary market environments. The claim is the European call defined in Appendix~\ref{app:claims}. The Heston attacks perturb both components of the observed state, corresponding to the stronger SV-Attack in \citet{he2025distributional}.

For the stationary and nonstationary benchmarks, CVaR as defined in \eqref{eq:app-cvar-oce} serves as both the training objective and the model-selection criterion. We also report mean PnL and turnover, as defined in Appendix~\ref{subsec:performance-metrics}. CVaR is computed on the hedging loss $-\mathrm{PnL}$, so lower values indicate better tail-risk performance. Mean PnL is the average realized profit and loss, while lower turnover indicates less active rebalancing.

\paragraph{Stationary Heston experiments.}
\label{par:heston-stationary}

In the stationary benchmark, the training, validation, and test trajectories are generated independently under the same fixed Heston dynamics \eqref{eq:heston-model-short}. We split the data equally into training, validation, and test sets. This setting isolates the effects of adversarial probability reweighting and trajectory perturbations in the absence of temporal distribution shift. The stationary benchmark uses uniform reference weights.

We train all adversarial specifications with the one-step \textsc{WRAP} procedure in Algorithm~\ref{alg:drift-aware-one-step-attack}. For each method, training epochs and ambiguity budgets are selected by minimizing validation CVaR. The full budget grid is $\varepsilon,\tau\in\{0,0.05,0.10,0.15,0.20,0.25,0.30\}$.

\emph{Results.}~Table~\ref{tab:heston-stationary-optimal} shows that transport drives the improvement in the stationary benchmark. Validation selects \((\varepsilon,\tau)=(0.10,0)\), so \textsc{WRAP-U} coincides with \textsc{OT-U} and reduces test CVaR by \(2.4\%\) relative to \textsc{ERM-U}. Reweighting alone provides no improvement at the displayed precision. Mean PnL is nearly unchanged across methods, while turnover remains lowest under \textsc{ERM-U}. Thus, without temporal distribution shift, validation favors robustness to path perturbations but provides no evidence that additional adversarial reweighting improves performance.

\begin{table*}[!ht]
\centering
\scriptsize
\begin{threeparttable}
\caption{Results for the stationary Heston model with the CVaR objective.}
\label{tab:heston-stationary-optimal}
\setlength{\tabcolsep}{3.5pt}
\begin{tabular}{@{}lcccccc@{}}
\toprule
Method & $\varepsilon$ & $\tau$
& \shortstack{Val. CVaR $(\downarrow)$}
& \shortstack{Test CVaR $(\downarrow)$}
& \shortstack{Mean PnL$(\uparrow)$}
& \shortstack{Turnover $(\downarrow)$} \\
\midrule
\textsc{ERM-U} & -- & -- & $3.634 \pm 0.013$ & $3.629 \pm 0.018$ & $-2.199 \pm 0.005$ & $\underline{3.084} \pm 0.124$ \\
$\phi$\textsc{-U} & -- & $0.25$ & $3.627 \pm 0.014$ & $3.629 \pm 0.025$ & $\underline{-2.198} \pm 0.004$ & $3.283 \pm 0.266$ \\
\textsc{OT-U} & $0.10$ & -- & $\underline{3.550} \pm 0.011$ & $\underline{3.543} \pm 0.016$ & $-2.199 \pm 0.004$ & $3.156 \pm 0.064$ \\
\textsc{WRAP-U} & $0.10$ & -- & $\underline{3.550} \pm 0.011$ & $\underline{3.543} \pm 0.016$ & $-2.199 \pm 0.004$ & $3.156 \pm 0.064$ \\
\bottomrule
\end{tabular}
\begin{tablenotes}[flushleft]
\scriptsize
\item \textit{Note.} Values are mean $\pm$ population standard deviation over five seeds.
\end{tablenotes}
\end{threeparttable}
\end{table*}

\paragraph{Stationary Heston with transaction costs and M--CVaR.}
\label{par:heston-transaction-costs}
We next extend the stationary Heston benchmark by introducing proportional transaction costs at rate \(c\ge0\), which penalize rebalancing in the training objective. Setting \(\delta_{-1}^\theta=0\), the cumulative cost of opening and rebalancing the hedge is
\begin{equation}
\label{eq:transaction-cost}
\operatorname{TC_c}(\theta,X)
=c\sum_{t=0}^{T-1}|S_t|^\top
\bigl|\delta_t^\theta(X_{0:t})-\delta_{t-1}^\theta(X_{0:t-1})\bigr|.
\end{equation}
The resulting terminal PnL is
\begin{equation}
\label{eq:pnl-transaction-cost}
\mathrm{PnL}_{c}(\theta,X)
=p_0 +\sum_{t=0}^{T-1}\delta_t^\theta(X_{0:t})^\top(S_{t+1}-S_t)
-\Phi(X)-\operatorname{TC_c}(\theta,X).
\end{equation}

We retain the fixed Heston dynamics and European call from the stationary benchmark. We consider both CVaR and M--CVaR objectives, defined in \eqref{eq:app-cvar-oce} and~\eqref{eq:oce-mean-cvar}, with transaction-cost rate \(c=0.005\), confidence level \(\alpha=0.95\), and \(\omega=0.5\) for M--CVaR. For each objective, we compare \textsc{ERM-U}, \(\phi\)\textsc{-U}, \textsc{OT-U}, and \textsc{WRAP-U} under uniform reference weights. Training epochs and ambiguity budgets are selected separately for each method using validation performance.

\emph{Results.} Table~\ref{tab:heston_stationary_transaction_cost_optimal} shows that \textsc{WRAP-U} selects \((\varepsilon,\tau)=(0.05,0.05)\) and attains the lowest validation and test CVaR. Relative to \textsc{ERM-U}, test CVaR decreases from \(4.494\) to \(4.481\) (\(0.3\%\)), at the cost of higher transaction costs and lower mean PnL. \textsc{OT-U} attains the highest mean PnL and the lowest transaction costs. The additional reduction from joint training is marginal, with test CVaR of \(4.481\) under \textsc{WRAP-U} versus \(4.484\) under \textsc{OT-U}.

\begin{table*}[!ht]
\centering
\scriptsize
\begin{threeparttable}
\caption{Results for the stationary Heston model with transaction costs and the CVaR objective.}
\label{tab:heston_stationary_transaction_cost_optimal}
\setlength{\tabcolsep}{3.5pt}
\begin{tabular}{@{}lcccccc@{}}
\toprule
Method & $\varepsilon$ & $\tau$ & \shortstack{Val. CVaR $(\downarrow)$} & \shortstack{Test CVaR $(\downarrow)$} & \shortstack{Mean PnL$(\uparrow)$} & \shortstack{Total TC$(\downarrow)$} \\
\midrule
\textsc{ERM-U} & -- & -- & $4.500 \pm 0.011$ & $4.494 \pm 0.016$ & $-3.003 \pm 0.006$ & $0.803 \pm 0.004$ \\
$\phi$\textsc{-U} & -- & $0.05$ & $4.491 \pm 0.009$ & $4.485 \pm 0.015$ & $-3.019 \pm 0.004$ & $0.819 \pm 0.004$ \\
\textsc{OT-U} & $0.05$ & -- & $4.488 \pm 0.010$ & $4.484 \pm 0.013$ & $\underline{-3.002} \pm 0.010$ & $\underline{0.802} \pm 0.008$ \\
\textsc{WRAP-U} & $0.05$ & $0.05$ & $\underline{4.487} \pm 0.009$ & $\underline{4.481} \pm 0.012$ & $-3.014 \pm 0.006$ & $0.815 \pm 0.007$ \\
\bottomrule
\end{tabular}
\begin{tablenotes}[flushleft]
\scriptsize
\item \textit{Note.} Values are mean $\pm$ population standard deviation over five seeds.
\end{tablenotes}
\end{threeparttable}
\end{table*}

Figure~\ref{fig:heston-stationary-budget-sensitivity} clarifies the role of the two ambiguity channels. Without transaction costs, transport accounts for nearly all of the improvement, and validation selects \((\varepsilon,\tau)=(0.10,0)\). With transaction costs, gains are smaller and concentrated at low budgets, while larger transport budgets deteriorate validation performance. This is consistent with the rebalancing penalty in \eqref{eq:transaction-cost} already regularizing the hedge, as also suggested by the smoother \textsc{ERM-U} policy in Figure~\ref{fig:heston-stationary-tc-policy}. In contrast, reweighting contributes more substantially under nonstationarity (Figure~\ref{fig:heston-nonstationary-budget-sensitivity}). Thus, the two-budget formulation adapts across settings: validation selects transport alone when reweighting adds no value and positive budgets for both channels when their combination improves the objective.

\begin{figure}[htbp]
\centering
\begin{subfigure}[b]{0.49\textwidth}
    \centering
    \includegraphics[width=\linewidth]{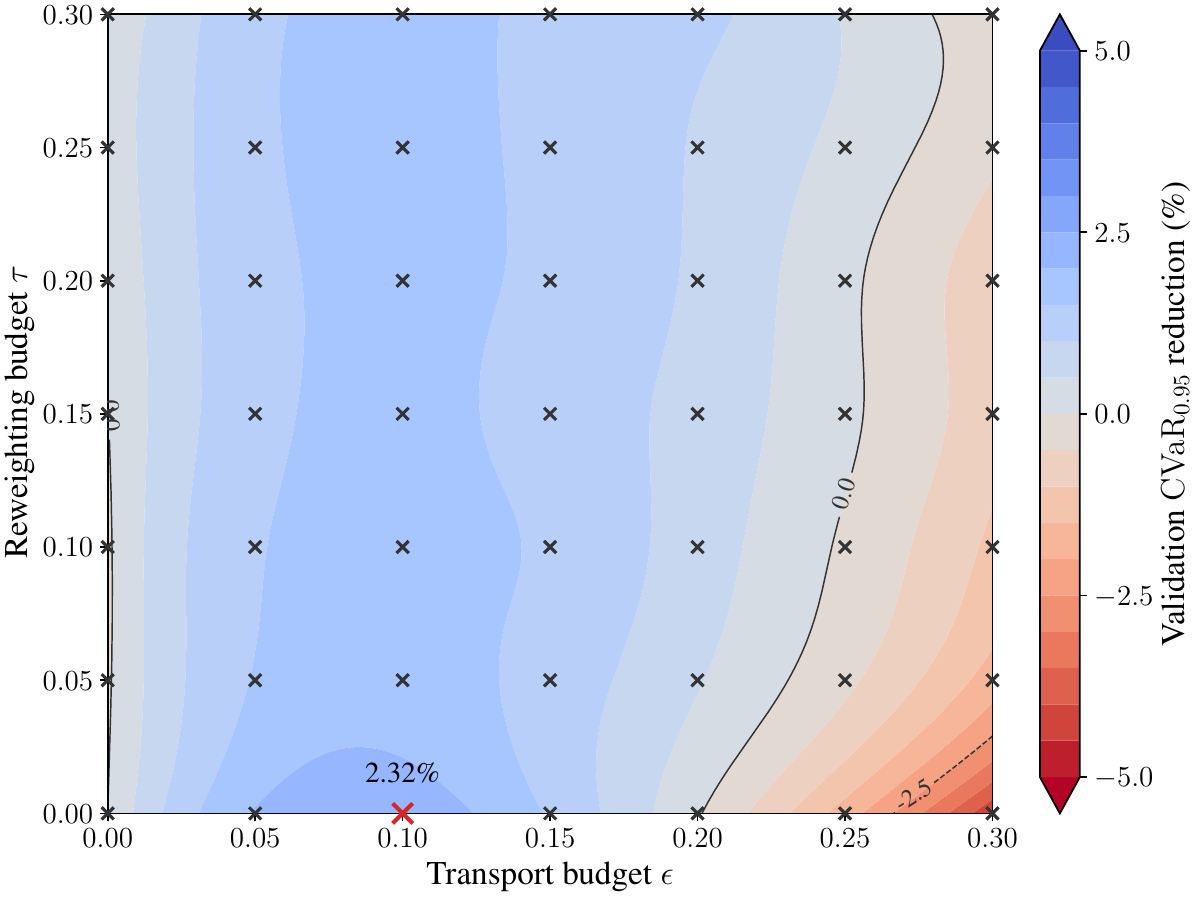}
    \caption{Without transaction costs}
    \label{fig:heston-stationary-cvar-reduction}
\end{subfigure}
\hfill
\begin{subfigure}[b]{0.49\textwidth}
    \centering
    \includegraphics[width=\linewidth]{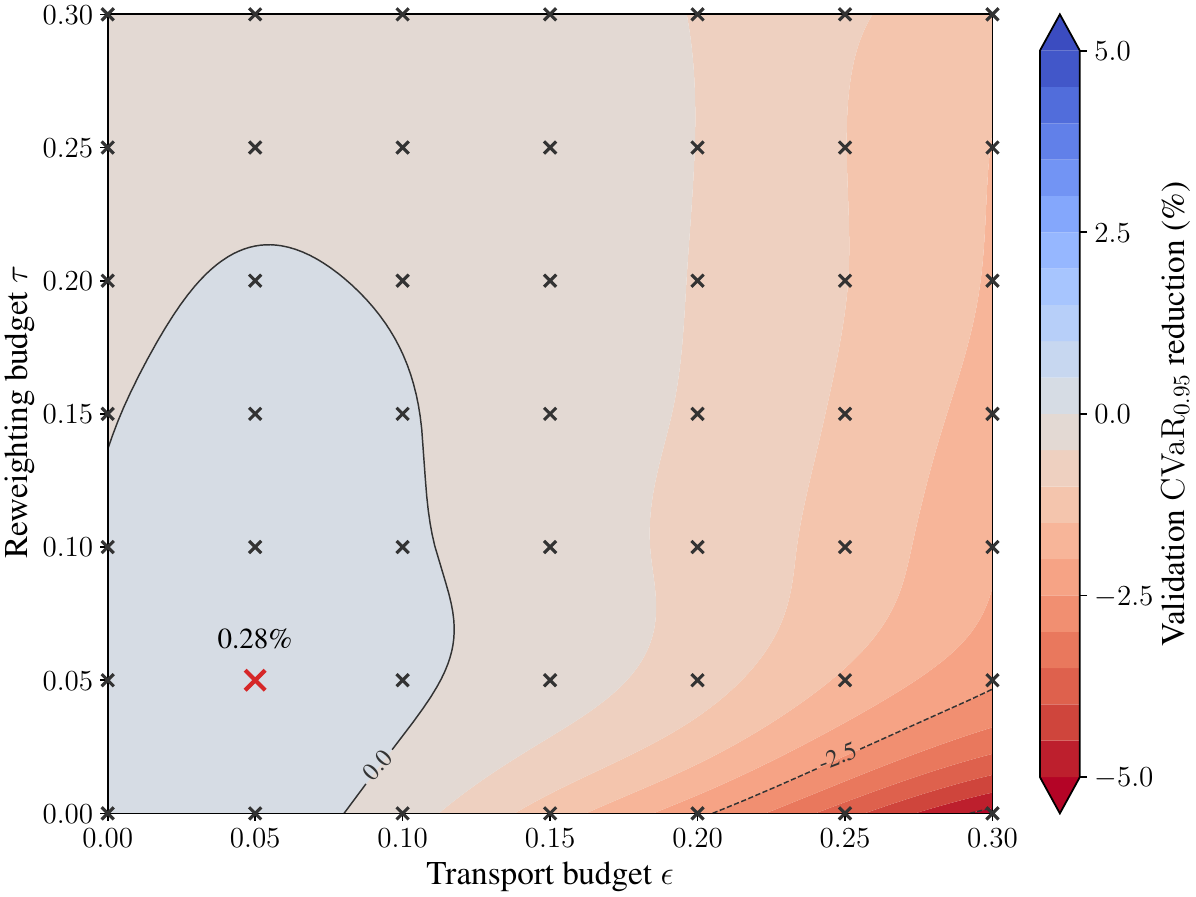}
    \caption{With transaction costs ($c=0.005$)}
    \label{fig:heston-stationary-tc-cvar-reduction}
\end{subfigure}
\caption{Stationary Heston validation grids with uniform reference weights and the CVaR objective, corresponding to Tables~\ref{tab:heston-stationary-optimal} and~\ref{tab:heston_stationary_transaction_cost_optimal}. The surface reports the percentage reduction in validation CVaR relative to the corresponding \textsc{ERM-U} baseline. Crosses indicate the evaluated budget pairs, with contours interpolating between them; the selected \textsc{WRAP} budgets are \((\varepsilon,\tau)=(0.10,0)\) in (a) and \((0.05,0.05)\) in (b).}
\label{fig:heston-stationary-budget-sensitivity}
\end{figure}

Under M--CVaR (Table~\ref{tab:heston_stationary_transaction_cost_mean_cvar_optimal}), validation again selects transport alone, with \textsc{WRAP-U} coinciding with \textsc{OT-U} at \((\varepsilon,\tau)=(0.05,0)\). Both attain the lowest validation and test M--CVaR, reducing the test objective from \(3.714\) to \(3.706\), while also achieving the highest mean PnL and lowest transaction costs. Reweighting alone attains the lowest test CVaR, but its lower mean PnL yields a higher M--CVaR objective. Hence, the preferred adversarial mechanism depends on the optimization objective, not tail risk alone.

\begin{table*}[!ht]
\centering
\scriptsize
\begin{threeparttable}
\caption{Results for the stationary Heston model with transaction costs and the M--CVaR objective.}
\label{tab:heston_stationary_transaction_cost_mean_cvar_optimal}
\setlength{\tabcolsep}{3.5pt}
\begin{tabular}{@{}lccccccc@{}}
\toprule
Method & $\varepsilon$ & $\tau$ & \shortstack{Val. M--CVaR $(\downarrow)$} & \shortstack{Test M--CVaR $(\downarrow)$} & \shortstack{Test CVaR $(\downarrow)$} & \shortstack{Mean PnL$(\uparrow)$} & \shortstack{Total TC$(\downarrow)$} \\
\midrule
\textsc{ERM-U} & -- & -- & $3.717 \pm 0.010$ & $3.714 \pm 0.008$ & $4.500 \pm 0.014$ & $-2.927 \pm 0.004$ & $0.727 \pm 0.003$ \\
$\phi$\textsc{-U} & -- & $0.05$ & $3.716 \pm 0.007$ & $3.713 \pm 0.009$ & $\underline{4.486} \pm 0.014$ & $-2.940 \pm 0.004$ & $0.740 \pm 0.002$ \\
\textsc{OT-U} & $0.05$ & -- & $\underline{3.708} \pm 0.009$ & $\underline{3.706} \pm 0.007$ & $4.502 \pm 0.013$ & $\underline{-2.909} \pm 0.004$ & $\underline{0.709} \pm 0.003$ \\
\textsc{WRAP-U} & $0.05$ & -- & $\underline{3.708} \pm 0.009$ & $\underline{3.706} \pm 0.007$ & $4.502 \pm 0.013$ & $\underline{-2.909} \pm 0.004$ & $\underline{0.709} \pm 0.003$ \\
\bottomrule
\end{tabular}
\begin{tablenotes}[flushleft]
\scriptsize
\item \textit{Note.} Values are mean $\pm$ population standard deviation over five seeds.
\end{tablenotes}
\end{threeparttable}
\end{table*}

\paragraph{Nonstationary Heston experiments.}
\label{par:heston-nonstationary}

In the nonstationary experiments, the stock-price drift and long-run variance evolve chronologically, so the validation, near-future, and far-future test distributions depart progressively from the training regime. This setting allows us to examine both the drift-aware reference weights from Section~\ref{subsec:short-coverage} and the two adversarial ambiguity channels under temporal distribution shift.

We generate $B=8$ chronological blocks of 4,000 independent trajectories each. For block $b=1,\ldots,B$, the drift and long-run variance are
\[
\mu_b
=
\mu_0+\Delta\mu\,\frac{b-1}{B-1},
\qquad
\theta_{\mathrm H,b}
=
\theta_{\mathrm H,0}
+\Delta\theta_{\mathrm H}\,\frac{b-1}{B-1},
\]
with $\mu_0=0$, $\Delta\mu=0.20$, $\theta_{\mathrm H,0}=0.04$, and $\Delta\theta_{\mathrm H}=0.12$. Each block starts at $v_{0,b}=\theta_{\mathrm H,b}$, so both initial and long-run variance increase from $0.04$ to $0.16$, while the drift increases from $0$ to $0.20$. All remaining Heston parameters retain their baseline values and are fixed across blocks.

The blocks are split chronologically into training (Blocks \(1,\dots,5\)), validation (Block 6), near-future test (Block 7), and far-future test (Block 8). Weight selection precedes adversarial-budget selection. For the drift-aware methods, we train clean hedges over a grid of eleven candidate ratios \(\varrho/\varepsilon_\varrho\) and select the ratio minimizing mean validation CVaR over five seeds. The selected ratio \(0.00853\) is then held fixed across all weighted adversarial specifications. Here \(\varepsilon_\varrho\) denotes the reference-weighting radius from Section~\ref{subsec:short-coverage}, distinct from the adversarial transport budget \(\varepsilon\). The resulting weights are assigned at the block level and distributed uniformly across paths within each block, while \(\varrho/\varepsilon_\varrho=0\) recovers uniform reference weights.

The nonstationary one-step budget grid is
\begin{equation}
\label{eq:heston-nonstationary-grid}
\varepsilon\in\{0,0.1,\ldots,0.5\},
\qquad
\tau\in\{0,0.2,\ldots,1.2\}.
\end{equation}

\emph{Results.}~Table~\ref{tab:short-nonstationary} shows that temporal distribution shift materially degrades the clean hedge: the CVaR of \textsc{ERM-U} increases from \(6.868\) on validation to \(7.952\) and \(8.992\) on the near- and far-future samples. In contrast to the stationary benchmark, both ambiguity channels now improve performance under uniform reference weights, and their combination yields the largest reduction. \textsc{WRAP-U} selects \((\varepsilon,\tau)=(0.30,0.60)\) and lowers far-future CVaR by \(19.2\%\) relative to \textsc{ERM-U}. This contrast with the stationary setting is central: adversarial reweighting becomes useful once the training distribution no longer represents future market conditions, complementing the protection provided by transport.

Drift-aware weighting provides an additional layer of robustness by improving the reference distribution before adversarial perturbations are applied. Across all four training methods, drift-aware weights lower CVaR in every evaluation period, with the largest relative far-future improvement under clean training. \textsc{WRAP-W} selects \((\varepsilon,\tau)=(0.40,1.00)\) and attains the lowest CVaR in validation and both test periods. The additional gain over \textsc{WRAP-U} is smaller, consistent with drift-aware weighting and adversarial training addressing partly overlapping sources of temporal misspecification. The improvement is concentrated in tail risk rather than average performance: \textsc{ERM-U} retains the highest mean PnL, while the robust methods achieve substantially lower CVaR.

Figure~\ref{fig:heston-nonstationary-budget-sensitivity} provides further evidence for the complementarity of the two ambiguity channels. In both panels, the largest validation gains occur in the interior of the budget grid, where both \(\varepsilon\) and \(\tau\) are positive. The selected joint budgets reduce validation CVaR by \(12.6\%\) under uniform weights and \(10.5\%\) under drift-aware weights, compared with at most \(9.1\%\) and \(6.8\%\), respectively, along the single-budget axes. Moreover, sufficiently large transport budgets can worsen performance when applied alone, whereas combining transport with reweighting produces a broader region of improved validation CVaR. The benefit of the two-budget formulation therefore lies not only in attaining a better selected solution, but also in providing a more robust region of effective adversarial regularization.

The smaller percentage reduction under drift-aware weighting reflects the stronger starting point: weighting alone lowers validation CVaR from \(6.868\) to \(6.653\). Panel~(b) therefore measures adversarial gains relative to an already improved reference distribution. Despite the smaller relative reduction, \textsc{WRAP-W} attains lower validation CVaR than \textsc{WRAP-U} (\(5.954\) versus \(6.005\)). This pattern shows that drift-aware reference construction and adversarial robustification are complementary rather than substitutes: the former reduces temporal misspecification in the empirical reference, while the latter protects against residual probability and path uncertainty.

\begin{figure}[htbp]
\centering
\begin{subfigure}[b]{0.49\textwidth}
    \centering
    \includegraphics[width=\linewidth]{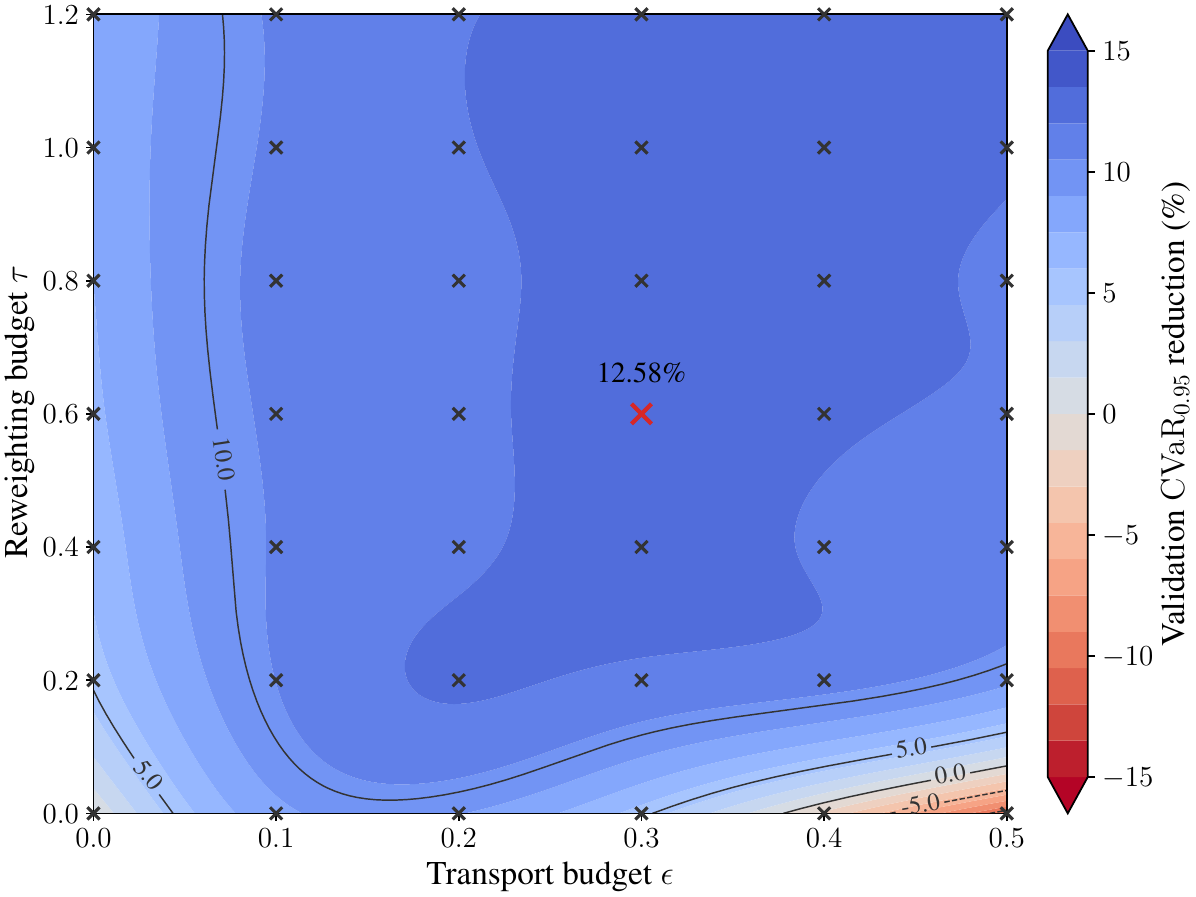}
    \caption{Uniform reference weights (\textsc{-U})}
    \label{fig:heston-nonstationary-uniform-cvar-reduction}
\end{subfigure}
\hfill
\begin{subfigure}[b]{0.49\textwidth}
    \centering
    \includegraphics[width=\linewidth]{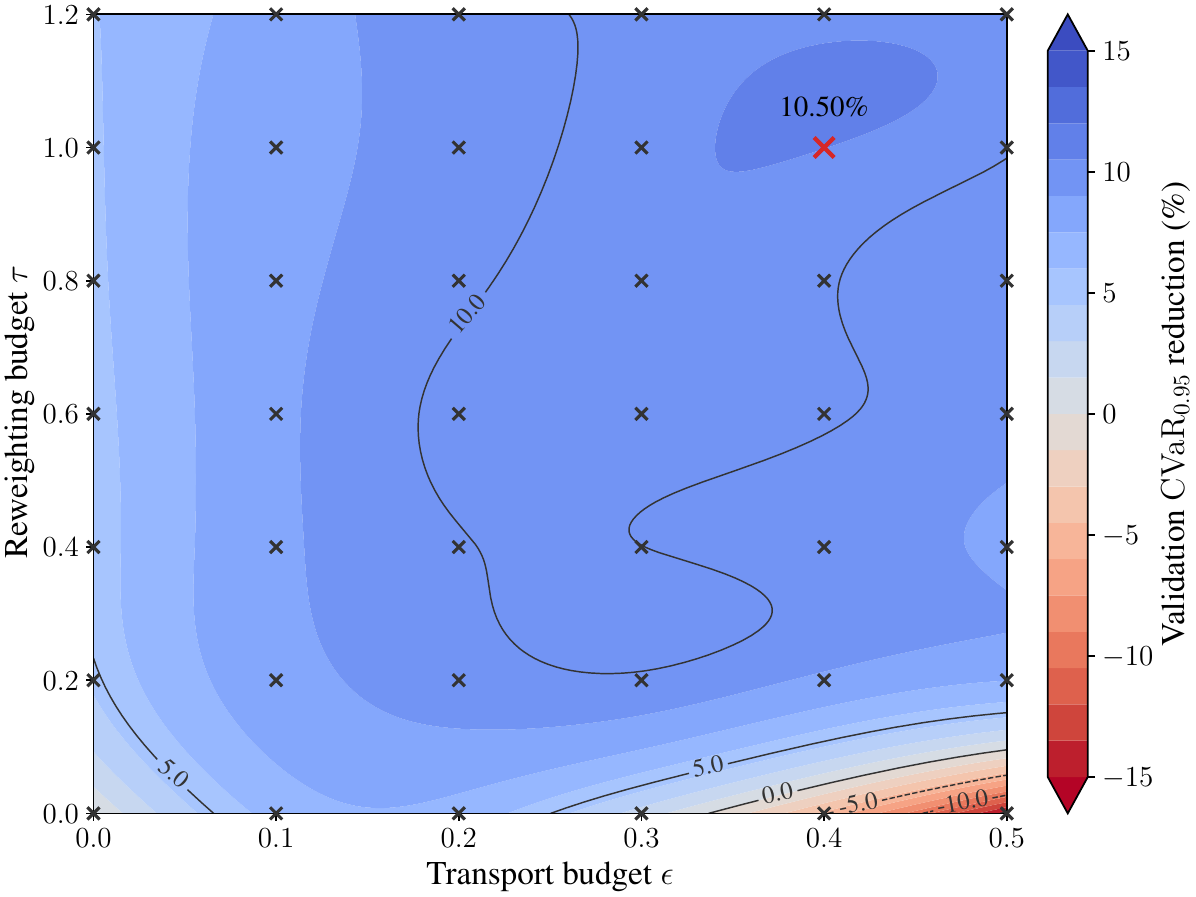}
    \caption{Drift-aware reference weights (\textsc{-W})}
    \label{fig:heston-nonstationary-weighted-cvar-reduction}
\end{subfigure}
\caption{Nonstationary Heston validation grids for Table~\ref{tab:short-nonstationary}. Colours show percentage reductions Val.~CVaR relative to \textsc{ERM-U} in (a) and \textsc{ERM-W} in (b); Black crosses mark the budget pairs in \eqref{eq:heston-nonstationary-grid}; contours interpolate between them. Red crosses mark the selected \textsc{WRAP} budgets: $(\varepsilon,\tau)=(0.30,0.60)$ in (a) and $(0.40,1.00)$ in (b)}
\label{fig:heston-nonstationary-budget-sensitivity}
\end{figure}

\paragraph{Nonstationary Heston with transaction costs and M--CVaR.}
\label{par:heston-nonstationary-transaction-costs}
We extend the nonstationary Heston experiment to proportional transaction costs using the European call in Appendix~\ref{app:claims} and the chronological block split described above. Costs follow \eqref{eq:transaction-cost} at rate $c=0.005$ and enter the terminal PnL through \eqref{eq:pnl-transaction-cost}. Reported mean PnL is net of these costs, and ``Total TC'' denotes the mean cumulative transaction cost per path, including opening trades and excluding terminal liquidation.

We use uniform reference weights and compare \textsc{ERM-U}, \(\phi\)\textsc{-U}, \textsc{OT-U}, and \textsc{WRAP-U} with the one-step attack in Algorithm~\ref{alg:drift-aware-one-step-attack}. The first experiment uses the CVaR objective in \eqref{eq:app-cvar-oce} with \(\alpha=0.95\), while the second uses the M--CVaR objective in \eqref{eq:oce-mean-cvar} with the same confidence level and \(\omega=1/2\). Epochs and ambiguity budgets are selected on the validation block and subsequently evaluated on the near- and far-future test blocks.

\emph{Results.}~Table~\ref{tab:heston_ns_u_tc_optimal} shows that transaction costs alter the relative contribution of the two ambiguity channels. Reweighting alone substantially improves CVaR across all three periods, whereas transport alone worsens the objective despite producing the lowest transaction costs. Joint training combines the tail-risk benefit of reweighting with a smaller transport component and attains the lowest validation, near-future, and far-future CVaR. In the far-future sample, \textsc{WRAP-U} reduces CVaR by \(6.6\%\) relative to \textsc{ERM-U}. The improvement comes with lower mean PnL and higher trading costs than clean training, but \textsc{WRAP-U} improves both quantities relative to reweighting alone.

\begin{table*}[!ht]
\centering
\scriptsize
\begin{threeparttable}
\caption{Results for the nonstationary Heston model with transaction costs and the CVaR objective.}
\label{tab:heston_ns_u_tc_optimal}
\setlength{\tabcolsep}{3pt}
\begin{tabular}{@{}lccccccc@{}}
\toprule
Method & $\varepsilon$ & $\tau$ & \shortstack{Val. CVaR$(\downarrow)$} & \shortstack{Near CVaR$(\downarrow)$} & \shortstack{Far CVaR$(\downarrow)$} & \shortstack{Far Mean PnL$(\uparrow)$} & \shortstack{Far Total TC$(\downarrow)$} \\
\midrule
\textsc{ERM-U} & -- & -- & $7.592 \pm 0.077$ & $8.250 \pm 0.155$ & $8.935 \pm 0.260$ & $\underline{-5.562} \pm 0.078$ & $1.002 \pm 0.073$ \\
$\phi$\textsc{-U} & -- & $1.20$ & $7.122 \pm 0.042$ & $7.778 \pm 0.131$ & $8.434 \pm 0.225$ & $-5.714 \pm 0.082$ & $1.163 \pm 0.076$ \\
\textsc{OT-U} & $0.10$ & -- & $7.752 \pm 0.073$ & $8.456 \pm 0.134$ & $9.170 \pm 0.195$ & $-5.565 \pm 0.101$ & $\underline{0.976} \pm 0.095$ \\
\textsc{WRAP-U} & $0.10$ & $0.80$ & $\underline{7.082} \pm 0.041$ & $\underline{7.691} \pm 0.111$ & $\underline{8.349} \pm 0.189$ & $-5.695 \pm 0.071$ & $1.130 \pm 0.065$ \\
\bottomrule
\end{tabular}
\begin{tablenotes}[flushleft]
\scriptsize
\item \textit{Note.} Values are mean $\pm$ population standard deviation over five seeds.
\end{tablenotes}
\end{threeparttable}
\end{table*}

Under M--CVaR, the same qualitative pattern persists (Table~\ref{tab:heston_ns_u_tc_mc_optimal}). Reweighting remains beneficial, transport alone again worsens the risk objective, and \textsc{WRAP-U} attains the lowest validation, near-future, and far-future M--CVaR. The joint method also improves mean PnL and transaction costs relative to reweighting alone, yielding a better balance between average loss and tail risk. Most importantly, the nonstationary experiment selects positive budgets for both perturbation channels under both objectives. This contrasts with the stationary M--CVaR benchmark, where validation selects transport alone, and reinforces the role of reweighting when the market distribution evolves over time.

\begin{table*}[!ht]
\centering
\scriptsize
\begin{threeparttable}
\caption{Results for the nonstationary Heston model with transaction costs and the M--CVaR objective.}
\label{tab:heston_ns_u_tc_mc_optimal}
\setlength{\tabcolsep}{3pt}
\begin{tabular}{@{}lccccccc@{}}
\toprule
Method & $\varepsilon$ & $\tau$ & \shortstack{Val. M--CVaR$(\downarrow)$} & \shortstack{Near M--CVaR$(\downarrow)$} & \shortstack{Far M--CVaR$(\downarrow)$} & \shortstack{Far Mean PnL$(\uparrow)$} & \shortstack{Far Total TC$(\downarrow)$} \\
\midrule
\textsc{ERM-U} & -- & -- & $6.281 \pm 0.048$ & $6.766 \pm 0.077$ & $7.251 \pm 0.149$ & $\underline{-5.552} \pm 0.084$ & $0.994 \pm 0.077$ \\
$\phi$\textsc{-U} & -- & $1.00$ & $6.078 \pm 0.011$ & $6.581 \pm 0.068$ & $7.090 \pm 0.149$ & $-5.721 \pm 0.089$ & $1.170 \pm 0.079$ \\
\textsc{OT-U} & $0.10$ & -- & $6.388 \pm 0.074$ & $6.898 \pm 0.078$ & $7.401 \pm 0.151$ & $-5.562 \pm 0.122$ & $\underline{0.974} \pm 0.112$ \\
\textsc{WRAP-U} & $0.20$ & $0.80$ & $\underline{6.045} \pm 0.013$ & $\underline{6.523} \pm 0.062$ & $\underline{7.019} \pm 0.148$ & $-5.681 \pm 0.098$ & $1.112 \pm 0.092$ \\
\bottomrule
\end{tabular}
\begin{tablenotes}[flushleft]
\scriptsize
\item \textit{Note.} Values are mean $\pm$ population standard deviation over five seeds.
\end{tablenotes}
\end{threeparttable}
\end{table*}

\paragraph{Hedging surfaces.}
Figure~\ref{fig:heston-stationary-policy} shows the stock position \(\delta_t^\theta\) as a function of moneyness and trading date. Across methods, the learned hedge has the expected structure: the stock position is close to zero when the call is far out of the money, approaches one when the option is well in the money, and transitions more sharply around the strike as maturity approaches. The main difference across training schemes is therefore not the overall shape of the hedge, but its local regularity. In particular, \textsc{OT-U} produces a smoother policy than \textsc{ERM-U}, with fewer local fluctuations across nearby moneyness levels and trading dates. This coincides with the regularization effect of Wasserstein described by \citet{gao2022Wasserstein}.

Transaction costs reinforce this regularization effect. Relative to Figure~\ref{fig:heston-stationary-policy}, the policies in Figure~\ref{fig:heston-stationary-tc-policy} vary more gradually across states and time, including under \textsc{ERM-U}. The transaction-cost term in \eqref{eq:transaction-cost} directly penalizes changes in position, whereas OT regularizes the hedge indirectly through sensitivity of the loss to path perturbations, as characterized by Theorem~\ref{thm:two_budget_ot_phi_taylor}. The smoother \textsc{OT-U} and \textsc{WRAP-U} surfaces therefore reflect a distinct robustness mechanism that remains visible even when rebalancing is already penalized.

Under nonstationarity, Figures~\ref{fig:heston-nonstationary-uniform-policy} and~\ref{fig:heston-nonstationary-policy} show the corresponding hedges under uniform and drift-aware reference weights. The later Heston blocks generate a wider range of moneyness states, but the qualitative structure of the hedge remains stable. The main effect of robust training is again local: \(\phi\)\textsc{-W} and \textsc{OT-W} reduce irregularities around the strike relative to \textsc{ERM-W}, while \textsc{WRAP-W} produces the smoothest transition among the weighted specifications. The same pattern appears under uniform weights, where \textsc{WRAP-U} also reduces local fluctuations relative to \textsc{ERM-U}. These surfaces suggest that the robustness gains documented above are accompanied by more regular hedging policies rather than a qualitatively different trading rule.

\begin{figure}[H]
\centering
\includegraphics[width=\textwidth]{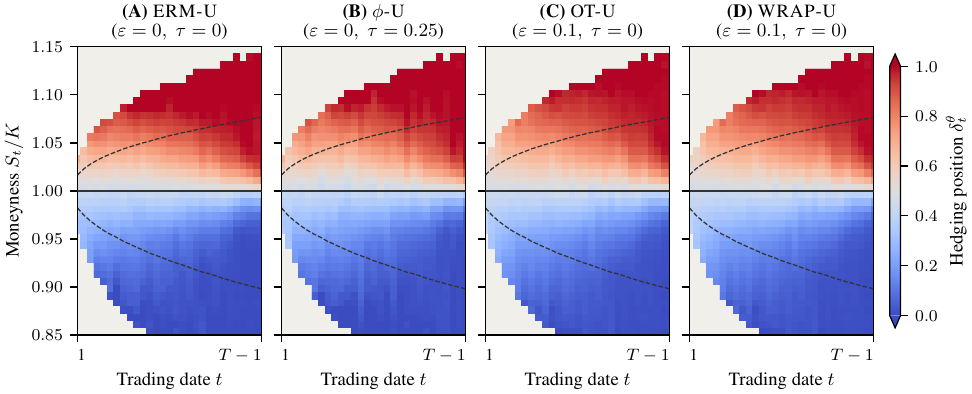}
\caption{Stock-hedging policies for stationary Heston with uniform reference weights and the CVaR objective, using the selections in Table~\ref{tab:heston-stationary-optimal}: (A) \textsc{ERM-U}, (B) $\phi$\textsc{-U}, (C) \textsc{OT-U}, and (D) \textsc{WRAP-U}. Colours show stock holdings averaged over validation paths and five seeds within each date--moneyness bin. Dashed curves mark the 5th and 95th moneyness percentiles; the solid line marks the strike.}
\label{fig:heston-stationary-policy}
\end{figure}

\begin{figure}[H]
\centering
\includegraphics[width=\textwidth]{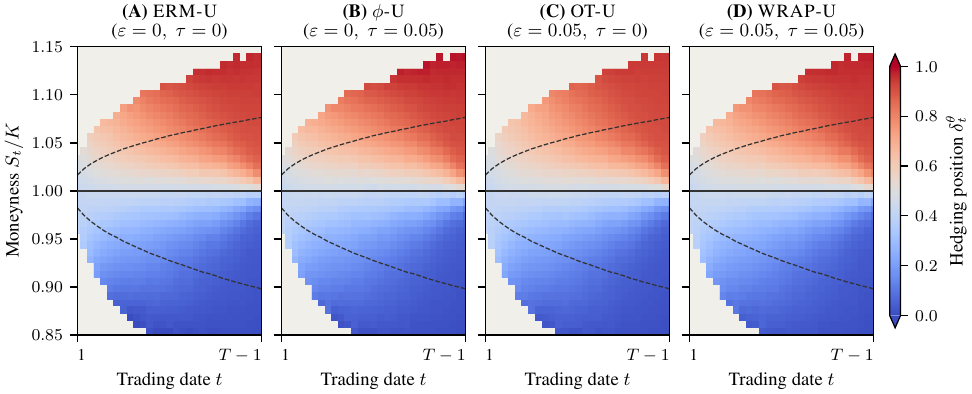}
\caption{Stock-hedging policies for stationary Heston with uniform reference weights, the CVaR objective, and proportional transaction costs $c=0.005$, using the selections in Table~\ref{tab:heston_stationary_transaction_cost_optimal}. Panels, averaging, and plotting conventions follow Figure~\ref{fig:heston-stationary-policy}, with the same colour scale.}
\label{fig:heston-stationary-tc-policy}
\end{figure}

\begin{figure}[H]
\centering
\includegraphics[width=\textwidth]{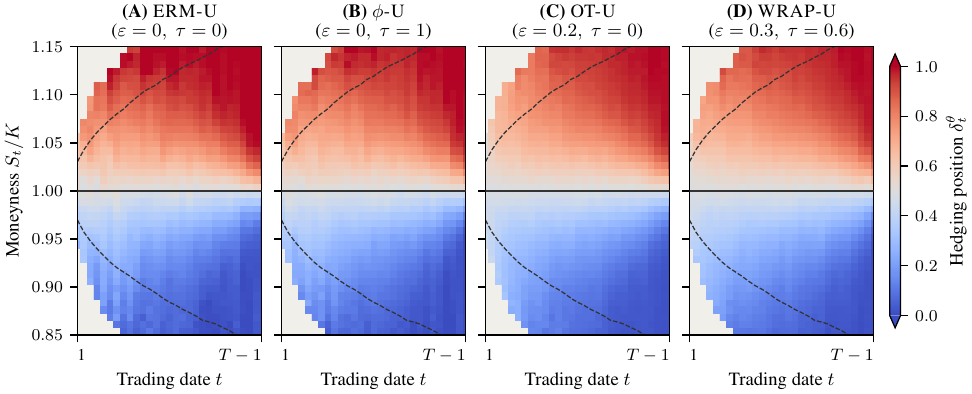}
\caption{Stock-hedging policies for nonstationary Heston with uniform reference weights and the CVaR objective, using the selections in Table~\ref{tab:short-nonstationary}: (A) \textsc{ERM-U}, (B) $\phi$\textsc{-U}, (C) \textsc{OT-U}, and (D) \textsc{WRAP-U}. Averaging and plotting conventions follow Figure~\ref{fig:heston-stationary-policy}, with the same colour scale.}
\label{fig:heston-nonstationary-uniform-policy}
\end{figure}

\begin{figure}[H]
\centering
\includegraphics[width=\textwidth]{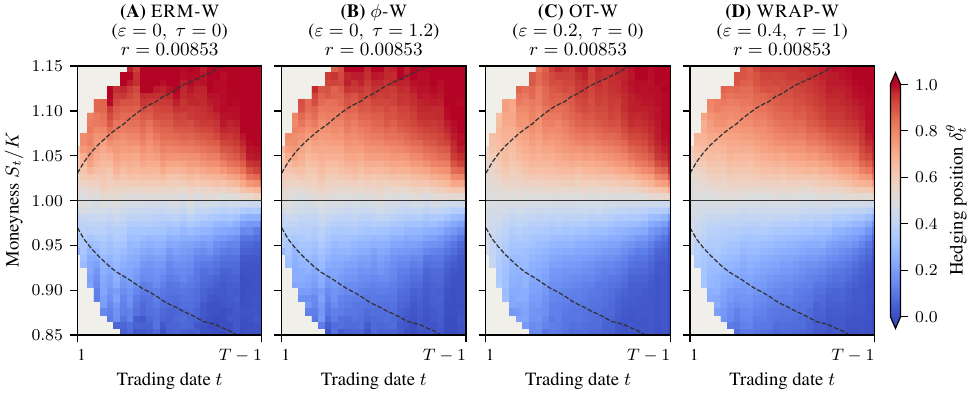}
\caption{Stock-hedging policies for nonstationary Heston with drift-aware reference weights and the CVaR objective, using the selections in Table~\ref{tab:short-nonstationary}: (A) \textsc{ERM-W}, (B) $\phi$\textsc{-W}, (C) \textsc{OT-W}, and (D) \textsc{WRAP-W}.  The displayed ratio $r=\varrho/\varepsilon_\varrho\approx0.00853$ controls reference weighting. Averaging and plotting conventions follow Figure~\ref{fig:heston-stationary-policy}, with the same colour scale.}
\label{fig:heston-nonstationary-policy}
\end{figure}

\subsubsection{GAD Experiments}
\label{subsec:market-results}

We complement the controlled Heston experiments with generalized affine diffusion (GAD) models calibrated to historical equity prices, using the dynamics introduced in Appendix~\ref{app:financial-time-series-models}. Following the market-data benchmark of \citet[Section~5.4]{he2025distributional} and the parameter-uncertainty approach of \citet{lutkebohmert2022robust}, we compare training under a fixed calibration with training that samples parameters from rolling estimates. Evaluation on simulated paths from models fitted to later, held-out periods assesses whether adversarial probability reweighting and path perturbations improve hedging performance under changes in market conditions estimated from real prices.

\paragraph{Data and descriptive statistics.}
\label{appsub:descriptive-statistics}
We use daily auto-adjusted closing prices from Yahoo Finance for AAPL, MSFT, AMZN, GOOGL, and BRK-B. Training calibration uses prices from 26 September 2008 through 31 December 2019, while validation and test models are calibrated separately to 2020--2023 and 2024--2025 prices. Each period is divided into overlapping windows of 250 prices, shifted by 100 observations, with no window crossing a period boundary. Each window is rescaled to start at \(S_0=10\). Calibration settings are selected using the first 200 prices for fitting and the following 50 returns for validation, after which the coefficients are refitted on the full window. Test-period calibration is performed only after epoch and budget selection has been frozen. The resulting experiment is therefore a retrospective simulation benchmark conditional on period-specific fitted market models.

\begin{table}[ht]
\centering
 \small
\caption{Descriptive statistics of daily simple adjusted-close returns across the training, validation, and test periods. Mean, median, volatility, and maximum drawdown are reported in percentages. The \(p_{\mathrm{KS}}\) column reports two-sample Kolmogorov--Smirnov \(p\)-values relative to the corresponding training-period returns.}
% \resizebox{\textwidth}{!}{%
\label{tab:gad_market_descriptive_stats}
\begin{tabular}{lccccccc}
\toprule
Stock & Split & $\mu$ (\%) & Median (\%) & $\sigma$ (\%) & MaxDD $(\%)$ & AC(1) & $p_\text{KS}$ \\
\midrule
AAPL & Train & $30.24$ & $25.39$ & $29.33$ & $-43.80$ & $-0.000$ &  \\
 & Val & $29.90$ & $24.90$ & $33.55$ & $-31.43$ & $-0.133$ & $0.016$ \\
 & Test & $23.51$ & $29.92$ & $27.87$ & $-33.36$ & $0.045$ & $0.046$ \\
\addlinespace
MSFT & Train & $21.56$ & $16.11$ & $26.86$ & $-43.95$ & $-0.084$ &  \\
 & Val & $27.58$ & $24.65$ & $32.60$ & $-37.15$ & $-0.197$ & $<0.001$ \\
 & Test & $16.54$ & $28.00$ & $22.16$ & $-23.73$ & $-0.018$ & $0.682$ \\
\addlinespace
AMZN & Train & $35.49$ & $23.10$ & $36.18$ & $-51.86$ & $-0.018$ &  \\
 & Val & $18.90$ & $19.69$ & $37.67$ & $-56.15$ & $-0.039$ & $0.007$ \\
 & Test & $26.62$ & $16.34$ & $31.37$ & $-30.88$ & $-0.030$ & $0.918$ \\
\addlinespace
GOOGL & Train & $19.92$ & $16.17$ & $27.21$ & $-40.27$ & $-0.010$ &  \\
 & Val & $23.52$ & $26.61$ & $33.52$ & $-44.32$ & $-0.108$ & $<0.001$ \\
 & Test & $46.09$ & $73.66$ & $30.23$ & $-29.81$ & $-0.014$ & $<0.001$ \\
\addlinespace
BRK-B & Train & $10.84$ & $5.87$ & $22.36$ & $-50.54$ & $-0.032$ &  \\
 & Val & $13.87$ & $15.01$ & $23.19$ & $-29.57$ & $-0.179$ & $0.201$ \\
 & Test & $17.85$ & $21.29$ & $16.71$ & $-14.95$ & $0.016$ & $0.256$ \\
\bottomrule
\end{tabular}
\begin{tablenotes}[flushleft]
\small
\item \textit{Note.} Daily mean and median returns are annualized by a factor of \(252\), and volatility by \(\sqrt{252}\). MaxDD reports the maximum decline from a previous peak, and AC(1) the lag-one autocorrelation. KS \(p\)-values are nominal and are not adjusted for serial dependence or multiple testing.
\end{tablenotes}
\end{table}

Table~\ref{tab:gad_market_descriptive_stats} reports annualized mean and median returns, annualized volatility, maximum drawdown, lag-one autocorrelation, and two-sample Kolmogorov--Smirnov \(p\)-values relative to the training period. The three calibration periods exhibit different market characteristics. In particular, test-period volatility is lower than validation-period volatility for all five stocks, with the largest declines for MSFT, AMZN, and BRK-B. Maximum drawdowns are also generally smaller in the test period. Hence, the test period represents a different market regime, but not simply a more severe version of the validation period.

The KS statistics provide additional evidence that distributional changes differ across stocks. AAPL and GOOGL exhibit small test-period \(p\)-values relative to the training sample, while the corresponding values for MSFT, AMZN, and BRK-B are larger. The market-data experiment therefore covers heterogeneous changes in return distributions rather than a single common stress scenario. This provides a useful setting for testing whether adversarial training remains beneficial across different forms of market change.

\paragraph{Calibration.}
\label{par:gad-calibration}
The full five-parameter GAD model in \eqref{eq:gad-model-short} can be weakly identified on short calibration windows: nearly flat likelihood directions allow substantially different coefficients to fit the data similarly. To reduce this instability and obtain a positive-state model, we set $a_0=0$ and impose $a_1,b_0>0$ and $\gamma\in[1/2,1]$, with $2b_0>a_1$ when $\gamma=1/2$. We fit this restricted model using regularized Gaussian quasi-likelihood for log returns, pooling the volatility exponent across stocks and windows within each period and regularizing the drift. For each candidate exponent and drift penalty, we fit the coefficients on the first 200 prices of each window and evaluate the negative log-likelihood on the following 50 returns. We select the pair with the lowest average validation negative log-likelihood across all five stocks and windows in that period. The exponent grid is $\gamma\in\{0.5,0.6,0.7,0.8,0.9,1.0\}$ and the drift-penalty grid is $\{0,0.01,0.1,1,10,100\}$. With the selected pair fixed, we refit $(b_0,b_1,a_1)$ separately for each stock and window using all 250 prices.

We repeat this procedure independently for the three historical datasets: 2008--2019 prices determine the training parameters, 2020--2023 prices determine the validation parameters, and 2024--2025 prices determine the test parameters. Each period has a separately selected exponent and drift penalty and a corresponding collection of fitted coefficient vectors. These collections generate the respective simulated samples; validation and test parameters do not enter the training generator. Test calibration takes place only after model and adversarial-budget selection is complete.

\paragraph{Simulation: \textsc{Fix} and \textsc{Robust}.}
\label{par:gad-simulation}
Following \citet{buehler2019deep,he2025distributional}, we train on simulated trajectories. The \textsc{Fix} specification uses the coefficient vector from the final training-period calibration window throughout every training path. The \textsc{Robust} specification draws one complete coefficient vector uniformly from the stock's rolling estimates, independently for each path and trading step. This differs from the implementation of \citet{he2025distributional}. Sampling the coefficients jointly preserves their dependence within each fitted vector and follows the empirical-parameter alternative discussed by \citet[Remark~4.2]{lutkebohmert2022robust}. The pooled exponent remains fixed within each period. Both specifications use the log-Euler scheme, consistent with the calibration, and produce positive stock prices.

For each stock, each model replicate is trained on 20,000 simulated paths, which are shared across methods and budget candidates. Validation and testing each use a separate sample of 20,000 independent \textsc{Robust} paths per stock, generated from the corresponding period's calibration and shared across methods, training specifications, and model seeds. At each trading step, the evaluation generators sample coefficient vectors only from the corresponding period's rolling estimates, following the same sampling rule as \textsc{Robust} training. The large evaluation samples reduce Monte Carlo variance, since a small number of extreme losses can otherwise dominate the entropic-risk estimate and make comparisons unstable. This improves precision conditional on the fitted models, while calibration uncertainty remains.

\paragraph{Hedging setup and budget selection.}
We hedge the arithmetic Asian call defined in Appendix~\ref{app:claims}, with \(S_0=K=10\), \(T=30\) trading steps, and \(\Delta t=1/250\), using the entropic OCE in Appendix~\ref{app:oce-risk-measures}. The stock is the sole tradable asset and observed state, so adversarial transport acts only on the stock-price path. Under both \textsc{Fix} and \textsc{Robust} training, we compare \textsc{ERM-U}, \(\phi\)\textsc{-U}, \textsc{OT-U}, and \textsc{WRAP-U} using uniform reference weights and the one-step attack in Algorithm~\ref{alg:drift-aware-one-step-attack}. Architecture and optimization settings follow Table~\ref{tab:experiment_hyperparameters}, with 300 training epochs and a 100-epoch clean warmup for adversarial methods.

Training and evaluation use the entropic OCE in \eqref{eq:app-entropic-oce} with \(\lambda=13\) and PnL floored at \(-1\); reported mean PnL is computed without this floor. Clean and adversarial losses receive equal weight after warmup. For each candidate, we retain the epoch with the lowest validation entropic risk and select ambiguity budgets separately for each stock, training distribution, and adversarial specification using mean validation performance over five model seeds. We consider
\[
\tau\in\{0,0.05,0.10,0.15,0.20,0.30,0.40\},
\qquad
\varepsilon\in\{0,0.025,0.05,0.10,0.15,0.20\}.
\]
The selected models are then evaluated on the shared held-out \textsc{Robust} test paths. Reported standard deviations therefore reflect training variability across model seeds rather than market-sampling or calibration uncertainty.

\paragraph{Results.}
Table~\ref{tab:gad_synthetic_validation_optimal} shows that joint adversarial training is consistently stronger than either ambiguity channel in isolation. \textsc{WRAP-U} attains lower validation and test entropic risk than both \(\phi\)\textsc{-U} and \textsc{OT-U} in all ten stock--training comparisons. Relative to \textsc{ERM-U}, \textsc{WRAP-U} lowers test entropic risk in six of ten comparisons. In three additional cases, validation selects the clean origin, so \textsc{WRAP-U} recovers \textsc{ERM-U}. Only \textsc{Fix} BRK-B selects positive adversarial budgets on validation without improving test risk. The results therefore suggest that joint adversarial training can add robustness beyond parameter uncertainty in the training generator while reverting to empirical training when additional robustness is not supported by validation.

The gains vary substantially across stocks and market periods. The largest improvements under \textsc{Robust} occur for AAPL and GOOGL, whereas several comparisons show little or no benefit over \textsc{ERM-U}. This heterogeneity is consistent with the descriptive statistics in Table~\ref{tab:gad_market_descriptive_stats}: validation and test periods differ in both the magnitude and direction of distribution shift across stocks. In particular, lower test-period volatility for MSFT, AMZN, and BRK-B reduces entropic risk for all methods, so lower test scores alone do not indicate a better hedge. Performance should instead be assessed relative to \textsc{ERM-U} on the same test sample.

The remaining performance measures reveal a clear trade-off. \textsc{WRAP-U} reduces turnover relative to \textsc{ERM-U} in all ten comparisons and attains the lowest turnover in nine. In the six cases with lower test entropic risk, mean PnL is also lower. The improvements therefore reflect better control of risk rather than uniformly better average performance, accompanied by less active rebalancing.

\begin{table*}[!ht]
\centering
\scriptsize
\begin{threeparttable}
\caption{Hedging under GAD dynamics calibrated to successive periods of real equity prices.}

\label{tab:gad_synthetic_validation_optimal}
\setlength{\tabcolsep}{3.5pt}
\begin{tabular}{@{}lllcccccc@{}}
\toprule
Training & Stock & Method & $\varepsilon$ & $\tau$ & \shortstack{Val. OCE $(\downarrow)$} & \shortstack{Test OCE $(\downarrow)$} & \shortstack{Mean PnL $(\uparrow)$} & \shortstack{Turnover $(\downarrow)$} \\
\midrule
\textsc{Fix} & AAPL & \textsc{ERM-U} & -- & -- & $\underline{0.287} \pm 0.002$ & $\underline{0.258} \pm 0.001$ & $-0.227 \pm 0.000$ & $0.072 \pm 0.002$ \\
 &  & $\phi$\textsc{-U} & -- & $0.05$ & $0.308 \pm 0.012$ & $0.272 \pm 0.008$ & $-0.227 \pm 0.002$ & $0.089 \pm 0.005$ \\
 &  & \textsc{OT-U} & $0.025$ & -- & $0.299 \pm 0.003$ & $0.267 \pm 0.002$ & $\underline{-0.226} \pm 0.001$ & $0.072 \pm 0.005$ \\
 &  & \textsc{WRAP-U} & $0.05$ & $0.15$ & $0.289 \pm 0.002$ & $0.261 \pm 0.001$ & $\underline{-0.226} \pm 0.001$ & $\underline{0.066} \pm 0.001$ \\
\midrule
\textsc{Fix} & AMZN & \textsc{ERM-U} & -- & -- & $\underline{0.310} \pm 0.001$ & $\underline{0.289} \pm 0.001$ & $-0.255 \pm 0.000$ & $0.069 \pm 0.001$ \\
 &  & $\phi$\textsc{-U} & -- & $0.05$ & $0.328 \pm 0.004$ & $0.301 \pm 0.003$ & $\underline{-0.252} \pm 0.002$ & $0.083 \pm 0.003$ \\
 &  & \textsc{OT-U} & $0.025$ & -- & $0.322 \pm 0.002$ & $0.300 \pm 0.002$ & $-0.254 \pm 0.001$ & $0.068 \pm 0.005$ \\
 &  & \textsc{WRAP-U} & $0.05$ & $0.30$ & $0.314 \pm 0.002$ & $0.294 \pm 0.002$ & $-0.255 \pm 0.001$ & $\underline{0.067} \pm 0.001$ \\
\midrule
\textsc{Fix} & BRK-B & \textsc{ERM-U} & -- & -- & $0.204 \pm 0.005$ & $\underline{0.170} \pm 0.003$ & $-0.142 \pm 0.004$ & $0.100 \pm 0.004$ \\
 &  & $\phi$\textsc{-U} & -- & $0.05$ & $0.263 \pm 0.044$ & $0.203 \pm 0.025$ & $\underline{-0.131} \pm 0.013$ & $0.148 \pm 0.037$ \\
 &  & \textsc{OT-U} & $0.025$ & -- & $0.219 \pm 0.011$ & $0.186 \pm 0.007$ & $-0.151 \pm 0.004$ & $0.086 \pm 0.019$ \\
 &  & \textsc{WRAP-U} & $0.025$ & $0.10$ & $\underline{0.197} \pm 0.005$ & $0.172 \pm 0.006$ & $-0.147 \pm 0.004$ & $\underline{0.085} \pm 0.007$ \\
\midrule
\textsc{Fix} & GOOGL & \textsc{ERM-U} & -- & -- & $0.292 \pm 0.008$ & $0.301 \pm 0.009$ & $-0.243 \pm 0.003$ & $0.086 \pm 0.006$ \\
 &  & $\phi$\textsc{-U} & -- & $0.05$ & $0.327 \pm 0.015$ & $0.340 \pm 0.016$ & $\underline{-0.238} \pm 0.008$ & $0.114 \pm 0.010$ \\
 &  & \textsc{OT-U} & $0.025$ & -- & $0.303 \pm 0.008$ & $0.303 \pm 0.008$ & $-0.248 \pm 0.002$ & $0.074 \pm 0.009$ \\
 &  & \textsc{WRAP-U} & $0.05$ & $0.10$ & $\underline{0.286} \pm 0.002$ & $\underline{0.292} \pm 0.003$ & $-0.248 \pm 0.002$ & $\underline{0.071} \pm 0.004$ \\
\midrule
\textsc{Fix} & MSFT & \textsc{ERM-U} & -- & -- & $0.318 \pm 0.019$ & $0.213 \pm 0.004$ & $-0.175 \pm 0.001$ & $0.088 \pm 0.003$ \\
 &  & $\phi$\textsc{-U} & -- & $0.05$ & $0.363 \pm 0.029$ & $0.226 \pm 0.006$ & $-0.181 \pm 0.005$ & $0.108 \pm 0.009$ \\
 &  & \textsc{OT-U} & $0.025$ & -- & $0.323 \pm 0.011$ & $0.217 \pm 0.004$ & $\underline{-0.174} \pm 0.001$ & $0.076 \pm 0.009$ \\
 &  & \textsc{WRAP-U} & $0.025$ & $0.05$ & $\underline{0.280} \pm 0.002$ & $\underline{0.205} \pm 0.000$ & $-0.176 \pm 0.001$ & $\underline{0.070} \pm 0.004$ \\
\midrule
\textsc{Robust} & AAPL & \textsc{ERM-U} & -- & -- & $0.328 \pm 0.019$ & $0.288 \pm 0.012$ & $-0.223 \pm 0.001$ & $0.091 \pm 0.005$ \\
 &  & $\phi$\textsc{-U} & -- & $0.05$ & $0.352 \pm 0.023$ & $0.296 \pm 0.008$ & $-0.227 \pm 0.002$ & $0.108 \pm 0.005$ \\
 &  & \textsc{OT-U} & $0.025$ & -- & $0.332 \pm 0.015$ & $0.289 \pm 0.011$ & $\underline{-0.222} \pm 0.001$ & $0.077 \pm 0.008$ \\
 &  & \textsc{WRAP-U} & $0.025$ & $0.10$ & $\underline{0.297} \pm 0.002$ & $\underline{0.267} \pm 0.002$ & $-0.224 \pm 0.001$ & $\underline{0.071} \pm 0.002$ \\
\midrule
\textsc{Robust} & AMZN & \textsc{ERM-U} & -- & -- & $0.329 \pm 0.005$ & $0.296 \pm 0.003$ & $\underline{-0.243} \pm 0.001$ & $0.075 \pm 0.002$ \\
 &  & $\phi$\textsc{-U} & -- & $0.05$ & $0.336 \pm 0.011$ & $0.302 \pm 0.006$ & $-0.245 \pm 0.003$ & $0.086 \pm 0.006$ \\
 &  & \textsc{OT-U} & $0.025$ & -- & $0.336 \pm 0.006$ & $0.303 \pm 0.003$ & $-0.248 \pm 0.001$ & $\underline{0.065} \pm 0.004$ \\
 &  & \textsc{WRAP-U} & $0.05$ & $0.15$ & $\underline{0.321} \pm 0.001$ & $\underline{0.294} \pm 0.002$ & $-0.249 \pm 0.001$ & $0.067 \pm 0.002$ \\
\midrule
\textsc{Robust} & BRK-B & \textsc{ERM-U} & -- & -- & $\underline{0.190} \pm 0.003$ & $\underline{0.161} \pm 0.002$ & $\underline{-0.137} \pm 0.002$ & $0.095 \pm 0.002$ \\
 &  & $\phi$\textsc{-U} & -- & $0.05$ & $0.223 \pm 0.013$ & $0.183 \pm 0.008$ & $-0.139 \pm 0.007$ & $0.142 \pm 0.019$ \\
 &  & \textsc{OT-U} & $0.025$ & -- & $0.206 \pm 0.005$ & $0.173 \pm 0.006$ & $-0.142 \pm 0.004$ & $0.094 \pm 0.014$ \\
 &  & \textsc{WRAP-U} & $0.025$ & $0.05$ & $0.193 \pm 0.004$ & $0.167 \pm 0.004$ & $-0.143 \pm 0.003$ & $\underline{0.087} \pm 0.009$ \\
\midrule
\textsc{Robust} & GOOGL & \textsc{ERM-U} & -- & -- & $0.325 \pm 0.023$ & $0.339 \pm 0.030$ & $\underline{-0.233} \pm 0.006$ & $0.105 \pm 0.010$ \\
 &  & $\phi$\textsc{-U} & -- & $0.05$ & $0.392 \pm 0.028$ & $0.420 \pm 0.033$ & $-0.246 \pm 0.019$ & $0.137 \pm 0.013$ \\
 &  & \textsc{OT-U} & $0.025$ & -- & $0.332 \pm 0.024$ & $0.333 \pm 0.018$ & $-0.242 \pm 0.004$ & $0.092 \pm 0.016$ \\
 &  & \textsc{WRAP-U} & $0.10$ & $0.40$ & $\underline{0.297} \pm 0.007$ & $\underline{0.302} \pm 0.005$ & $-0.245 \pm 0.003$ & $\underline{0.077} \pm 0.005$ \\
\midrule
\textsc{Robust} & MSFT & \textsc{ERM-U} & -- & -- & $0.310 \pm 0.017$ & $0.212 \pm 0.005$ & $\underline{-0.176} \pm 0.001$ & $0.089 \pm 0.005$ \\
 &  & $\phi$\textsc{-U} & -- & $0.10$ & $0.369 \pm 0.012$ & $0.235 \pm 0.004$ & $-0.178 \pm 0.003$ & $0.124 \pm 0.008$ \\
 &  & \textsc{OT-U} & $0.025$ & -- & $0.316 \pm 0.008$ & $0.213 \pm 0.003$ & $\underline{-0.176} \pm 0.001$ & $0.080 \pm 0.005$ \\
 &  & \textsc{WRAP-U} & $0.05$ & $0.15$ & $\underline{0.285} \pm 0.004$ & $\underline{0.208} \pm 0.003$ & $-0.178 \pm 0.003$ & $\underline{0.075} \pm 0.004$ \\
\bottomrule
\end{tabular}
\begin{tablenotes}[flushleft]
\scriptsize
\item \textit{Note.} Values are mean $\pm$ sample standard deviation over five model seeds. The \textsc{WRAP-U} rows exclude the clean origin $(\varepsilon,\tau)=(0,0)$, reported separately as \textsc{ERM-U}.

\end{tablenotes}
\end{threeparttable}
\end{table*}

\subsubsection{One-step and Multistep \textsc{WRAP}}
\label{app:additional-results}

We compare one-step and multistep \textsc{WRAP} (Algorithms~\ref{alg:drift-aware-one-step-attack} and~\ref{alg:drift-aware-wbpgd}) using the stationary Heston setup above and the Heston training settings in Table~\ref{tab:experiment_hyperparameters}. Budgets are selected separately for each attack and specification according to validation CVaR.

\emph{Results.} Table~\ref{tab:heston-stationary-multi-step-WRAP} shows that both attacks select the same budgets within each specification. Both \textsc{OT-U} and \textsc{WRAP-U} select $(\varepsilon,\tau)=(0.10,0)$, so the selected WRAP configuration uses transport alone for both attacks. One-step training achieves slightly lower test CVaR than multistep training ($3.543$ versus $3.545$), with a difference smaller than the reported seed standard deviations. The additional inner iterations therefore provide no observed improvement at the selected budgets, supporting the use of the cheaper one-step attack in this setting.

\begin{table*}[!ht]
\centering
\scriptsize
\begin{threeparttable}
\caption{Results for one-step and multistep \textsc{WRAP} in the stationary Heston setting.}
\label{tab:heston-stationary-multi-step-WRAP}
\setlength{\tabcolsep}{3.5pt}
\begin{tabular}{@{}llcccccc@{}}
\toprule
Attack & Method & $\varepsilon$ & $\tau$ & \shortstack{Val. CVaR $(\downarrow)$} & \shortstack{Test CVaR $(\downarrow)$} & \shortstack{Mean PnL $(\uparrow)$} & \shortstack{Turnover $(\downarrow)$} \\
\midrule
Baseline & \textsc{ERM-U} & -- & -- & $3.634 \pm 0.013$ & $3.629 \pm 0.018$ & $-2.199 \pm 0.005$ & $\underline{3.084} \pm 0.124$ \\
\midrule
One-step WRAP & $\phi$\textsc{-U} & -- & $0.25$ & $3.627 \pm 0.014$ & $3.629 \pm 0.025$ & $\underline{-2.198} \pm 0.004$ & $3.283 \pm 0.266$ \\
 & \textsc{OT-U} & $0.10$ & -- & $\underline{3.550} \pm 0.011$ & $\underline{3.543} \pm 0.016$ & $-2.199 \pm 0.004$ & $3.156 \pm 0.064$ \\
 & \textsc{WRAP-U} & $0.10$ & -- & $\underline{3.550} \pm 0.011$ & $\underline{3.543} \pm 0.016$ & $-2.199 \pm 0.004$ & $3.156 \pm 0.064$ \\
\midrule
Multistep-WRAP & $\phi$\textsc{-U} & -- & $0.25$ & $3.627 \pm 0.014$ & $3.629 \pm 0.025$ & $\underline{-2.198} \pm 0.004$ & $3.283 \pm 0.266$ \\
 & \textsc{OT-U} & $0.10$ & -- & $3.552 \pm 0.011$ & $3.545 \pm 0.015$ & $-2.200 \pm 0.004$ & $3.167 \pm 0.060$ \\
 & \textsc{WRAP-U} & $0.10$ & -- & $3.552 \pm 0.011$ & $3.545 \pm 0.015$ & $-2.200 \pm 0.004$ & $3.167 \pm 0.060$ \\
\bottomrule
\end{tabular}
\begin{tablenotes}[flushleft]
\scriptsize
\item \textit{Note.} Values are mean $\pm$ standard deviation over five seeds.
\end{tablenotes}
\end{threeparttable}
\end{table*}

\newpage

\subsection{Pathwise Effects of Reweighting and Optimal Transport}
\label{subsec:attack-anatomy}

Figure~\ref{fig:attack-anatomy}, reproduced from Figure~\ref{fig:attack-anatomy-main}, illustrates how the two budgets in Section~\ref{sec:short-two-budget} affect scenario probabilities and market trajectories in the nonstationary Heston experiment with joint drift. Panels~(A)--(C) show a random sample of reference trajectories, together with the highest-weight and most-displaced trajectories highlighted separately; panel~(C) shows their transported versions. Stock prices are normalized by the strike price. Panel~(A) shows the original paths and the pointwise 5th and 95th percentiles under uniform reference probabilities $w_n=1/N$. The quantiles in panels~(A)--(C) are computed from all $N=4000$ validation trajectories using the corresponding scenario probabilities.

Panel~(B) isolates reweighting with divergence budget $\tau=0.6$ and transport budget $\varepsilon=0$. The adversary replaces $w_n$ by $\widehat w_n$, allocating greater probability to trajectories with larger loss integrands $L_n(\vartheta)$. This changes the weighted quantiles while leaving the trajectories unchanged. The path labeled ``Highest-weight path'' has the largest adversarial probability $\widehat w_n$. The resulting probability concentration is summarized by the effective sample size,
\[
N_{\mathrm{eff}}(\widehat w)
=\frac{1}{\sum_{n=1}^N(\widehat w_n)^2}
\approx691,
\]
compared with $N_{\mathrm{eff}}(w)=4000$ before reweighting. The reduced effective sample size reflects the concentration of adversarial probability on a subset of scenarios. The same construction accommodates nonuniform reference probabilities $w_n$ in the \textsc{-W} specification.

Panel~(C) adds optimal transport with budget $\varepsilon=0.3$, holding the reweighted probabilities fixed. The attack perturbs trajectories in directions that increase the linearized loss, with perturbations scaled according to local loss sensitivity and the transport geometry. Both stock prices and variances are perturbed; the figure displays the stock-price component. Panel~(D) provides a magnified view of the region highlighted in (C), comparing the stock-price components of the original and transported versions of the most displaced trajectory. This trajectory has the largest displacement among all $N$ validation trajectories with respect to the ground metric $d_{\kappa,\lambda}$ in \eqref{eq:kappa-lambda-metric}.

\begin{figure}[ht]
\centering
\includegraphics[width=\textwidth]{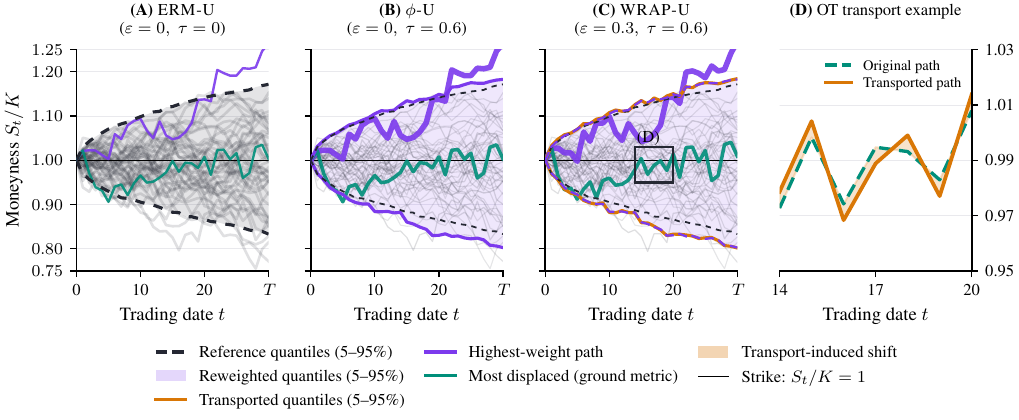}
\caption{Pathwise effects in nonstationary Heston:
(A) \textsc{ERM-U}, (B) $\phi$\textsc{-U}, (C) \textsc{WRAP-U}, and (D) magnified view of the original and transported paths highlighted in (C). Panels~(A)--(C) show sampled trajectories and highlight the highest-weight and most-displaced paths.}
\label{fig:attack-anatomy}
\end{figure}

\end{document}